\documentclass{article}

% if you need to pass options to natbib, use, e.g.:
%     \PassOptionsToPackage{numbers, compress}{natbib}
% before loading neurips_2024

% ready for submission
% \usepackage{neurips_2024}

% to compile a preprint version, e.g., for submission to arXiv, add add the
% [preprint] option:
%     \usepackage[preprint]{neurips_2024}

% to compile a camera-ready version, add the [final] option, e.g.:
%     \usepackage[final]{neurips_2024}

% to avoid loading the natbib package, add option nonatbib:
\usepackage[nonatbib,final]{neurips_2024}

\usepackage[utf8]{inputenc} % allow utf-8 input
\usepackage[T1]{fontenc}    % use 8-bit T1 fonts
\usepackage{hyperref}       % hyperlinks
\usepackage{url}            % simple URL typesetting
\usepackage{booktabs}       % professional-quality tables
\usepackage{amsfonts}       % blackboard math symbols
\usepackage{nicefrac}       % compact symbols for 1/2, etc.
\usepackage{microtype}      % microtypography
\usepackage{xcolor}         % colors

% Our packages
\usepackage{subfig}
\usepackage{multirow}
\usepackage{xcolor}
\usepackage{cite}
\usepackage{algorithm}
\usepackage{algorithmic}
\usepackage{wrapfig}

% For theorems and such
\usepackage{amsmath}
\usepackage{amssymb}
\usepackage{mathtools}
\usepackage{amsthm}

\newcommand{\ip}[2]{\left\langle #1, #2 \right \rangle}
\DeclareMathOperator*{\esssup}{ess\,sup}

% \newcommand{\ziang}[1]{\textcolor{orange}{Ziang: #1}}
% \newcommand{\pan}[1]{\textcolor{red}{Pan: #1}}

%%%%%%%%%%%%%%%%%%%%%%%%%%%%%%%%
% THEOREMS
%%%%%%%%%%%%%%%%%%%%%%%%%%%%%%%%

\theoremstyle{plain}
\newtheorem{theorem}{Theorem}[section]
\newtheorem{proposition}[theorem]{Proposition}
\newtheorem{lemma}[theorem]{Lemma}
\newtheorem{corollary}[theorem]{Corollary}
\theoremstyle{definition}
\newtheorem{definition}[theorem]{Definition}

\theoremstyle{remark}
\newtheorem{remark}[theorem]{Remark}

\newtheorem*{theorem*}{Theorem}
\newtheorem*{lemma*}{Lemma}
\newtheorem*{proposition*}{Proposition}
\newtheorem*{corollary*}{Corollary}

\title{Certified Machine Unlearning \\ via Noisy Stochastic Gradient Descent}

% The \author macro works with any number of authors. There are two commands
% used to separate the names and addresses of multiple authors: \And and \AND.
%
% Using \And between authors leaves it to LaTeX to determine where to break the
% lines. Using \AND forces a line break at that point. So, if LaTeX puts 3 of 4
% authors names on the first line, and the last on the second line, try using
% \AND instead of \And before the third author name.

\author{%
    Eli Chien\\
    Department of Electrical and Computer Engineering\\
    Georgia Institute of Technology\\
    Georgia, U.S.A. \\
  \texttt{ichien6@gatech.edu} \\
  % examples of more authors
  \And
  Haoyu Wang \\
  Department of Electrical and Computer Engineering\\
  Georgia Institute of Technology\\
  Georgia, U.S.A. \\
  \texttt{haoyu.wang@gatech.edu} \\
  \And
  Ziang Chen \\
  Department of Mathematics\\
  Massachusetts Institute of Technology\\
  Massachusetts, U.S.A. \\
  \texttt{ziang@mit.edu} \\
  \And
  Pan Li \\
  Department of Electrical and Computer Engineering\\
  Georgia Institute of Technology\\
  Georgia, U.S.A. \\
  \texttt{panli@gatech.edu} \\
  % \AND
  % Coauthor \\
  % Affiliation \\
  % Address \\
  % \texttt{email} \\
  % \And
  % Coauthor \\
  % Affiliation \\
  % Address \\
  % \texttt{email} \\
  % \And
  % Coauthor \\
  % Affiliation \\
  % Address \\
  % \texttt{email} \\
}

\begin{document}

\maketitle

\begin{abstract}
``The right to be forgotten'' ensured by laws for user data privacy becomes increasingly important. Machine unlearning aims to efficiently remove the effect of certain data points on the trained model parameters so that it can be approximately the same as if one retrains the model from scratch. We propose to leverage projected noisy stochastic gradient descent for unlearning and establish its first approximate unlearning guarantee under the convexity assumption. Our approach exhibits several benefits, including provable complexity saving compared to retraining, and supporting sequential and batch unlearning. Both of these benefits are closely related to our new results on the infinite Wasserstein distance tracking of the adjacent (un)learning processes. Extensive experiments show that our approach achieves a similar utility under the same privacy constraint while using $2\%$ and $10\%$ of the gradient computations compared with the state-of-the-art gradient-based approximate unlearning methods for mini-batch and full-batch settings, respectively. 
\end{abstract}

\section{Introduction}
Machine learning models usually learn from user data where data privacy has to be respected. Certain laws, such as European Union’s General Data Protection Regulation (GDPR), are in place to ensure ``the right to be forgotten'', which requires corporations to erase all information pertaining to a user if they request to remove their data. It is insufficient to comply with such privacy regulation by only removing user data from the dataset, as machine learning models can memorize training data information and risk information leakage~\cite{carlini2019secret,guo2023ssl}. A naive approach to adhere to this privacy regulation is to retrain the model from scratch after every data removal request. Apparently, this approach is prohibitively expensive in practice for frequent data removal requests and the goal of machine unlearning is to perform efficient model updates so that the resulting model is (approximately) the same as retraining statistically. Various machine unlearning strategies have been proposed, including exact~\cite{cao2015towards,bourtoule2021machine,ullah2021machine,ullah2023adaptive} and approximate approaches~\cite{guo2020certified,sekhari2021remember,neel2021descent,chien2022efficient,chien2024langevin}. The later approaches allow a slight misalignment between the unlearned model and the retraining one in distribution under a notion similar to Differential Privacy (DP)~\cite{dwork2006calibrating}.

The most popular approach for privatizing machine learning models with DP guarantee is arguably noisy stochastic gradient methods including the celebrated DP-SGD~\cite{abadi2016deep}. Mini-batch training is one of its critical components, which not only benefits privacy through the effect of privacy amplification by subsampling ~\cite{balle2018privacy} but also provides improved convergence of the underlying optimization process. Several recent works~\cite{neel2021descent,chien2024langevin} based on full-batch (noisy) gradient methods may achieve certified approximate unlearning. Unfortunately, their analysis is restricted to the full-batch setting and it is non-trivial to extend these works to the mini-batch setting with tight approximate unlearning guarantees. The main challenge is to incorporate the randomness in the mini-batch sampling into the sensitivity-based analysis~\cite{neel2021descent} or the Langevin-dynamics-based~\cite{chien2024langevin} analysis.

We aim to study mini-batch noisy gradient methods for certified approximate unlearning. The high-level idea of our unlearning framework is illustrated in Figure~\ref{fig:general_idea}. Given a training dataset $\mathcal{D}$ and a fixed mini-batch sequence $\mathcal{B}$, the model first learns and then unlearns given unlearning requests, both via the projected noisy stochastic gradient descent (PNSGD). For sufficient learning epochs, we prove that the law of the PNSGD learning process converges to a \emph{unique} stationary distribution $\nu_{\mathcal{D}|\mathcal{B}}$ (Theorem~\ref{thm:nu_eta}). When an unlearning request arrives, we update $\mathcal{D}$ to an adjacent dataset $\mathcal{D}^\prime$ so that the data point subject to such request is removed. The approximate unlearning problem can then be viewed as moving from the current distribution $\nu_{\mathcal{D}|\mathcal{B}}$ to the target distribution $\nu_{\mathcal{D}^\prime|\mathcal{B}}$ until $\varepsilon$-close in R\'enyi divergence for the desired privacy loss $\varepsilon$\footnote{We refer privacy loss as two-sided R\'enyi divergence of two distributions, which we defined as R\'enyi difference in Definition~\ref{def:R_diff}.}.

Our key observation is that the results of Altschuler and Talwar~\cite{altschuler2022privacy,altschuler2022resolving}, which study the convergence of PNSGD under the (strong) convexity assumption, can be leveraged after we formulate the approximate unlearning as above. They show that the R\'enyi divergence of two PNSGD processes with the same dataset but different initial distributions decays at a geometric rate, starting from the infinite Wasserstein distance $(W_\infty)$ of initial distributions (Figure 1, step 3). As a result, if the initial $W_\infty$ distance can be properly characterized, we achieve the corresponding approximate unlearning guarantee by further taking the randomness of the mini-batches $\mathcal{B}$ into account (Figure 1, step 4). Therefore, the key step to establish the PNSGD-based unlearning guarantee is to characterize the initial $W_\infty$ distance of the unlearning process tightly.

The projection set diameter $2R$ for this $W_\infty$ distance adopted in~\cite{altschuler2022privacy} for DP analysis, unfortunately, leads to a vacuous unlearning guarantee, which cannot illustrate the computational advantage over the retraining from scratch. To alleviate this issue, we perform a careful $W_\infty$ distance tracking analysis along the adjacent PNSGD learning processes (Lemma~\ref{thm:W_infty_bound}), which leads to a much better bound $W_\infty(\nu_{\mathcal{D}|\mathcal{B}},\nu_{\mathcal{D}^\prime|\mathcal{B}})\leq Z_{\mathcal{B}} \approx O(\eta M/b)$ (Figure 1, step 2) for bounded gradient norm $M$, mini-batch size $b$ and step size $\eta$. This ultimately leads to our unlearning guarantee  $\varepsilon = O(Z_{\mathcal{B}}^2 c^{2Kn/b})$ for $K$ unlearning epochs and some rate $c<1$. The computational benefit compared to the retraining from scratch naturally emerges by comparing two $W_\infty$ distances, $O(R)$ for the retraining from scratch and $O(\eta M/b)$ for our unlearning framework.

%, the one between initialization $\nu_0$ and $\nu_{\mathcal{D}^\prime|\mathcal{B}}$, which is $O(R)$ %in the worst case 
%versus the one between the adjacent distributions $\nu_{\mathcal{D}|\mathcal{B}}$ and $\nu_{\mathcal{D}^\prime|\mathcal{B}}$, which is upper bounded by $Z_\mathcal{B} \approx O(\eta M/b)$ for bounded gradient norm $M$ and step size $\eta$. 

%It also reveals that our $W_\infty$ tracking result (Lemma~\ref{thm:W_infty_bound}) is crucial for establishing meaningful approximate unlearning guarantee. 
Our approach also naturally extends to multiple unlearning requests, including sequential and batch unlearning settings (Theorem~\ref{thm:SLU_PABI} and Corollary~\ref{cor:W_infty_bound_batch}), by extending the $W_\infty$-tracking analysis %the $W_\infty$ distance bound $Z_\mathcal{B}$ for these cases and the unlearning process 
(Lemma~\ref{lma:W_infty_bound_unlearning}). Here, we may use the triangle inequality of $W_\infty$ distance, which leads to a tigher privacy loss bound (growing linearly to the number of unlearning requests) than that in the Langevin-dynamics-based analysis~\cite{chien2024langevin} via weak triangle inequality of R\'enyi divergence (growing exponentially).

Our results highlight the insights into privacy-utility-complexity trade-off regarding the mini-batch size $b$ for approximate unlearning. A smaller batch size $b$ leads to a better privacy loss decaying rate $O(c^{2Kn/b})$. However, an extremely small $b$ may degrade the model utility and incur instability. %in practice. 
It also leads to a worse bound $Z_{\mathcal{B}} \approx O(\eta M/b)$, which degrades the computational benefits compared to retraining. %Our results show that one should choose a moderate $b$ for a sweet spot among the trade-offs.
We demonstrate such trade-off of our PNSGD unlearning results via experiments against the state-of-the-art full-batch $(b=n)$ gradient-based approximate unlearning solutions~\cite{neel2021descent,chien2024langevin}. Our analysis provides a significantly better privacy-utility-complexity trade-off even when we restrict ourselves to $b=n$, and further improves the results by adopting mini-batches. Under the same privacy constraint, our approach achieves similar utility while merely requiring $10\%,2\%$ of gradient computations compared to baselines for full and mini-batch settings respectively.

\begin{figure}[t]
    \centering
    \includegraphics[trim={0cm 7cm 0cm 4.4cm},clip,width=\linewidth]{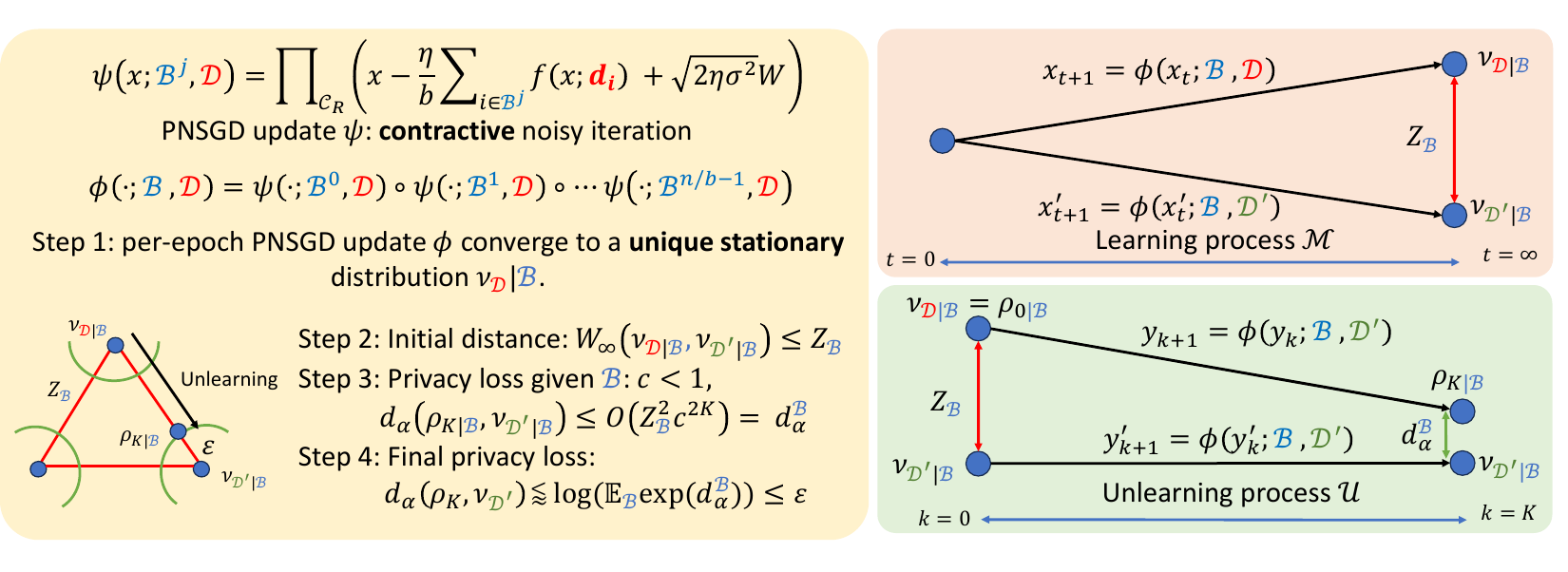}
    \vspace{-0.4cm}
    \caption{The overview of PNSGD unlearning. (Left) Proof sketch for PNSGD unlearning guarantees. (Right) PNSGD (un)learning processes on adjacent datasets. Given a mini-batch sequence $\mathcal{B}$, the learning process $\mathcal{M}$ induces a regular polyhedron where each vertex corresponds to a stationary distribution $\nu_{\mathcal{D}|\mathcal{B}}$ for each dataset $\mathcal{D}$. $\nu_{\mathcal{D}|\mathcal{B}}$ and $\nu_{\mathcal{D|\mathcal{B}}^\prime}$ are adjacent if $\mathcal{D},\mathcal{D}^\prime$ differ in one data point. We provide an upper bound $Z_\mathcal{B}$ for the infinite Wasserstein distance $W_\infty(\nu_{\mathcal{D}|\mathcal{B}},\nu_{\mathcal{D|\mathcal{B}}^\prime})$, which is crucial for non-vacuous unlearning guarantees. Results of~\cite{altschuler2022resolving} allow us to convert the initial $W_\infty$ bound to R\'enyi difference bound $d_\alpha^{\mathcal{B}}$, and apply joint convexity of KL divergence to obtain the final privacy loss $\varepsilon$, which also take the randomness of $\mathcal{B}$ into account.}
    \label{fig:general_idea}
    \vspace{-0.4cm}
\end{figure} 

\subsection{Related Works}\label{sec:relate}

\textbf{Machine unlearning with privacy guarantees. }The concept of approximate unlearning uses a probabilistic definition of $(\epsilon,\delta)$-unlearning motivated by differential privacy~\cite{dwork2006calibrating}, which is studied by~\cite{guo2020certified,sekhari2021remember,chien2022efficient}. Notably, the unlearning approach of these works involved Hessian inverse computation which can be computationally prohibitive in practice for high dimensional problems. Ullah et al.~\cite{ullah2021machine} focus on exact unlearning via a sophisticated version of noisy SGD. Their analysis is based on total variation stability which is not directly applicable to approximate unlearning settings and different from our analysis focusing on R\'enyi divergence. Neel et al.~\cite{neel2021descent} leverage full-batch PGD for (un)learning and achieve approximate unlearning by publishing the final parameters with additive noise. Chien et al.~\cite{chien2024langevin} utilize full-batch PNGD for approximate unlearning with the analysis of Langevin dynamics. The adaptive unlearning requests setting is studied in~\cite{gupta2021adaptive,ullah2023adaptive,chourasia2023forget}, where the unlearning request may depend on the previous (un)learning results. It is possible to show that our framework is also capable of this adaptive setting since we do not keep any non-private internal states, though we only focus on non-adaptive settings in this work. We left a rigorous discussion for this as future work.

% \textbf{Comparison to~\cite{chien2024langevin}.} \pan{some parts are better merged into intro. Regarding convex/non-convex discussion can be put after technical disucssion.} Very recently,~\cite{chien2024langevin} propose to leverage PNGD for learning and unlearning, where they establish approximate unlearning guarantees based on Langevin dynamic analaysis~\cite{vempala2019rapid} and track the Log-Sobolev conditions along the (un)learning processes~\cite{gross1975logarithmic,chewi2023logconcave}. In contrast, we focus on the mini-batch setting (PNSGD), where our analysis is based on PABI and we track the $W_\infty$ distance along the (un)learning processes. There are pros and cons to these two types of analyses. On one hand, our results based on PABI analysis provide a better privacy-utility-complexity trade-off for sequential unlearning under a strongly convex setting. More specifically, we leverage standard triangle inequality for analyzing $W_\infty$ distance while~\cite{chien2024langevin} require to adopt weak triangle inequality of R\'enyi divergence for each sequential unlearning request. This results in a worse estimation of sequential unlearning. On the other hand, Langevin dynamic analysis gives a unified analysis for non-convex settings where the convexity assumption of PABI analysis cannot be dropped~\cite{altschuler2022privacy,altschuler2022resolving}. It is interesting to further explore the limit of these two analyses for unlearning problems, which we left as future work. 

\section{Preliminaries}\label{sec:prelim}
We consider the empirical risk minimization (ERM) problem. Let $\mathcal{D}=\{\mathbf{d}_i\}_{i=1}^n$ be a training dataset with $n$ data point $\mathbf{d}_i$ taken from the universe $\mathcal{X}$. Let $f_{\mathcal{D}}(x) = \frac{1}{n}\sum_{i=1}^n f(x;\mathbf{d}_i)$ be the objective function that we aim to minimize with learnable parameter $x\in \mathcal{C}_R$, where $\mathcal{C}_R=\{x\in \mathbb{R}^d\;|\;\|x\|\leq R\}$ is a closed ball of radius $R$. We denote $\Pi_{\mathcal{C}_R}:\mathbb{R}^d\mapsto \mathcal{C}_R$ to be an orthogonal projection to $\mathcal{C}_R$. The norm $\|\cdot\|$ is standard Euclidean $\ell_2$ norm. $\mathcal{P}(\mathcal{C})$ is denoted as the set of all probability measures over a closed convex set $\mathcal{C}$. Standard definitions such as convexity are in Appendix~\ref{apx:standard_def}. We use $x\sim \nu$ to denote that a random variable $x$ follows the probability distribution $\nu$. We say two datasets $\mathcal{D}$ and $\mathcal{D}^\prime$ are adjacent if they ``differ'' in only one data point. More specifically, we can obtain $\mathcal{D}^\prime$ from $\mathcal{D}$ by \emph{replacing} one data point. We next introduce a useful idea which we term as R\'enyi difference. 

\begin{definition}[R\'enyi difference]\label{def:R_diff}
    Let $\alpha>1$. For a pair of probability measures $\nu,\nu^\prime$ with the same support, the $\alpha$ R\'enyi difference $d_\alpha(\nu,\nu^\prime)$ is defined as $d_\alpha(\nu,\nu^\prime) = \max\left(D_\alpha(\nu||\nu^\prime),D_\alpha(\nu^\prime||\nu)\right),$ where $D_\alpha(\nu||\nu^\prime)$ is the $\alpha$ R\'enyi divergence defined as $D_\alpha(\nu||\nu^\prime) = \frac{1}{\alpha-1}\log\left( \mathbb{E}_{x\sim \nu^\prime} (\frac{\nu(x)}{\nu^\prime(x)})^\alpha \right).$
    % \begin{align*}
    %     D_\alpha(\nu||\nu^\prime) = \frac{1}{\alpha-1}\log\left( \mathbb{E}_{x\sim \nu^\prime} (\frac{\nu(x)}{\nu^\prime(x)})^\alpha \right).
    % \end{align*}
\end{definition}

We are ready to introduce the formal definition of differential privacy and unlearning.
\begin{definition}[R\'enyi Differential Privacy (RDP)~\cite{mironov2017renyi}]\label{def:RDP}
    Let $\alpha>1$. A randomized algorithm $\mathcal{M}:\mathcal{X}^n\mapsto \mathbb{R}^d$ satisfies $(\alpha,\varepsilon)$-RDP if for any adjacent dataset pair $\mathcal{D},\mathcal{D}^\prime \in \mathcal{X}^n$, the $\alpha$ R\'enyi difference $d_\alpha(\nu,\nu^\prime) \leq \varepsilon$, where $\mathcal{M}(\mathcal{D})\sim \nu$ and $\mathcal{M}(\mathcal{D}^\prime)~\sim\nu^\prime$.
\end{definition}

It is known to the literature that an $(\alpha,\varepsilon)$-RDP guarantee can be converted to the popular $(\epsilon,\delta)$-DP guarantee~\cite{dwork2006calibrating} relatively tight~\cite{mironov2017renyi}. As a result, we will focus on establishing results with respect to $\alpha$ R\'enyi difference (and equivalently $\alpha$ R\'enyi difference). Next, we introduce our formal definition of unlearning based on $\alpha$ R\'enyi difference as well.

\begin{definition}[R\'enyi Unlearning (RU)]\label{def:RU}
    Consider a randomized learning algorithm $\mathcal{M}:\mathcal{X}^n\mapsto \mathbb{R}^d$ and a randomized unlearning algorithm $\mathcal{U}: \mathcal{R}^d\times \mathcal{X}^n\times \mathcal{X}^n \mapsto \mathcal{R}^d$. We say $(\mathcal{M},\mathcal{U})$ achieves $(\alpha,\varepsilon)$-RU if for any $\alpha>1$ and any adjacent datasets $\mathcal{D},\mathcal{D}^\prime$, the $\alpha$ R\'enyi difference $d_{\alpha}(\rho,\nu^\prime) \leq \varepsilon,$ where $\mathcal{U}(\mathcal{M}(\mathcal{D}),\mathcal{D}^\prime)\sim \rho$ and $\mathcal{M}(\mathcal{D}^\prime)\sim \nu^\prime$.
\end{definition}

Our Definition~\ref{def:RU} can be converted to the standard $(\epsilon,\delta)$-unlearning definition defined in~\cite{guo2020certified,sekhari2021remember,neel2021descent}, similar to RDP to DP conversion (see Appendix~\ref{apx:exp}). Since we work with the replacement definition of dataset adjacency, to unlearn a data point $\mathbf{d}_i$ we can simply replace it with any data point $\mathbf{d}_i^\prime\in \mathcal{X}$ for the updated dataset $\mathcal{D}^\prime$ in practice. One may also repeat the entire analysis with the objective function being the summation of individual loss for the standard add/remove notion of dataset adjacency. Finally, we will also leverage the infinite Wasserstein distance in our analysis.
\begin{definition}[$W_\infty$ distance]\label{def:w_inf_dist}
    The $\infty$-Wasserstein distance between distributions $\mu$ and $\nu$ on a Banach space $(\mathbb{R}^d,\|\cdot\|)$ is defined as $W_\infty(\mu,\nu) = \inf_{\gamma \in \Gamma(\mu,\nu)}\esssup_{(x,y)\sim \gamma} \|x-y\|,$
    % \begin{align}
    %     W_\infty(\mu,\nu) = \inf_{\gamma \in \Gamma(\mu,\nu)}\esssup_{(x,y)\sim \gamma} \|x-y\|,
    % \end{align}
    where $(x,y)\sim \gamma$ means that the essential supremum is taken relative to measure $\gamma$ over $\mathbb{R}^d\times \mathbb{R}^d$ parametrized by $(x,y)$. $\Gamma(\mu,\nu)$ is the collection of couplings of $\mu$ and $\nu$.
\end{definition}

\setlength{\textfloatsep}{12pt}
\begin{algorithm}[t]
  \caption{(Un)learning with PNSGD}
  \label{alg:main}
  \begin{algorithmic}[1]
    \STATE \textbf{Parameters:} stepsize $\eta$, noise standard deviation $\sigma$, dataset size $n$, mini-batch size $b$.
    \STATE \textbf{Minibatch Generation:} randomly partition indices $[n]$ into $n/b$ mini-batches of size $b$: $\mathcal{B}^{0},\ldots,\mathcal{B}^{n/b-1}$.
    \STATE \textbf{Learning process $\mathcal{M}(\mathcal{D})$:} given a dataset $\mathcal{D}=\{\mathbf{d}_i\}_{i=1}^n\in \mathcal{X}^n$, sample initial parameter $x_0^0$ from a given initialization distribution $\nu_0$ supported on $\mathcal{C}_R$. The output is the last iterate $x_T^0$.
    \FOR{ epoch $t = 0,\ldots,T-1$}
        \FOR{ iteration $j=0,\ldots,n/b-1$}
            \STATE $x^{j+1}_t = \Pi_{\mathcal{C}_R}\left(x^{j}_t - \eta g(x^{j}_t,\mathcal{B}^j) + \sqrt{2\eta \sigma^2} W^j_t\right),$
             where $g(x^{j}_t,\mathcal{B}^j)=\frac{1}{b}\sum_{i\in \mathcal{B}^j}\nabla f(x^{j}_t;\mathbf{d}_i)$ and $W^j_t\stackrel{\text{iid}}{\sim}\mathcal{N}(0,I_d)$.
        \ENDFOR
        \STATE $x^0_{t+1} = x^{n/b}_t$
    \ENDFOR
    \STATE \textbf{Unlearning process $\mathcal{U}(\mathcal{M}(\mathcal{D}),\mathcal{D}^\prime)$:} given an updated dataset $\mathcal{D}^\prime=\{\mathbf{d}_i^\prime\}_{i=1}^n\in \mathcal{X}^n$ and current parameter $y_0^0$, the output is the last iterate $y_K^0$.
    \FOR{ epoch $k = 0,\ldots,K-1$}
        \FOR{ iteration $j=0,\ldots,n/b-1$}
            \STATE $y^{j+1}_k = \Pi_{\mathcal{C}_R}\left(y^{j}_k - \eta g(y^{j}_k,\mathcal{B}^j) + \sqrt{2\eta \sigma^2} W^j_k\right),$
             where $g(y^{j}_k,\mathcal{B}^j)=\frac{1}{b}\sum_{i\in \mathcal{B}^j}\nabla f(y^{j}_k;\mathbf{d}_i^\prime)$ and $W^j_k\stackrel{\text{iid}}{\sim}\mathcal{N}(0,I_d)$.
        \ENDFOR
        \STATE $y^0_{k+1} = y^{n/b}_k$
    \ENDFOR
  \end{algorithmic}
\end{algorithm}

\subsection{Converting Initial $W_\infty$ Distance to Final R\'enyi Divergence Bound}

An important component of our analysis is to leverage the result of~\cite{altschuler2022resolving}, which is based on the celebrated privacy amplification by iteration analysis originally proposed in ~\cite{feldman2018privacy} and also utilized for DP guarantees of PNSGD in~\cite{altschuler2022privacy}. The goal of~\cite{altschuler2022resolving} is to analyze the mixing time of the PNSGD process, which can be viewed as the contractive noisy iterations due to the contractiveness of the gradient update under strong convexity assumption.

\begin{definition}[Contractive Noisy Iteration ($c$-CNI)]\label{def:cni}
    Given an initial distribution $\mu_0\in\mathcal{P}(\mathbb{R}^d)$, a sequence of (random) $c$-contractive (equivalently, $c$-Lipschitz) functions $\psi_k:\mathbb{R}^d\mapsto\mathbb{R}^d$, and a sequence of noise distributions $\zeta_k$, we define the $c$-Contractive Noisy Iteration ($c$-CNI) by the update rule $X_{k+1} = \psi_{k+1}(X_k) + W_{k+1},$
    % \begin{align}
    %     X_{k+1} = \psi_{k+1}(X_k) + W_{k+1},
    % \end{align}
    where $W_{k+1}\sim \zeta_k$ independently and $X_0\sim \mu_0$. We denote the law of the final iterate $X_K$ by $\text{CNI}_c(\mu_0,\{\psi_k\},\{\zeta_k\})$.
\end{definition}

\begin{lemma}[Metric-aware privacy amplification by iteration bound~\cite{feldman2018privacy}, simplified by~\cite{altschuler2022resolving} in Proposition 2.10]\label{lma:SGLD_unlearning_metric}
    Suppose $X_K\sim \text{CNI}_c(\mu_0,\{\psi_k\},\{\zeta_k\})$ and $X_K^\prime \sim \text{CNI}_c(\mu_0^\prime,\{\psi_k\},\{\zeta_k\})$ where the initial distribution satisfy $W_\infty(\mu_0,\mu_0^\prime)\leq Z$, the update function $\psi_k$ are $c$-contractive, and the noise distributions $\zeta_k = \mathcal{N}(0,\sigma^2 I_d)$. Then we have
    \begin{align}
        D_\alpha(X_K||X_K^\prime) \leq \frac{\alpha Z^2}{2\sigma^2}
        \begin{cases}
            c^{2K} & \text{if }c<1\\
            1/K & \text{if }c=1
        \end{cases}.
    \end{align}
\end{lemma}

Roughly speaking, Lemma~\ref{lma:SGLD_unlearning_metric} shows that if we have the $W_\infty$ distance of initial distributions of two PNSGD processes on the same dataset, we have the corresponding R\'enyi difference bound after $K$ iterations. The projection set diameter $2R$ is a default upper bound for $W_\infty$ distance if we do not care about the initial distributions as the case studied in ~\cite{altschuler2022resolving} for mixing time analysis. In the unlearning scenario, the initial distributions of the unlearning processes, denoted by $\nu_{\mathcal{D}|\mathcal{B}},\nu_{\mathcal{D}^\prime|\mathcal{B}}$, are much more relevant. We can show a much tighter bound by analyzing $W_\infty(\nu_{\mathcal{D}|\mathcal{B}},\nu_{\mathcal{D}^\prime|\mathcal{B}})$ along the adjacent PNSGD learning processes.

\section{Certified Unlearning Guaranatee for PNSGD}\label{sec:main_thm}

We start with introducing the (un)learning process with PNSGD with a cyclic mini-batch strategy (Algorithm~\ref{alg:main}). Note that this mini-batch strategy is not only commonly used for practical DP-SGD implementations in privacy libraries~\cite{yousefpour2021opacus}, but also in theoretical analysis for DP guarantees~\cite{ye2022differentially}. For the learning mechanism $\mathcal{M}$, we optimize the objective function with PNSGD on dataset $\mathcal{D}$ (line 3-8 in Algorithm~\ref{alg:main}). $\eta,\sigma^2>0$ are hyperparameters of step size and noise variance respectively. The initialization $\nu_0$ is an arbitrary distribution supported on $\mathcal{C}_R$ if not specified. For the unlearning mechanism $\mathcal{U}$, we fine-tune the current parameter with PNSGD on the updated dataset $\mathcal{D}^\prime$ subject to the unlearning request (line 10-15 in Algorithm~\ref{alg:main}) with $y_0^0 = x_T^0 = \mathcal{M}(\mathcal{D})$. For the rest of the paper, we denote $\nu_t^j,\rho_k^j$ as the probability density of $x_t^j,y_k^j$ respectively. Furthermore, we denote $\mathcal{B} = \{\mathcal{B}^j\}_{j=0}^{n/b-1}$ the minibatch sequence described in Algorithm~\ref{alg:main}, where $b$ is the step size and we assume $n$ is divided by $b$ throughout the paper for simplicity\footnote{When $n$ is not divided by $b$, we can simply drop the last $n - \lfloor n/b \rfloor b$ points.}. We use $\nu_{\cdot|\mathcal{B}}$ to denote the conditional distribution of $\nu_{\cdot}$ given $\mathcal{B}$.

% \subsection{Limiting distribution and general idea}\label{sec:general_idea}
\subsection{Certified Unlearning Guarantees}\label{sec:PABI}

Now we introduce the certified unlearning guarantees for PNSGD and the corresponding analysis illustrated in Figure~\ref{fig:general_idea}. We first prove that for any fixed mini-batch sequence $\mathcal{B}$, the limiting distribution $\nu_{\mathcal{D}|\mathcal{B}}$ of the learning process exists, is unique, and stationery. The proof is deferred to Appendix~\ref{apx:nu_eta} and is based on applying the results in~\cite{meyn2012markov} to establish the ergodicity of the learning process $x_t^0$.

\begin{theorem}\label{thm:nu_eta}
    Suppose that the closed convex set $\mathcal{C}\subset \mathbb{R}^d$ is bounded with $\mathcal{C}$ having a positive Lebesgue measure and that $\nabla f(\cdot; \mathbf{d}_i):\mathcal{C}\to\mathbb{R}^d$ is continuous for all $i\in[n]$. The Markov chain $\{x_t:=x_t^0\}$ in Algorithm~\ref{alg:main} for any fixed mini-batch sequence $\mathcal{B}$ admits a unique invariant probability measure $\nu_{\mathcal{D}|\mathcal{B}}$ on the Borel $\sigma$-algebra of $\mathcal{C}$. Furthermore, for any $x\in\mathcal{C}$, the distribution of $x_t$ conditioned on $x_0=x$ converges weakly to $\nu_{\mathcal{D}|\mathcal{B}}$ as $t\to \infty$, where $\nu_{\mathcal{D}|\mathcal{B}}$ is the conditional distribution of $\nu_{\mathcal{D}}$ given $\mathcal{B}$.
\end{theorem}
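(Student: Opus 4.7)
The plan is to view the epoch-level iterates $\{x_t := x_t^0\}_{t \ge 0}$ as a time-homogeneous Markov chain on the compact state space $\mathcal{C}_R$ and to establish a Doeblin-type minorization condition for its one-epoch transition kernel $P(x,\cdot)$. Once this is in place, classical uniform-ergodicity results yield both existence and uniqueness of the invariant probability measure and convergence in total variation (hence weak convergence) from any initial point. Time-homogeneity is guaranteed because $\mathcal{B}$ is fixed across epochs, so the composition of per-iteration kernels produces the same epoch kernel $P(x,\cdot)$ at every $t$.

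First, I would write $P(x,\cdot) = \Pr(x_1^0 \in \cdot \mid x_0^0 = x)$ as the composition of $n/b$ one-iteration kernels $P_j(u,\cdot)$, where each $P_j(u,\cdot)$ is the law of $\Pi_{\mathcal{C}_R}(u - \eta g(u,\mathcal{B}^j) + \sqrt{2\eta\sigma^2}\,W)$ with $W \sim \mathcal{N}(0, I_d)$. Continuity of each $\nabla f(\cdot;\mathbf{d}_i)$ on the compact set $\mathcal{C}_R$ ensures that the drift $u \mapsto \eta g(u,\mathcal{B}^j)$ is bounded uniformly on $\mathcal{C}_R$, and this is the only regularity of $f$ that the argument will need (no Lipschitz smoothness, convexity, or dissipativity is required at this stage).

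Second, and this is the main technical step, I would establish the minorization by analyzing the final iteration of the epoch. Because $\mathcal{C}_R$ has positive Lebesgue measure and is closed and bounded, I can fix a closed ball $\overline{\mathcal{C}_r} \subset \operatorname{int}(\mathcal{C}_R)$ of positive Lebesgue measure. For any Borel set $B \subset \operatorname{int}(\mathcal{C}_R)$ the preimage $\Pi_{\mathcal{C}_R}^{-1}(B) = B$, so $P_{n/b-1}(u,B) = \int_B p(u,y)\,dy$ with Gaussian density $p(u,y) = (4\pi\eta\sigma^2)^{-d/2}\exp\!\bigl(-\|y - u + \eta g(u,\mathcal{B}^{n/b-1})\|^2/(4\eta\sigma^2)\bigr)$. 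Since $p$ is continuous and strictly positive on the compact set $\mathcal{C}_R \times \overline{\mathcal{C}_r}$, one obtains $c_0 := \min_{(u,y)} p(u,y) > 0$. Defining $\mu(\cdot) = \operatorname{Leb}(\cdot \cap \mathcal{C}_r)/\operatorname{Leb}(\mathcal{C}_r)$ and $\varepsilon_0 = c_0\,\operatorname{Leb}(\mathcal{C}_r)$ yields the uniform one-step minorization $P_{n/b-1}(u,B) \ge \varepsilon_0\,\mu(B)$ for all $u \in \mathcal{C}_R$ and all Borel $B$; integrating against the conditional law of $x_t^{n/b-1}$ given $x_t^0 = x$ upgrades this to the epoch-level Doeblin condition $P(x,B) \ge \varepsilon_0\,\mu(B)$ for every $x$.

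Finally, I would invoke the classical Doeblin / Meyn--Tweedie uniform ergodicity theorem: the condition $P(x,\cdot) \ge \varepsilon_0\,\mu(\cdot)$ implies that $\{x_t\}$ admits a unique invariant probability measure $\nu_{\mathcal{D}|\mathcal{B}}$ on the Borel $\sigma$-algebra of $\mathcal{C}_R$ and that $\sup_{x \in \mathcal{C}_R}\|P^t(x,\cdot) - \nu_{\mathcal{D}|\mathcal{B}}\|_{TV} \le (1 - \varepsilon_0)^t$, which in particular gives weak convergence from any deterministic initialization $x_0 = x$. The main subtlety I anticipate is that $\Pi_{\mathcal{C}_R}$ pushes a Gaussian into a measure with both an absolutely continuous interior part and a singular boundary part, so $P(x,\cdot)$ has no global density on $\mathcal{C}_R$. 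The plan sidesteps this by taking the minorizing measure $\mu$ to be supported on the strictly interior ball $\mathcal{C}_r$, where the projection acts as the identity and the Gaussian density is preserved intact; this is exactly why the weak hypotheses of the theorem (bounded convex domain with positive Lebesgue measure, continuous $\nabla f$) suffice.
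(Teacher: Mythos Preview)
Your proposal is correct and takes a genuinely different, more streamlined route than the paper. Both arguments ultimately rest on the same observation: on the compact domain $\mathcal{C}_R$, the continuous drift is uniformly bounded and the Gaussian noise provides a transition density that is uniformly bounded below on any interior ball. The paper packages this as \emph{strong aperiodicity} (a minorization $\mathbb{P}(x_1\in A\mid x_0=x)\ge \nu_1(A)$ with $\nu_1$ a nonzero multiple of Lebesgue measure) and then separately verifies $\psi$-irreducibility, recurrence, Harris recurrence, and regularity of $\mathcal{C}_R$ before appealing to a general Meyn--Tweedie limit theorem. You instead recognize that this very minorization is already a global Doeblin condition $P(x,\cdot)\ge\varepsilon_0\,\mu(\cdot)$, from which uniform ergodicity follows in one stroke. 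Your route is shorter and in fact yields the stronger conclusion of geometric total-variation convergence with an explicit rate $(1-\varepsilon_0)^t$; the paper's route would extend more naturally to unbounded state spaces where only a local small-set minorization holds, but that extra generality is not needed under the present compactness assumption.
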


Suppose the training epoch $T$ is large enough so that the model is well-trained for now, which means $\mathcal{M}(\mathcal{D})\sim \nu_{\mathcal{D}|\mathcal{B}}=\rho_{0|\mathcal{B}}^0$ and our target ``retraining distribution'' is $\nu_{\mathcal{D}^\prime|\mathcal{B}}$. Our goal is then to upper bound the R\'enyi difference $d_\alpha(\rho_{K|\mathcal{B}}^0,\nu_{\mathcal{D}^\prime|\mathcal{B}})$ after $K$ unlearning epochs. In the case of insufficient training, the privacy loss is $d_\alpha(\rho_{K|\mathcal{B}}^0,\nu_{T|\mathcal{B}}^{0,\prime})$ where $\mathcal{M}(\mathcal{D}^\prime)\sim \nu_{T|\mathcal{B}}^{0,\prime}$ for $T$ training epochs. It can be upper bounded in terms of $d_\alpha(\rho_{K|\mathcal{B}}^0,\nu_{\mathcal{D}^\prime|\mathcal{B}})$ and $d_\alpha(\nu_{T|\mathcal{B}}^{0,\prime},\nu_{\mathcal{D}^\prime|\mathcal{B}})$ via weak triangle inequality of R\'enyi divergence, which is provided in Theorem~\ref{thm:SGLD_unlearning_1} below. We later provide a better bound by considering the randomness of $\mathcal{B}$.

\begin{theorem}[RU guarantee of PNSGD unlearning, fixed $\mathcal{B}$]\label{thm:SGLD_unlearning_1}
    Assume $\forall \mathbf{d}\in\mathcal{X}$, $f(x;\mathbf{d})$ is $L$-smooth, $M$-Lipchitz and $m$-strongly convex in $x$. Let the learning and unlearning processes follow Algorithm~\ref{alg:main} with $y_0^0 = x_T^0 = \mathcal{M}(\mathcal{D})$. Given any fixed mini-batch sequence $\mathcal{B}$, for any $\alpha>1$, let $\eta\leq \frac{1}{L}$, the output of the $K^{th}$ unlearning iteration satisfies $(\alpha,\varepsilon)$-RU for any adjacent dataset $\mathcal{D},\mathcal{D}^\prime$, where
    \begin{align*}
        & \varepsilon \leq \frac{\alpha-1/2}{\alpha-1}\left(\varepsilon_1(2 \alpha) + \varepsilon_2(2\alpha)\right), \varepsilon_1(\alpha) = \frac{\alpha (2R)^2}{2\eta \sigma^2}c^{2Tn/b},\;\varepsilon_2(\alpha) = \frac{\alpha Z_{\mathcal{B}}^2}{2\eta \sigma^2}c^{2Kn/b},\\
        & Z_{\mathcal{B}} = W_\infty(\rho^0_{K|\mathcal{B}},\nu_{\mathcal{D}^\prime|\mathcal{B}}) \leq 2Rc^{Tn/b} + \min\left(\frac{1-c^{Tn/b}}{1-c^{n/b}}\frac{2\eta M}{b},2R\right),
    \end{align*}
    and $c=1-\eta m$.
\end{theorem}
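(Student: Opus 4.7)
My plan is to bound $d_\alpha(\rho_K^0,\nu_T^0(\mathcal{D}'))$ by inserting the stationary distribution $\pi := \nu_{\mathcal{D}'|\mathcal{B}}$ (whose existence and uniqueness is supplied by Theorem~\ref{thm:nu_eta}) and applying a weak triangle inequality for R\'enyi difference in the form $d_\alpha(P,R)\leq \tfrac{\alpha-1/2}{\alpha-1}\bigl(d_{2\alpha}(P,Q)+d_{2\alpha}(Q,R)\bigr)$ with $Q=\pi$. This immediately produces the stated $\tfrac{\alpha-1/2}{\alpha-1}$ prefactor and the doubled R\'enyi order, and it reduces the task to separately bounding $d_{2\alpha}(\nu_T^0(\mathcal{D}'),\pi)$ and $d_{2\alpha}(\rho_K^0,\pi)$. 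Since $\pi$ is invariant under the one-epoch PNSGD kernel on $\mathcal{D}'$ with batch sequence $\mathcal{B}$, each of the two quantities compares two trajectories of the \emph{same} Markov chain started at different initial distributions, which is precisely the regime in which privacy amplification by iteration (PABI) applies.

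The core tool is the strongly-convex PABI lemma of Feldman et al.\ and Altschuler--Talwar: for PNSGD on an $m$-strongly convex loss with $\eta\leq 1/L$, $c=1-\eta m$, and Gaussian noise of covariance $2\eta\sigma^2 I_d$ per step, one has $D_\alpha(\mu_k\|\pi)\leq \tfrac{\alpha}{2\eta\sigma^2}W_\infty(\mu_0,\pi)^2\,c^{2k}$ after $k$ iterations, because each step is a $c$-contraction in $W_\infty$ (strong convexity combined with non-expansiveness of $\Pi_{\mathcal{C}_R}$) followed by Gaussian smoothing. With $n/b$ iterations per epoch, $T$ learning epochs on $\mathcal{D}'$ started at $\nu_0$ give the factor $c^{2Tn/b}$; combining with the diameter bound $W_\infty(\nu_0,\pi)\leq 2R$ yields $\varepsilon_1(2\alpha)$. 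On the unlearning side, $K$ epochs give $c^{2Kn/b}$, so it remains to certify the shift bound $W_\infty(\nu_T^0(\mathcal{D}),\pi)\leq Z_\mathcal{B}$, which by the ordinary triangle inequality for $W_\infty$ I split through the auxiliary distribution $\nu_T^0(\mathcal{D}')$.

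The contraction piece $W_\infty(\nu_T^0(\mathcal{D}'),\pi)\leq 2R\,c^{Tn/b}$ follows from synchronously coupling two PNSGD trajectories on $\mathcal{D}'$ (one started at $\nu_0$, one at $\pi$); noise cancels under the coupling, $\Pi_{\mathcal{C}_R}$ is $1$-Lipschitz, and each gradient step contracts by $c$. The sensitivity piece $W_\infty(\nu_T^0(\mathcal{D}),\nu_T^0(\mathcal{D}'))$ is tracked by the analogous synchronous coupling of PNSGD on $\mathcal{D}$ vs.\ $\mathcal{D}'$ from a shared $\nu_0$: gradients coincide except in the single iteration per epoch whose mini-batch contains the replaced index $i^*$, and such a disagreement injects at most $\tfrac{2\eta M}{b}$ by $M$-Lipschitzness of the summand, after which the remaining iterations of that epoch contract the perturbation by $c$. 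Unrolling the resulting one-epoch recurrence $u_{t+1}\leq c^{n/b}u_t+\tfrac{2\eta M}{b}$ with $u_0=0$ gives $\tfrac{1-c^{Tn/b}}{1-c^{n/b}}\tfrac{2\eta M}{b}$, while the trivial $2R$ bound is always available because all iterates live in $\mathcal{C}_R$; together these produce the stated $Z_\mathcal{B}$. The main technical hurdle is the per-step $c$-contraction of the \emph{projected} noisy kernel in the $W_\infty$/shifted-R\'enyi sense that PABI requires: one must exploit $\eta\leq 1/L$ together with the $1$-Lipschitzness of $\Pi_{\mathcal{C}_R}$ so that projection never expands the coupling, and must handle the fact that noise is added before projection. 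Once this contraction lemma is in hand, assembling the two PABI-decayed pieces via the weak triangle inequality and substituting the $Z_\mathcal{B}$ shift bound gives the claimed $\varepsilon$; the remainder is routine constant-tracking.
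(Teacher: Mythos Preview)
Your proposal is correct and follows essentially the same route as the paper: weak triangle inequality through $\pi=\nu_{\mathcal{D}'|\mathcal{B}}$ to produce the $\tfrac{\alpha-1/2}{\alpha-1}$ factor and the doubled order, then PABI (Lemma~\ref{lma:SGLD_unlearning_metric}) applied to each piece using the $c$-contractiveness of the projected PNSGD map, with the $W_\infty$ shift for the unlearning piece obtained by triangle inequality and synchronous coupling. The only cosmetic difference is that you split $W_\infty(\nu_T^0(\mathcal{D}),\pi)$ through the finite-$T$ iterate $\nu_T^0(\mathcal{D}')$ whereas the paper splits through the stationary $\nu_{\mathcal{D}|\mathcal{B}}$; your choice in fact recovers the exact $Z_\mathcal{B}$ in the theorem statement (with the $1-c^{Tn/b}$ numerator), so this is a harmless variation of the same argument.
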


As we explained earlier, we need non-trivial $W_\infty$ bounds for adjacent PNSGD processes to obtain better unlearning guarantees when applying Lemma~\ref{lma:SGLD_unlearning_metric}. We provide such results below and the proofs are deferred to Appendix~\ref{apx:W_infty_bound} and~\ref{apx:W_infty_bound_unlearning} respectively.

\begin{lemma}[$W_\infty$ between adjacent PNSGD learning processes]\label{thm:W_infty_bound}
    Consider the learning process in Algorithm~\ref{alg:main} on adjacent datasets $\mathcal{D}$ and $\mathcal{D}^\prime$ and a fixed mini-batch sequence $\mathcal{B}$. Assume $\forall \mathbf{d}\in\mathcal{X}$, $f(x;\mathbf{d})$ is $L$-smooth, $M$-Lipschitz and $m$-strongly convex in $x$. Let the index of different data point between $\mathcal{D},\mathcal{D}^\prime$ belongs to mini-batch $\mathcal{B}^{j_0}$. Then for $\eta\leq \frac{1}{L}$ and let $c =(1-\eta m)$, we have
    \begin{align*}
        W_\infty(\nu_{T|\mathcal{B}}^0,\nu_{T|\mathcal{B}}^{0,\prime}) \leq \min\left(\frac{1-c^{Tn/b}}{1-c^{n/b}}c^{n/b-j_0-1}\frac{2\eta M}{b},2R\right).
    \end{align*}
\end{lemma}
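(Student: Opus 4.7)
The plan is to bound $W_\infty$ by exhibiting an explicit coupling: use the \emph{synchronous coupling} in which both chains are initialized at the same point (any coupling supported on the diagonal) and driven by the same noise variables $W_t^j$ and same projections at every step. Since $W_\infty(\mu,\nu)$ is the infimum of $\esssup\|X-Y\|$ over couplings, it suffices to bound the pathwise distance $a_t^j := \|x_t^j - x_t^{j,\prime}\|$ almost surely under this coupling. Because the initial distributions are identical and the noise cancels in $x_t^{j+1}-x_t^{j+1,\prime}$, the evolution of $a_t^j$ is deterministic (in terms of the state pair), reducing the argument to a recursion.

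The next step is to analyze a single iteration. For any $j \neq j_0$, the gradient maps on $\mathcal{D}$ and $\mathcal{D}'$ coincide, so the update is $x \mapsto \Pi_{\mathcal{C}_R}(x - \eta \nabla \tilde f_j(x))$ applied to both iterates, with the common noise added. The mini-batch objective $\tilde f_j(x) = \tfrac{1}{b}\sum_{i\in\mathcal{B}^j} f(x;\mathbf{d}_i)$ inherits $m$-strong convexity and $L$-smoothness, so for $\eta \leq 1/L$ a standard co-coercivity computation (combined with $\|\nabla\tilde f_j(x)-\nabla\tilde f_j(y)\|\geq m\|x-y\|$) shows $x\mapsto x - \eta\nabla\tilde f_j(x)$ is $c$-Lipschitz with $c=1-\eta m$; the projection is $1$-Lipschitz and preserves this. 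Hence $a_t^{j+1} \leq c\, a_t^j$ for $j\neq j_0$. At $j=j_0$, the gradients differ in only one summand. Splitting
\[
g(x,\mathcal{B}^{j_0}) - g(x',\mathcal{B}^{j_0}) = \bigl[\nabla\tilde f_{j_0}(x) - \nabla\tilde f_{j_0}(x')\bigr] + \tfrac{1}{b}\bigl[\nabla f(x';\mathbf{d}_{i_0}) - \nabla f(x';\mathbf{d}'_{i_0})\bigr],
\]
applying the contraction to the first bracket and the $M$-Lipschitz bound $\|\nabla f\|\leq M$ (giving a residual of size at most $2M/b$) yields $a_t^{j_0+1} \leq c\, a_t^{j_0} + \tfrac{2\eta M}{b}$.

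With the per-iteration recursion in hand, I would unroll one epoch: $a_t^{j_0} \leq c^{j_0} a_t^0$, then $a_t^{j_0+1} \leq c^{j_0+1} a_t^0 + \tfrac{2\eta M}{b}$, then after the remaining $n/b - j_0 -1$ contractive steps,
\[
a_{t+1}^0 = a_t^{n/b} \leq c^{n/b}\,a_t^0 + c^{n/b - j_0 - 1}\,\tfrac{2\eta M}{b}.
\]
Starting from $a_0^0 = 0$ and summing the resulting geometric series in $c^{n/b}$ gives
$a_T^0 \leq \tfrac{1-c^{Tn/b}}{1-c^{n/b}}\,c^{n/b-j_0-1}\,\tfrac{2\eta M}{b}.$
The alternative trivial bound $a_T^0 \leq 2R$ (both iterates live in $\mathcal{C}_R$, which has diameter $2R$) yields the $\min$ in the statement.

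The conceptually straightforward but error-prone part is getting the contraction constant $c=1-\eta m$ correct at $\eta\leq 1/L$ for a mini-batch gradient step composed with projection; the naive bound from strong convexity plus smoothness gives only $(1-2\eta m+\eta^2 L^2)^{1/2}$, and one must additionally invoke $\|\nabla\tilde f_{j_0}(x)-\nabla\tilde f_{j_0}(y)\|\geq m\|x-y\|$ to collapse the co-coercivity estimate down to $(1-\eta m)^2$. The placement of the ``bad'' iteration at index $j_0$, which contributes the factor $c^{n/b-j_0-1}$, is the only subtlety in the geometric unrolling; everything else is a clean telescoping argument.
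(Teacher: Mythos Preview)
Your proposal is correct and follows essentially the same approach as the paper: synchronous coupling with shared noise, per-iteration contraction by $c=1-\eta m$ at every step except $j=j_0$ where an additive $2\eta M/b$ appears, then unrolling the epoch and summing the geometric series from $a_0^0=0$, with the $2R$ diameter bound giving the second branch of the $\min$. The only cosmetic difference is that at $j=j_0$ the paper applies triangle inequality term-by-term over the $b$ summands (using $(1-\eta m)$-contractivity of each single-sample gradient step), whereas you keep the mini-batch gradient $\nabla\tilde f_{j_0}$ intact and peel off only the correction $\tfrac{1}{b}\bigl[\nabla f(x';\mathbf{d}_{i_0})-\nabla f(x';\mathbf{d}'_{i_0})\bigr]$; both routes yield the identical recursion.
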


\begin{lemma}[$W_\infty$ between PNSGD learning process to its stationary distribution]\label{lma:W_infty_bound_unlearning}
    Following the same setting as in Theorem~\ref{thm:SGLD_unlearning_1} and denote the initial distribution of the unlearning process as $\nu_{0}^0$. Then we have
    \begin{align*}
        W_\infty(\nu_{T|\mathcal{B}}^0,\nu_{\mathcal{D}|\mathcal{B}}) \leq (1-\eta m)^{Tn/b} W_\infty(\nu_{0}^0,\nu_{\mathcal{D}|\mathcal{B}}).
    \end{align*}
\end{lemma}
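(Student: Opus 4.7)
The plan is to combine a synchronous coupling argument with two standard ingredients: (i) stationarity of $\nu_{\mathcal{D}|\mathcal{B}}$ under the PNSGD kernel, as guaranteed by Theorem~\ref{thm:nu_eta}; and (ii) the fact that each PNSGD iteration is a $(1-\eta m)$-contraction in Euclidean norm when $\eta \leq 1/L$, combined with the fact that orthogonal projection onto the convex set $\mathcal{C}_R$ is $1$-Lipschitz.

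First I would set up the coupling. Fix any coupling $\gamma$ of $\nu_0^0$ and $\nu_{\mathcal{D}|\mathcal{B}}$, and draw $(x_0^0,\tilde{x}_0^0)\sim \gamma$. Then run two copies of the PNSGD recursion from Algorithm~\ref{alg:main} driven by exactly the same Gaussian noise variables $W_t^j$ and the same mini-batch sequence $\mathcal{B}$, one starting from $x_0^0$ and the other from $\tilde{x}_0^0$. By stationarity of $\nu_{\mathcal{D}|\mathcal{B}}$, the epoch marginal $\tilde{x}_t^0$ has law $\nu_{\mathcal{D}|\mathcal{B}}$ for every $t$, while $x_T^0$ has law $\nu_{T|\mathcal{B}}^0$.

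Second, I would establish per-iteration contractivity. For every mini-batch $\mathcal{B}^j$, the function $h_j(x)=\tfrac{1}{b}\sum_{i\in\mathcal{B}^j} f(x;\mathbf{d}_i)$ inherits $L$-smoothness and $m$-strong convexity from the assumptions on $f$, so a standard computation shows that the deterministic gradient step $x\mapsto x-\eta\nabla h_j(x)$ is $(1-\eta m)$-Lipschitz whenever $\eta\leq 1/L$. Since $\Pi_{\mathcal{C}_R}$ is $1$-Lipschitz and the additive noise is common to both chains, subtraction cancels it, giving
\begin{equation*}
\|x_t^{j+1}-\tilde{x}_t^{j+1}\|\leq (1-\eta m)\|x_t^{j}-\tilde{x}_t^{j}\|\quad\text{a.s.}
\end{equation*}
Chaining this inequality across the $n/b$ iterations of each epoch and across the $T$ epochs yields
\begin{equation*}
\|x_T^0-\tilde{x}_T^0\|\leq (1-\eta m)^{Tn/b}\|x_0^0-\tilde{x}_0^0\|\quad\text{a.s.}
\end{equation*}

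Third, I would convert this pathwise bound into the desired $W_\infty$ statement. The coupled pair $(x_T^0,\tilde{x}_T^0)$ is a valid coupling of $\nu_{T|\mathcal{B}}^0$ and $\nu_{\mathcal{D}|\mathcal{B}}$, so taking the essential supremum over the input coupling and then the infimum over $\gamma$ gives
\begin{equation*}
W_\infty(\nu_{T|\mathcal{B}}^0,\nu_{\mathcal{D}|\mathcal{B}})\leq (1-\eta m)^{Tn/b}\,W_\infty(\nu_0^0,\nu_{\mathcal{D}|\mathcal{B}}),
\end{equation*}
which is the claim. (A minor technical caveat is that $W_\infty$-optimal couplings need not be attained, so one works with $\varepsilon$-optimal couplings and lets $\varepsilon\downarrow 0$.)

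I do not anticipate a significant obstacle in this proof: the contraction-of-gradient-step lemma is classical, projection contractivity is elementary, and stationarity is handed to us by Theorem~\ref{thm:nu_eta}. The only point that requires a small amount of care is verifying that the per-step contraction factor $(1-\eta m)$ applies to the mini-batch gradient operator (which it does, since $h_j$ inherits both smoothness and strong convexity from the per-sample losses), and then correctly counting iterations so that $T$ epochs of $n/b$ updates each produce the exponent $Tn/b$.
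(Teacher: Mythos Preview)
Your proposal is correct and follows essentially the same approach as the paper: a synchronous (shared-noise) coupling, per-iteration $(1-\eta m)$-contractivity of the mini-batch gradient step combined with $1$-Lipschitzness of $\Pi_{\mathcal{C}_R}$, and stationarity of $\nu_{\mathcal{D}|\mathcal{B}}$ from Theorem~\ref{thm:nu_eta}. The only cosmetic difference is that the paper takes the infimum over couplings at each step to obtain a per-iteration $W_\infty$ recursion, whereas you fix a single (near-)optimal coupling at time zero and derive a pathwise bound before optimizing; both routes yield the same inequality.
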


\begin{remark} In~\cite{altschuler2022privacy}, the authors used the default projection set diameter $2R$ as the $W_\infty$ distance upper bound for their DP results, see equations (3.5) and (5.2) therein. However, it yields a vacuous bound as an unlearning guarantee compared to retraining from scratch. Note that our tighter bound for $W_\infty$ distance is also useful for deriving later sequential unlearning guarantees compared to the prior work based on the analysis of Langevein dynamics~\cite{chien2024langevin}. Interestingly, this improved result can also be utilized for tightening the DP guarantee in~\cite{altschuler2022privacy} and make it more practically useful, as $2R$ can be very large in practice, which may be of independent interest.
\end{remark}

We are ready to provide the sketch of proof for Theorem~\ref{thm:SGLD_unlearning_1}.

\textit{Sketch of proof. }First note that the PNSGD update leads to a $(1-\eta m)$-CNI process when $\eta \leq 1/L$ for any mini-batch sequence. Recall that $y_k^j$ is the unlearning process at epoch $k$ at iteration $j$, starting from $y_0^0=x_T^0 = \mathcal{M}(\mathcal{D})$. Consider the ``adjacent'' process $y_k^{j,\prime}$ starting from $y_0^{0,\prime}=x_T^{0,\prime} = \mathcal{M}(\mathcal{D}^\prime)$ but still fine-tune on $\mathcal{D}^\prime$ so that $y_k^j,y_k^{j,\prime}$ only differ in their initialization. Now, consider three distributions: $\nu_{\mathcal{D}|\mathcal{B}},\nu_{T|\mathcal{B}}^0,\rho_{K|\mathcal{B}}^0$ are the stationary distribution for the learning processes, learning process at epoch $T$ and unlearning process at epoch $K$ respectively. Similarly, consider the ``adjacent'' processes that learn on $\mathcal{D}^\prime$ and still unlearn on $\mathcal{D}^\prime$ (see Figure~\ref{fig:general_idea} for the illustration). Denote distributions $\nu_{\mathcal{D}^\prime|\mathcal{B}},\nu_{T|\mathcal{B}}^{0,\prime},\rho_{K|\mathcal{B}}^{0,\prime}$ for these processes similarly. Note that our goal is to bound $d_\alpha(\rho_{K|\mathcal{B}}^0,\nu_{T|\mathcal{B}}^{0,\prime})$ for the RU guarantee. By weak triangle inequality~\cite{mironov2017renyi}, we can upper bound it in terms of $d_{2\alpha}(\rho_{K|\mathcal{B}}^0,\nu_{\mathcal{D}^\prime|\mathcal{B}})$ and $d_{2\alpha}(\nu_{\mathcal{D}^\prime|\mathcal{B}},\nu_{T|\mathcal{B}}^{0,\prime})$, which are the $\varepsilon_2$ and $\varepsilon_1$ terms in Theorem~\ref{thm:SGLD_unlearning_1} respectively. For $d_\alpha(\nu_{\mathcal{D}^\prime|\mathcal{B}},\nu_{T|\mathcal{B}}^{0,\prime})$, we leverage the naive $2R$ bound for the $W_\infty$ distance between $\nu_{\mathcal{D}^\prime|\mathcal{B}},\nu_{0|\mathcal{B}}^{0,\prime}$ and applying Lemma~\ref{lma:SGLD_unlearning_metric} leads to the desired result. For $d_\alpha(\rho_{K|\mathcal{B}}^0,\nu_{\mathcal{D}^\prime|\mathcal{B}})$, by triangle inequality of $W_\infty$ and Lemma~\ref{thm:W_infty_bound},~\ref{lma:W_infty_bound_unlearning} one can show that the $W_\infty$ between $\rho_{0|\mathcal{B}}^0,\nu_{\mathcal{D}^\prime|\mathcal{B}}$ is bounded by $Z_\mathcal{B}$ in Theorem~\ref{thm:SGLD_unlearning_1}. Further applying Lemma~\ref{lma:SGLD_unlearning_metric} again completes the proof.

\begin{remark}
    Our proof only relies on the bounded gradient difference $\|\nabla f(x;\mathbf{d})-\nabla f(x;\mathbf{d}^\prime)\|\leq 2M$ $\forall x\in \mathbb{R}^d$ and $\forall \mathbf{d},\mathbf{d}^\prime \in \mathcal{X}$ hence $M$-Lipchitz assumption can be replaced. In practice, we can leverage the gradient clipping along with a $\ell_2$ regularization for the convex objective function~\cite{ye2022differentially}.
\end{remark}

\textbf{The convergent case. }In practice, one often requires the model to be ``well-trained'', where a similar assumption is made in the prior unlearning literature~\cite{guo2020certified,sekhari2021remember}. Under this assumption, we can further simplify Theorem~\ref{thm:SGLD_unlearning_1} into the following corollary.

\begin{corollary}\label{cor:converge_SGLU}
    Under the same setting as of Theorem~\ref{thm:SGLD_unlearning_1}. When we additionally assume $T$ is sufficiently large so that $y_0^0=M(\mathcal{D})\sim \nu_{\mathcal{D}|\mathcal{B}}$. Then for any $\alpha>1$ and $\eta\leq \frac{1}{L}$, the output of the $K^{th}$ unlearning iteration satisfies $(\alpha,\varepsilon)$-RU with $c=1-\eta m$, where
    \begin{align*}
        & \varepsilon \leq \frac{\alpha Z_{\mathcal{B}}^2}{2\eta \sigma^2}c^{2Kn/b},\quad Z_{\mathcal{B}} = \min\left(\frac{1}{1-c^{n/b}}\frac{2\eta M}{b},2R\right).
    \end{align*}
\end{corollary}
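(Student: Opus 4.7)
The plan is to take $T$ large enough that both the original learning process on $\mathcal{D}$ and the ``adjacent'' learning process on $\mathcal{D}^\prime$ have reached their respective (unique) stationary distributions, i.e., $\nu_{T|\mathcal{B}}^{0} = \nu_{\mathcal{D}|\mathcal{B}}$ and $\nu_{T|\mathcal{B}}^{0,\prime} = \nu_{\mathcal{D}^\prime|\mathcal{B}}$; existence and uniqueness here are supplied by Theorem~\ref{thm:nu_eta}. Compared to the general Theorem~\ref{thm:SGLD_unlearning_1}, this has two consequences. First, the term $\varepsilon_1$, which measured the closeness of $\nu_{T|\mathcal{B}}^{0,\prime}$ to its stationary limit, drops out. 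Second, and more significantly, since the retraining target $\nu_{T|\mathcal{B}}^{0,\prime}$ now equals $\nu_{\mathcal{D}^\prime|\mathcal{B}}$ exactly, I no longer need to split $d_\alpha(\rho_{K|\mathcal{B}}^0, \nu_{T|\mathcal{B}}^{0,\prime})$ via weak triangle inequality through the intermediate $\nu_{\mathcal{D}^\prime|\mathcal{B}}$; this removes both the $\alpha \to 2\alpha$ blow-up and the $\tfrac{\alpha - 1/2}{\alpha - 1}$ prefactor that appeared in Theorem~\ref{thm:SGLD_unlearning_1}.

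Concretely, both $\rho_{K|\mathcal{B}}^0$ and $\nu_{\mathcal{D}^\prime|\mathcal{B}}$ are laws produced by $Kn/b$ iterations of the \emph{same} $(1-\eta m)$-CNI process, namely PNSGD on $\mathcal{D}^\prime$ with mini-batch sequence $\mathcal{B}$, started from $\nu_{\mathcal{D}|\mathcal{B}}$ and $\nu_{\mathcal{D}^\prime|\mathcal{B}}$ respectively (the latter because $\nu_{\mathcal{D}^\prime|\mathcal{B}}$ is invariant under those dynamics). Applying Lemma~\ref{lma:SGLD_unlearning_metric} with $c = 1 - \eta m < 1$ directly then yields
\begin{equation*}
d_\alpha\bigl(\rho_{K|\mathcal{B}}^0,\,\nu_{\mathcal{D}^\prime|\mathcal{B}}\bigr) \;\leq\; \frac{\alpha\,W_\infty(\nu_{\mathcal{D}|\mathcal{B}},\,\nu_{\mathcal{D}^\prime|\mathcal{B}})^2}{2\eta\sigma^2}\,c^{2Kn/b}.
\end{equation*}

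It remains to bound $W_\infty(\nu_{\mathcal{D}|\mathcal{B}}, \nu_{\mathcal{D}^\prime|\mathcal{B}})$ by $Z_\mathcal{B}$. I would pass to the limit $T\to\infty$ in Lemma~\ref{thm:W_infty_bound}: the prefactor $\tfrac{1-c^{Tn/b}}{1-c^{n/b}}\,c^{n/b-j_0-1}\,\tfrac{2\eta M}{b}$ tends to $\tfrac{c^{n/b-j_0-1}}{1-c^{n/b}}\,\tfrac{2\eta M}{b}$, and the worst case over the mini-batch index $j_0$ of the differing data point is $\tfrac{1}{1-c^{n/b}}\,\tfrac{2\eta M}{b}$, attained at $j_0 = n/b-1$ where $c^{n/b-j_0-1} = 1$. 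Combining with the trivial diameter bound $2R$ on $\mathcal{C}_R$ yields $W_\infty(\nu_{\mathcal{D}|\mathcal{B}}, \nu_{\mathcal{D}^\prime|\mathcal{B}}) \leq Z_\mathcal{B}$ as stated. The only subtlety worth flagging is that the $W_\infty$ inequality survives the passage to the weak limits $\nu_{T|\mathcal{B}}^{0} \to \nu_{\mathcal{D}|\mathcal{B}}$ and $\nu_{T|\mathcal{B}}^{0,\prime} \to \nu_{\mathcal{D}^\prime|\mathcal{B}}$, which follows from standard lower semicontinuity of $W_\infty$ on the compact set $\mathcal{C}_R$. Everything else is a direct algebraic simplification of Theorem~\ref{thm:SGLD_unlearning_1}, so I expect no further obstacle.
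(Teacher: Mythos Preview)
Your proposal is correct and follows essentially the same route as the paper's proof: under the convergent assumption the retraining target becomes $\nu_{\mathcal{D}'|\mathcal{B}}$ itself, so the weak triangle inequality (and hence the $\alpha\to 2\alpha$ and $\tfrac{\alpha-1/2}{\alpha-1}$ artifacts) is no longer needed, and one applies Lemma~\ref{lma:SGLD_unlearning_metric} directly with initial $W_\infty$ distance bounded by the $T\to\infty$ limit of Lemma~\ref{thm:W_infty_bound}. Your added remark about lower semicontinuity of $W_\infty$ under weak convergence on the compact set $\mathcal{C}_R$ makes explicit a point the paper glosses over when passing to the limit.
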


For simplicity, the rest of the discussion on our PNSGD unlearning will based on the well-trained assumption. From Corollary~\ref{cor:converge_SGLU} one can observe that a smaller $b$ leads to a better decaying rate ($c^{2Kn/b}$) but also a potentially worse initial distance $Z_{\mathcal{B}}=O(1/((1-c^{n/b})b))$. In general, choosing a smaller $b$ still leads to less epoch for achieving the desired privacy loss. In practice, choosing $b$ too small (e.g., $b=1$) can not only degrade the utility but also incur instability (i.e., large variance) of the convergent distribution $\nu_{\mathcal{D}|\mathcal{B}}$, as $\nu_{\mathcal{D}|\mathcal{B}}$ depends on the design of mini-batches $\mathcal{B}$. One should choose a moderate $b$ to balance between privacy and utility, which is the unique privacy-utility-complexity trade-off with respect to $b$ revealed by our analysis.

\textbf{Computational benefit compared to retraining.} In the view of Corollary~\ref{cor:converge_SGLU} or Lemma~\ref{lma:SGLD_unlearning_metric}, it is not hard to see that a smaller initial $W_\infty$ distance leads to fewer PNSGD (un)learning epochs for being $\varepsilon$-close to a target distribution $\nu_{\mathcal{D}^\prime|\mathcal{B}}$ in terms of R\'enyi difference $d_\alpha$. For PNSGD unlearning, we have provided a uniform upper bound $Z_\mathcal{B} = O(\eta M/((1-c^{n/b})b))$ of such initial $W_\infty$ distance in Lemma~\ref{thm:W_infty_bound}. On the other hand, even if both $\nu_0,\nu_\mathcal{D}$ are both Gaussian with identical various and mean difference norm of $\Omega(1)$, we have $W_\infty(\nu_0,\nu_{\mathcal{D}^\prime|\mathcal{B}})=\Omega(1)$ for retraining from scratch. Our results show that a larger mini-batch size $b$ leads to more significant complexity savings compared to retraining. As we discussed above, one should choose a moderate size $b$ to balance between privacy and utility. In our experiment, we show that for commonly used mini-batch sizes (i.e., $b\geq 32$), our PNSGD unlearning is still much more efficient in complexity compared to retraining.

\textbf{Improved bound with randomized $\mathcal{B}$. }So far our results are based on a fixed (worst-case) mini-batch sequence $\mathcal{B}$. One can improve the privacy bound in Corollary~\ref{cor:converge_SGLU} by taking the randomness of $\mathcal{B}$ into account under a non-adaptive unlearning setting. That is, the unlearning request is \emph{independent} of the mini-batch sequence $\mathcal{B}$. See also our discussion in the related work. By taking the average of the bound in Corollary~\ref{cor:converge_SGLU} in conjunction with an application of joint convexity of KL divergence~\cite{ye2022differentially}, we can derive an improved guarantee beyond the worst-case of $\mathcal{B}$. 
\begin{corollary}\label{cor:converge_SGLU_randomB}
    Under the same setting as of Theorem~\ref{thm:SGLD_unlearning_1} but with random mini-batch sequences described in Algorithm~\ref{alg:main}. Then for any $\alpha>1$, and $\eta\leq \frac{1}{L}$, the output of the $K^{th}$ unlearning iteration satisfies $(\alpha,\varepsilon)$-RU with $c=1-\eta m$, where $\varepsilon \leq \frac{1}{\alpha-1}\log\left(\mathbb{E}_{\mathcal{B}}\exp\left(\frac{\alpha (\alpha-1) Z_{\mathcal{B}}^2}{2\eta \sigma^2}c^{2Kn/b}\right)\right)$ and $Z_\mathcal{B}$ is the bound described in Lemma~\ref{thm:W_infty_bound}.
    % \begin{align*}
    %     & \varepsilon \leq \frac{1}{\alpha-1}\log\left(\mathbb{E}_{\mathcal{B}}\exp\left(\frac{\alpha (\alpha-1) Z_{\mathcal{B}}^2}{2\eta \sigma^2}c^{2Kn/b}\right)\right),
    % \end{align*}
    % where $Z_\mathcal{B}$ is the bound described in Lemma~\ref{thm:W_infty_bound}.
\end{corollary}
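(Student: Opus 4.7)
The plan is to condition on the random mini-batch sequence $\mathcal{B}$, apply Corollary~\ref{cor:converge_SGLU} in the conditional universe, and then combine across $\mathcal{B}$ using the joint convexity property of R\'enyi divergence (Lemma 4.1 of~\citet{ye2022differentially}). Concretely, the only new ingredient beyond Corollary~\ref{cor:converge_SGLU} is the "mixture/conditioning" inequality for R\'enyi divergence.

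First, for any fixed realization of $\mathcal{B}$, Corollary~\ref{cor:converge_SGLU} already yields $d_\alpha(\rho_{K|\mathcal{B}},\nu_{\mathcal{D}^\prime|\mathcal{B}}) \leq \frac{\alpha Z_\mathcal{B}^2}{2\eta\sigma^2} c^{2Kn/b}$, where $Z_\mathcal{B}$ is the $W_\infty$ bound from Lemma~\ref{thm:W_infty_bound}. Under the non-adaptive unlearning setting, $\mathcal{B}$ is independent of the unlearning request, and of the pair $(\mathcal{D},\mathcal{D}^\prime)$, so the marginal laws of the output of the unlearning process and the hypothetical retraining are mixtures $\rho_K = \mathbb{E}_\mathcal{B}[\rho_{K|\mathcal{B}}]$ and $\nu_{\mathcal{D}^\prime} = \mathbb{E}_\mathcal{B}[\nu_{\mathcal{D}^\prime|\mathcal{B}}]$ with a \emph{common} mixing distribution over $\mathcal{B}$.

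The key lemma I would invoke is the following consequence of the joint convexity of $(u,v) \mapsto u^\alpha / v^{\alpha-1}$ on $(0,\infty)^2$ for $\alpha > 1$: for any distribution $\pi$ over batch sequences and families of densities $\{\mu_\mathcal{B}\},\{\nu_\mathcal{B}\}$,
\begin{equation*}
D_\alpha\!\left(\mathbb{E}_\pi \mu_\mathcal{B} \,\big\|\, \mathbb{E}_\pi \nu_\mathcal{B}\right) \;\leq\; \frac{1}{\alpha-1}\log \mathbb{E}_\pi \exp\!\bigl((\alpha-1)\, D_\alpha(\mu_\mathcal{B}\|\nu_\mathcal{B})\bigr).
\end{equation*}
This follows pointwise in the sample-space variable from Jensen's inequality applied to $(u,v)\mapsto u^\alpha/v^{\alpha-1}$, followed by integration. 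Combining this with the per-$\mathcal{B}$ bound from Corollary~\ref{cor:converge_SGLU} and applying the resulting inequality in both directions (which is all that is needed to bound the two-sided R\'enyi difference $d_\alpha$) immediately yields the advertised bound on $\varepsilon$.

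The only real subtlety, and thus the main obstacle, is verifying that $\mathcal{B}$ can legitimately be treated as a common latent mixing variable on both sides of the divergence, i.e.\ that the unlearning process and the retraining reference process are coupled through the same draw of $\mathcal{B}$ rather than independent ones. This is precisely what the non-adaptivity assumption buys us: since the unlearning request is independent of $\mathcal{B}$, we may first sample $\mathcal{B}$ and then compare the conditional laws $\rho_{K|\mathcal{B}}$ and $\nu_{\mathcal{D}^\prime|\mathcal{B}}$ arising from the same $\mathcal{B}$. Once this coupling is in place, the remaining computation is a purely algebraic combination of the two ingredients above, and no further tracking of $W_\infty$ distances or PABI machinery is required.
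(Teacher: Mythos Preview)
Your proposal is correct and follows essentially the same route as the paper: obtain the per-$\mathcal{B}$ bound by rerunning the argument behind Corollary~\ref{cor:converge_SGLU} with the sharper $\mathcal{B}$-dependent $Z_\mathcal{B}$ from Lemma~\ref{thm:W_infty_bound}, then combine over the random $\mathcal{B}$ via the joint-convexity inequality for R\'enyi divergence (Lemma~4.1 of \citet{ye2022differentially}), exactly as the paper does. Your explicit remark about needing non-adaptivity so that $\mathcal{B}$ serves as a common mixing variable on both sides is spot on and is the same point the paper flags in the text preceding the corollary.
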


%We defer this result in Appendix~\ref{apx:converge_SGLU_randomB}. 

\textbf{Different mini-batch sampling strategies.} We remark that our analysis can be extended to other mini-batch sampling strategies, such as sampling without replacement for each iteration. However, this strategy leads to a worse $Z_\mathcal{B}$ in our analysis of Lemma~\ref{thm:W_infty_bound}, which may seem counter-intuitive at first glance. This is due to the nature of the essential supremum taken in $W_\infty$. Although sampling without replacement leads to a smaller probability of sampling the index that gets modified due to the unlearning request, it is still non-zero for each iteration. Thus the worst-case difference $2\eta M/b$ between two adjacent learning processes in the mini-batch gradient update occurs at each iteration, which degrades the factor $1/(1-c^{n/b})$ to $1/(1-c)$ in Lemma~\ref{thm:W_infty_bound}. As a result, we choose to adopt the cyclic mini-batch strategy so that such a difference is guaranteed to occur only once per epoch and thus a better bound on $W_\infty$. 

\textbf{Discussion on utility bound. }One can leverage the utility analysis in section 5 of~\cite{chourasia2021differential} to derive the utility guarantee for the full batch setting $b=n$. We relegate the proof to Appendix~\ref{apx:proof_utility}.

\begin{proposition}\label{prop:utility}
    Under the same setting as Corollary~\ref{cor:converge_SGLU} with $b=n$, $\eta \leq \frac{m}{2L^2}$ and assume the minimizer of any $f_{\mathcal{D}}$ is in the relative interior of $\mathcal{C}_R\subseteq \mathbb{R}^d$, for any given adjacent dataset pair $\mathcal{D},\mathcal{D}^\prime$ the output of the $K^{th}$ unlearning iteration $y_K^0$ satisfies
    \begin{align}
        \mathbb{E}[f_{\mathcal{D}^\prime}(y_K^0)-\inf_{x\in \mathcal{C}_R}f_{\mathcal{D}^\prime}(x)]\leq Mc^K\min(\frac{1}{1-c}\frac{2\eta M}{n},2R) + \frac{2Ld\sigma^2}{m}.
    \end{align}
\end{proposition}
Note that a similar analysis applies to both the mini-batch (i.e., $b\leq n$) and non-convergent case (i.e., Theorem~\ref{thm:SGLD_unlearning_1}) but the result is more complicated. We leave the rigorous analysis as future work. An important remark is that the second term $\frac{2Ld\sigma^2}{m}$ is the excess risk of $\nu_{\mathcal{D}^\prime}$, which is controlled by the noise scale $\sigma$. This presents the privacy-utility trade-off as demonstrated in the DP scenario in~\cite{chourasia2021differential}.

\textbf{Without strong convexity. }Since Lemma~\ref{lma:SGLD_unlearning_metric} also applies to the convex-only case (i.e., $m=0$ so that $c=1$), repeating the same analysis leads to the following extension.
\begin{corollary}\label{cor:converge_SGLU_vx_only}
    Under the same setting as of Theorem~\ref{thm:SGLD_unlearning_1} but without strong convexity (i.e., $m=0$). When we additionally assume $T$ is sufficiently large so that $y_0^0=M(\mathcal{D})\sim \nu_{\mathcal{D}|\mathcal{B}}$. Then for any $\alpha>1$, and $\eta\leq \frac{1}{L}$, the output of the $K^{th}$ unlearning iteration satisfies $(\alpha,\varepsilon)$-RU, where
    \begin{align*}
        & \varepsilon \leq \frac{\alpha Z_{\mathcal{B}}^2}{2\eta \sigma^2}\frac{b}{Kn}, Z_{\mathcal{B}} = \min\left(\frac{2\eta M T}{b},2R\right).
    \end{align*}
\end{corollary}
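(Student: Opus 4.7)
The plan is to mirror the proof of Corollary~\ref{cor:converge_SGLU} but with $c=1$ throughout, relying on the fact that a PNSGD step on a convex $L$-smooth loss with step size $\eta\leq 1/L$ is non-expansive, so the unlearning iteration is a $1$-CNI in the sense of Definition~\ref{def:cni}. First I would record this non-expansiveness: the projection $\Pi_{\mathcal{C}_R}$ is $1$-Lipschitz, and for convex $L$-smooth $f$ the map $x\mapsto x-\eta\nabla f(x;\mathbf d)$ is $1$-Lipschitz (by co-coercivity of the gradient), so averaging within a mini-batch preserves this property. Hence, for the adjacent process driven by $\mathcal{D}'$, the per-iteration update is $1$-contractive with isotropic Gaussian noise $\mathcal{N}(0,2\eta\sigma^2 I_d)$.

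Next I would couple the unlearning run with an ``adjacent'' run that starts from $y_0^{0,\prime}\sim \nu_{\mathcal{D}'|\mathcal{B}}$ and is also driven by $\mathcal{D}'$, so that the adjacent run stays in law $\nu_{\mathcal{D}'|\mathcal{B}}$ throughout. The $K$ unlearning epochs correspond to $Kn/b$ iterations of the CNI, so Lemma~\ref{lma:SGLD_unlearning_metric} in its $c=1$ branch (with $\sigma^2_{\text{lemma}}=2\eta\sigma^2$) gives
\begin{equation*}
D_\alpha(\rho_{K|\mathcal{B}}^0 \,\|\, \nu_{\mathcal{D}'|\mathcal{B}}) \;\leq\; \frac{\alpha\,Z^2}{2\eta\sigma^2}\cdot\frac{b}{Kn},
\end{equation*}
for any upper bound $Z$ on $W_\infty(y_0^0,y_0^{0,\prime})=W_\infty(\nu_{\mathcal{D}|\mathcal{B}},\nu_{\mathcal{D}'|\mathcal{B}})$ under the convergent assumption $y_0^0\sim \nu_{\mathcal{D}|\mathcal{B}}$. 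The symmetric direction follows by swapping the roles of $\mathcal{D}$ and $\mathcal{D}'$, upgrading R\'enyi divergence to the R\'enyi difference $d_\alpha$ required by Definition~\ref{def:RU}.

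For the initial $W_\infty$ bound I would invoke Lemma~\ref{thm:W_infty_bound} in the limit $c\to 1$: the geometric sum $(1-c^{Tn/b})/(1-c^{n/b})$ collapses to $T$, and the $c^{n/b-j_0-1}$ factor goes to $1$, producing $2\eta M T/b$, which is capped by the domain diameter $2R$. Thus $Z_{\mathcal{B}}=\min(2\eta M T/b,\,2R)$, and substituting this into the PABI bound yields the stated inequality.

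The main obstacle is the meaning of the convergent assumption when $m=0$: Theorem~\ref{thm:nu_eta} produces a unique stationary $\nu_{\mathcal{D}|\mathcal{B}}$ only through ergodicity that, in the present analysis, was obtained under strong convexity, so without it the notation $y_0^0\sim \nu_{\mathcal{D}|\mathcal{B}}$ should be interpreted either as a fixed point of the learning Markov chain (when one exists) or as the law after a large but finite training horizon $T$. This is precisely why $Z_{\mathcal{B}}$ retains a $T$-dependent term here, in contrast with Corollary~\ref{cor:converge_SGLU}; once this interpretation is adopted, the rest of the argument is a direct application of Lemma~\ref{thm:W_infty_bound} and Lemma~\ref{lma:SGLD_unlearning_metric} with $c=1$, with no further technical steps needed.
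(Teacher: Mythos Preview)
Your proposal is correct and mirrors the paper's approach exactly: the paper simply says to repeat the analysis of Corollary~\ref{cor:converge_SGLU} with $c=1$, invoking the $c=1$ branch of Lemma~\ref{lma:SGLD_unlearning_metric} and reading off $Z_{\mathcal{B}}$ from Lemma~\ref{thm:W_infty_bound} at $c=1$, which is precisely what you do. One small correction to your closing discussion: Theorem~\ref{thm:nu_eta} does \emph{not} rely on strong convexity (only on boundedness of $\mathcal{C}_R$ and continuity of $\nabla f$), so $\nu_{\mathcal{D}|\mathcal{B}}$ exists and is unique even when $m=0$; the genuine tension you identify is rather that Lemma~\ref{thm:W_infty_bound} with $c=1$ yields only the $T$-dependent bound $2\eta MT/b$, which may saturate at $2R$ once $T$ is large enough for convergence.
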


There are several remarks for Corollary~\ref{cor:converge_SGLU_vx_only}. First, the privacy loss now only decays linearly instead of exponentially as opposed to the strongly convex case. Second, $Z_\mathcal{B}$ now can grow linearly in training epoch $T$. As a result, the computational benefit of our approach compared to retraining may vanish for large $T$ such that $\frac{2\eta M T}{b}\geq 2R$. Nevertheless, the computational benefit against retraining persists for moderate $T$ such that $\frac{2\eta M T}{b}< 2R$. This condition can be met if the model learns reasonably well with moderate $T$ and the projection diameter $2R$ is not set to be extremely small. For example, with $2R=10, b=128, \eta=1$ and $M=1$, any training epoch $T<640$ will lead to $\frac{2\eta M T}{b}< 2R$. Still, we conjecture a better analysis is needed beyond strong convexity. 

\subsection{Unlearning Multiple Data Points}\label{sec:multi_unlearning}

So far we have focused on one unlearning request and unlearning one point. In practice, multiple unlearning requests can arrive sequentially (sequential unlearning) and each unlearning request may require unlearning multiple points (batch unlearning). Below we demonstrate that our PNSGD unlearning naturally supports sequential and batch unlearning as well.

\textbf{Sequential unlearning. }As long as we can characterize the initial $W_\infty$ distance for any mini-batch sequences, we have the corresponding $(\alpha,\varepsilon)$-RU guarantee due to Corollary~\ref{cor:converge_SGLU}. Thanks to our geometric view of the unlearning problem (Figure~\ref{fig:main_idea2}) and $W_\infty$ is indeed a metric, applying triangle inequality naturally leads to an upper bound on the initial $W_\infty$ distance. By combining Lemma~\ref{thm:W_infty_bound} and Lemma~\ref{lma:W_infty_bound_unlearning}, we have the following sequential unlearning guarantee.

\begin{theorem}[$W_\infty$ bound for sequential unlearning]\label{thm:SLU_PABI}
    Under the same assumptions as in Corollary~\ref{cor:converge_SGLU}. Assume the unlearning requests arrive sequentially such that our dataset changes from $\mathcal{D}=\mathcal{D}_0\rightarrow\mathcal{D}_1\rightarrow\ldots\rightarrow\mathcal{D}_S$, where $\mathcal{D}_s,\mathcal{D}_{s+1}$ are adjacent. Let $y_k^{j,(s)}$ be the unlearned parameters for the $s^{th}$ unlearning request at $k^{th}$ unlearning epoch and $j^{th}$ iteration following Algorithm~\eqref{alg:main} on $\mathcal{D}_s$ and $y_{0}^{0,(s+1)} = y_{K_s}^{0,(s)}\sim\bar{\nu}_{\mathcal{D}_s|\mathcal{B}}$, where $y_0^{0,(0)}=x_\infty$ and $K_s$ is the unlearning steps for the $s^{th}$ unlearning request. For any $s\in[S]$, we have $W_\infty(\bar{\nu}_{\mathcal{D}_{s-1}|\mathcal{B}},\nu_{\mathcal{D}_{s}|\mathcal{B}}) \leq Z_\mathcal{B}^{(s)},$
    % \begin{align*}
    %     & W_\infty(\bar{\nu}_{\mathcal{D}_{s-1}|\mathcal{B}},\nu_{\mathcal{D}_{s}|\mathcal{B}}) \leq Z_\mathcal{B}^{(s)},
    % \end{align*}
    where $Z_\mathcal{B}^{(s+1)} = \min(c^{K_{s}n/b}Z_\mathcal{B}^{(s)}+Z_\mathcal{B},2R)$, $\;Z_\mathcal{B}^{(1)}=Z_\mathcal{B}$, $Z_\mathcal{B}$ and $c$ are described in Corollary~\ref{cor:converge_SGLU}.
\end{theorem}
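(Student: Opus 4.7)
The plan is to prove the bound by induction on $s$, using the triangle inequality for $W_\infty$ and combining the two previously established lemmas: Lemma~\ref{thm:W_infty_bound} (or its limiting form in Corollary~\ref{cor:converge_SGLU}), which controls the $W_\infty$ gap between the stationary distributions of learning on adjacent datasets, and Lemma~\ref{lma:W_infty_bound_unlearning}, which gives geometric contraction of $W_\infty$ towards the stationary distribution along any PNSGD trajectory on a fixed dataset. The unlearning process for request $s$ is structurally just PNSGD on $\mathcal{D}_s$ initialized from $\bar{\nu}_{\mathcal{D}_{s-1}|\mathcal{B}}$, so Lemma~\ref{lma:W_infty_bound_unlearning} applies verbatim with stationary distribution $\nu_{\mathcal{D}_s|\mathcal{B}}$ and with $K_s$ epochs.

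For the base case $s=1$, the convention $y_0^{0,(0)} = x_\infty \sim \nu_{\mathcal{D}_0|\mathcal{B}}$ gives $\bar{\nu}_{\mathcal{D}_0|\mathcal{B}} = \nu_{\mathcal{D}_0|\mathcal{B}}$, so the claim $W_\infty(\bar{\nu}_{\mathcal{D}_0|\mathcal{B}},\nu_{\mathcal{D}_1|\mathcal{B}}) \leq Z_\mathcal{B}^{(1)} = Z_\mathcal{B}$ follows immediately from the adjacent-stationary-distribution bound (the $T\to\infty$ limit of Lemma~\ref{thm:W_infty_bound}, appearing as $Z_\mathcal{B}$ in Corollary~\ref{cor:converge_SGLU}). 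For the inductive step, assume $W_\infty(\bar{\nu}_{\mathcal{D}_{s-1}|\mathcal{B}},\nu_{\mathcal{D}_s|\mathcal{B}}) \leq Z_\mathcal{B}^{(s)}$. I insert $\nu_{\mathcal{D}_s|\mathcal{B}}$ as an intermediate point and apply the triangle inequality:
\begin{align*}
W_\infty(\bar{\nu}_{\mathcal{D}_s|\mathcal{B}},\nu_{\mathcal{D}_{s+1}|\mathcal{B}}) \leq W_\infty(\bar{\nu}_{\mathcal{D}_s|\mathcal{B}},\nu_{\mathcal{D}_s|\mathcal{B}}) + W_\infty(\nu_{\mathcal{D}_s|\mathcal{B}},\nu_{\mathcal{D}_{s+1}|\mathcal{B}}).
\end{align*}
The second term is bounded by $Z_\mathcal{B}$ via Lemma~\ref{thm:W_infty_bound}. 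For the first term, $\bar{\nu}_{\mathcal{D}_s|\mathcal{B}}$ is the law of the $K_s$-th epoch iterate of PNSGD on $\mathcal{D}_s$ started from $\bar{\nu}_{\mathcal{D}_{s-1}|\mathcal{B}}$; couple this trajectory with one started from $\nu_{\mathcal{D}_s|\mathcal{B}}$ (which remains at $\nu_{\mathcal{D}_s|\mathcal{B}}$ by stationarity) via the optimal $W_\infty$-coupling of the initializations together with shared Gaussian noise. Lemma~\ref{lma:W_infty_bound_unlearning} then yields contraction by $c^{K_s n/b}$, so this term is at most $c^{K_s n/b} Z_\mathcal{B}^{(s)}$ by the inductive hypothesis. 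Summing gives $c^{K_s n/b} Z_\mathcal{B}^{(s)} + Z_\mathcal{B}$, and the trivial fact that every distribution in sight is supported on $\mathcal{C}_R$ gives the $2R$ ceiling, yielding $Z_\mathcal{B}^{(s+1)} = \min(c^{K_s n/b}Z_\mathcal{B}^{(s)}+Z_\mathcal{B},2R)$.

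I do not expect a serious obstacle: both lemmas are already proved, and the result is essentially a bookkeeping exercise in $W_\infty$ via the synchronous coupling argument that underlies the $c$-CNI contraction. The main subtlety to verify carefully is that Lemma~\ref{lma:W_infty_bound_unlearning}, although stated for a learning process initialized from $\nu_0^0$, applies unchanged to the sequential unlearning iterate initialized from $\bar{\nu}_{\mathcal{D}_{s-1}|\mathcal{B}}$; this is immediate because the proof only uses $c$-contractivity of the per-iteration update map on $\mathcal{D}_s$, which does not depend on the initialization. Minor care is also needed with the edge convention $\bar{\nu}_{\mathcal{D}_0|\mathcal{B}} = \nu_{\mathcal{D}_0|\mathcal{B}}$ to align the base case with the recursion.
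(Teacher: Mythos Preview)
Your proposal is correct and follows essentially the same approach as the paper's proof: induction on $s$, with the base case coming from $\bar{\nu}_{\mathcal{D}_0|\mathcal{B}} = \nu_{\mathcal{D}_0|\mathcal{B}}$ and the adjacent-stationary bound $Z_\mathcal{B}$, and the inductive step obtained by inserting $\nu_{\mathcal{D}_s|\mathcal{B}}$ via the triangle inequality, then applying Lemma~\ref{lma:W_infty_bound_unlearning} for the contraction term and Lemma~\ref{thm:W_infty_bound} (in its $T\to\infty$ form) for the adjacent-stationary term, with the $2R$ ceiling from the projection. Your remark that Lemma~\ref{lma:W_infty_bound_unlearning} applies regardless of initialization is exactly the point needed, and your indexing $c^{K_s n/b}$ matches the recursion in the theorem statement (the paper's appendix writes $K_{s-1}$ there, which appears to be a typographical slip).
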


\begin{wrapfigure}{r}{7.9cm}
    \centering
    \vspace{-0.7cm}
    \includegraphics[trim={0cm 10cm 6.8cm 4.4cm},clip,width=\linewidth]{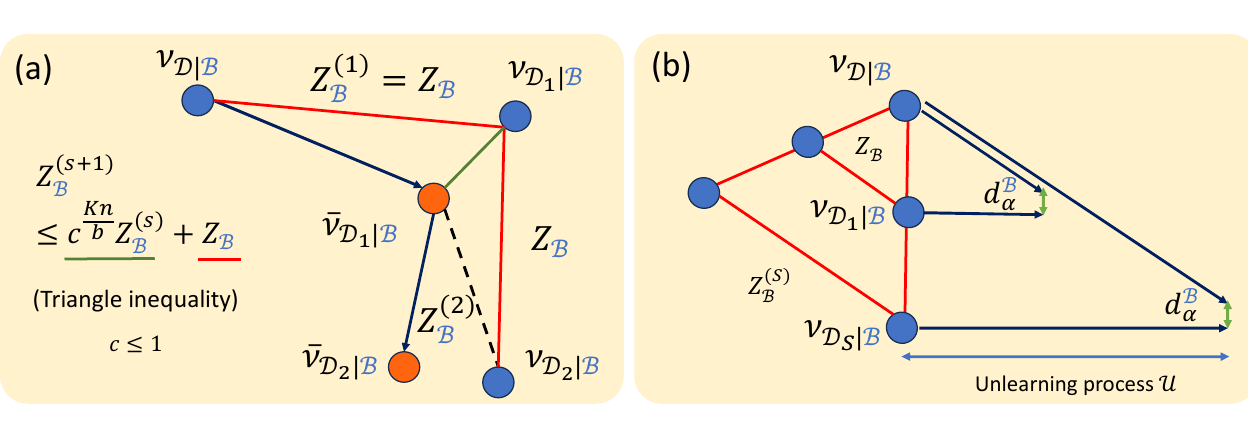}
    \vspace{-0.5cm}
    \caption{Illustration of (a) sequential unlearning and (b) batch unlearning. The key idea is to establish an upper bound on the initial $W_\infty$ distance. (a) For sequential unlearning, the initial $W_\infty$ distance bound $Z_\mathcal{B}^{(s)}$ for each $s^{th}$ unlearning request can be derived with triangle inequality. (b) For batch unlearning, we analyze the case that two learning processes can differ in $S\geq 1$ points.}
    \label{fig:main_idea2}
    \vspace{-0.65cm}
\end{wrapfigure}
By combining Corollary~\ref{cor:converge_SGLU} and Theorem~\ref{thm:SLU_PABI}, we can establish the least unlearning iterations of each unlearning request $\{K_s\}_{s=1}^S$ to achieve $(\alpha,\varepsilon)$-RU simultaneously. Notably, our sequential unlearning bound is much better than the one in~\cite{chien2024langevin}, especially when the number of unlearning requests is large. The key difference is that~\cite{chien2024langevin} have to leverage weak triangle inequality for R\'enyi divergence, which \emph{double} the R\'enyi divergence order $\alpha$ for each sequential unlearning request. In contrast, since our analysis only requires tracking the initial $W_\infty$ distance, where the standard triangle inequality can be applied. As a result, our analysis can better handle the sequential unlearning case. We also demonstrate in Section~\ref{sec:exp} that the benefit offered by our results is significant in practice. 

\textbf{Batch unlearning. }We can extend Lemma~\ref{thm:W_infty_bound} to the case that adjacent dataset $\mathcal{D},\mathcal{D}^\prime$ can differ in $S\geq 1$ points, which further leads to batch unlearning guarantee. We relegate the result in Appendix~\ref{apx:W_infty_bound_batch}.

\begin{figure*}[t]
     \centering
     \subfloat[][Unlearning one point]{\includegraphics[width=0.24\linewidth]{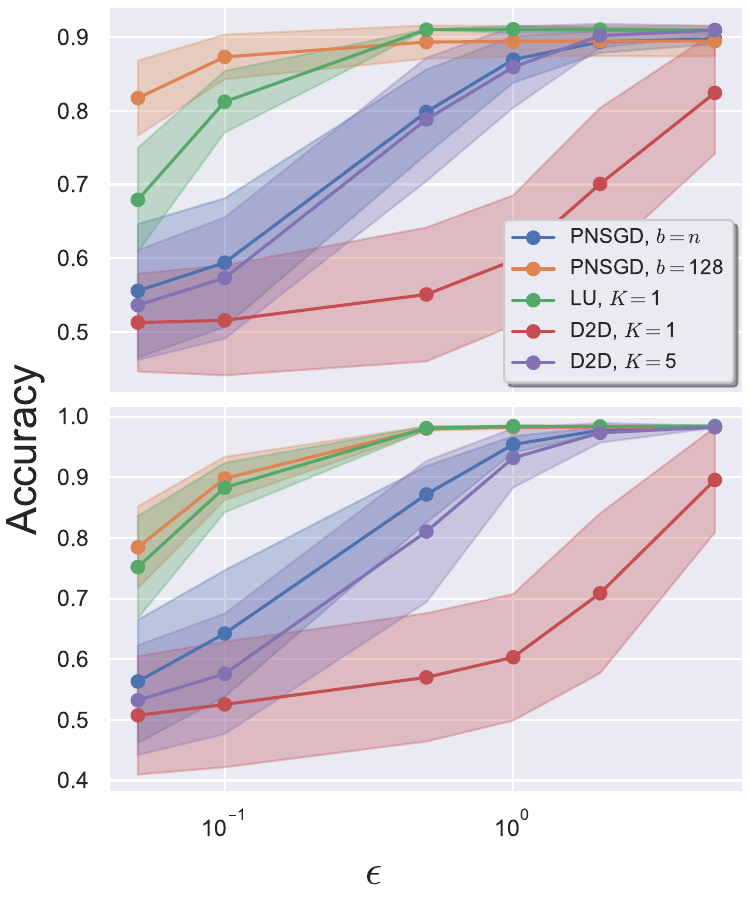}\label{fig:exp_fig1}}
     \subfloat[][Unlearning $100$ points]{\includegraphics[width=0.24\linewidth]{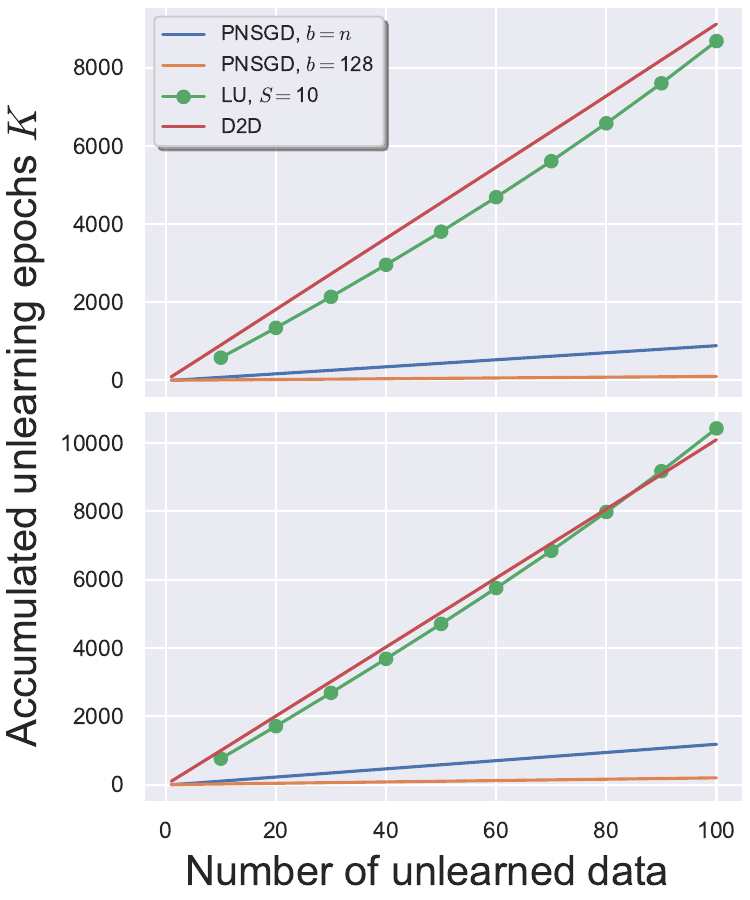}\label{fig:exp_fig2}}
     \subfloat[][Noise v.s. accuracy ]{\includegraphics[width=0.24\linewidth]{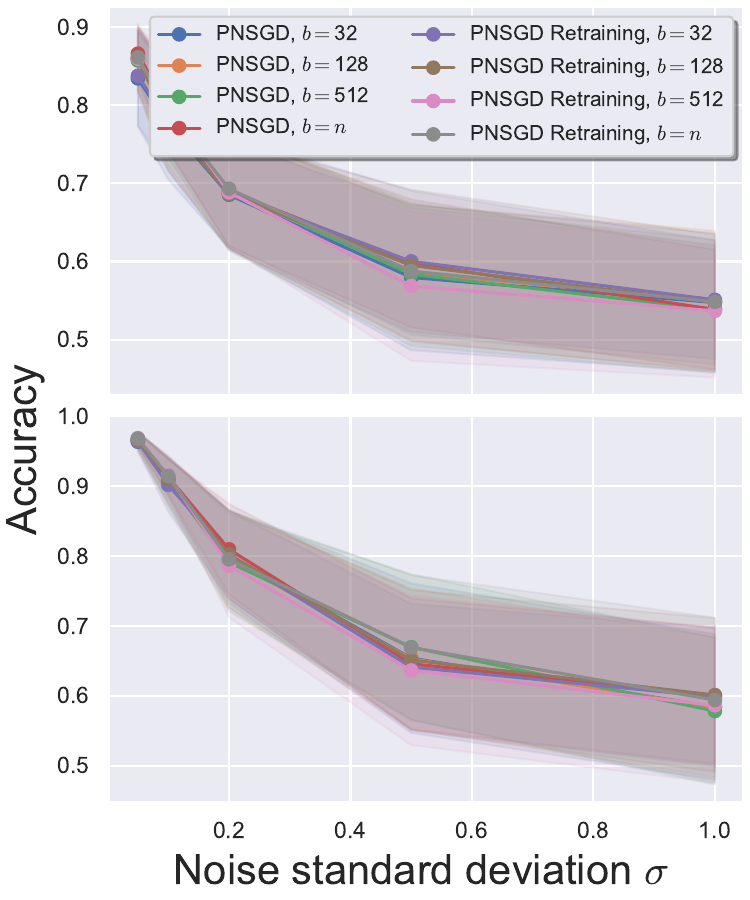}\label{fig:exp_fig3}}
     \subfloat[][Noise v.s. complexity]{\includegraphics[width=0.24\linewidth]{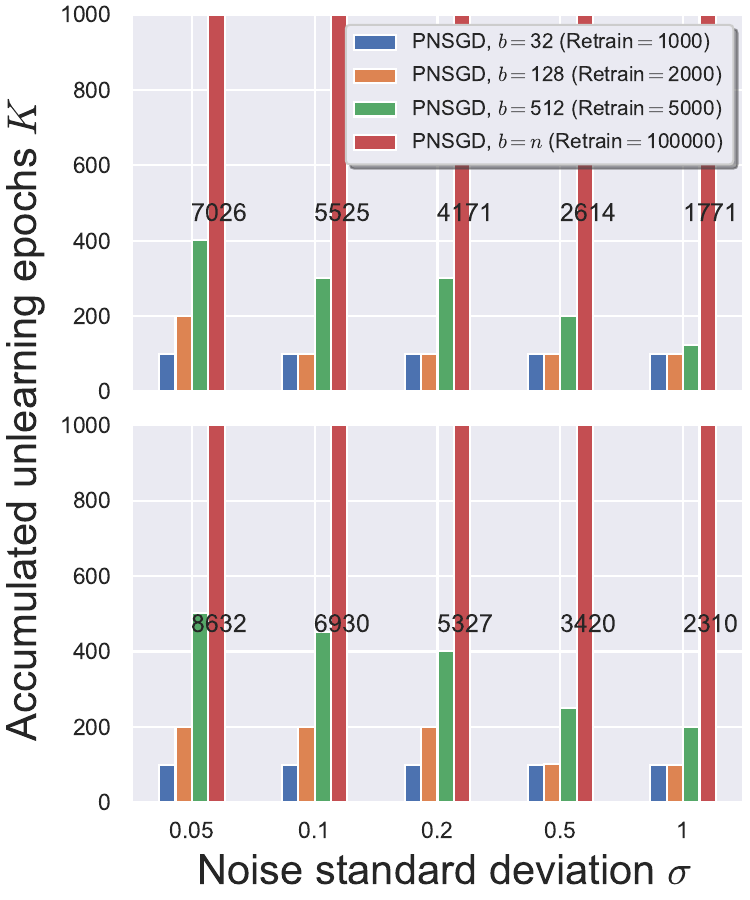}\label{fig:exp_fig4}}
     
     \caption{Main experiments, where the top and bottom rows are for MNIST and CIFAR10 respectively. (a) Compare to baseline for unlearning one point using limited $K$ unlearning epoch. For PNSGD, we use only $K=1$ unlearning epoch. For D2D, we allow it to use $K=1,5$ unlearning epochs. (b) Unlearning $100$ points sequentially versus baseline. For LU, since their unlearning complexity only stays in a reasonable range when combined with batch unlearning of size $S$ sufficiently large, we report such a result only. (c,d) Noise-accuracy-complexity trade-off of PNSGD for unlearning $100$ points sequentially with various mini-batch sizes $b$, where all methods achieve $(\epsilon,1/n)$-unlearning guarantee with $\epsilon=0.01$. We also report the required accumulated epochs for retraining for each $b$.}
     \vspace{-0.1cm}
\end{figure*}

\section{Experiments}\label{sec:exp}

\textit{Benchmark datasets. }We consider binary logistic regression with $\ell_2$ regularization. We conduct experiments on MNIST~\cite{deng2012mnist} and CIFAR10~\cite{krizhevsky2009learning}, which contain 11,982 and 10,000 training instances respectively. We follow the setting of~\cite{guo2020certified,chien2024langevin} to distinguish digits $3$ and $8$ for MNIST so that the problem is a binary classification. For the CIFAR10 dataset, we distinguish labels $3$ (cat) and $8$ (ship) and leverage the last layer of the public ResNet18~\cite{he2016deep} embedding as the data features, which follows the setting of~\cite{guo2020certified} with public feature extractor.

\textit{Baseline methods. }Our baseline methods include Delete-to-Descent (D2D)~\cite{neel2021descent} and Langevin Unlearning (LU)~\cite{chien2024langevin}, which are the state-of-the-art full-batch gradient-based approximate unlearning methods. Note that when our PNSGD unlearning chooses $b=n$ (i.e., full batch), the learning and unlearning iterations become PNGD which is identical to LU. Nevertheless, the corresponding privacy bound is still different as we leverage the analysis different from those based on Langevin dynamics in~\cite{chien2024langevin}. Hence, we still treat these two methods differently in our experiment. For D2D, we leverage Theorem 9 and 28 in~\cite{neel2021descent} for privacy accounting depending on whether we allow D2D to have an internal non-private state. Note that allowing an internal non-private state provides a weaker notion of privacy guarantee~\cite{neel2021descent} and both PNSGD and LU by default do not require it. We include those theorems for D2D and a detailed explanation of its possible non-privacy internal state in Appendix~\ref{apx:D2D}. For LU, we leverage their Theorem 3.2, and 3.3 for privacy accounting~\cite{chien2024langevin}, which are included in Appendix~\ref{apx:LU}.

All experimental details can be found in Appendix~\ref{apx:exp}, including how to convert $(\alpha,\varepsilon)$-RU to the standard $(\epsilon,\delta)$-unlearning guarantee. Our code is publicly available\footnote{\url{https://github.com/Graph-COM/SGD_unlearning}}. We choose $\delta = 1/n$ for each dataset and require all tested unlearning approaches to achieve $(\epsilon,\delta)$-unlearning with different $\epsilon$. We report test accuracy for all experiments as the utility metric. We set the learning iteration $T=10,20,50,1000$ to ensure PNSGD converges for mini-batch size $b=32,128,512,n$ respectively. All results are averaged over $100$ independent trials with standard deviation reported as shades in figures.

\textbf{Unlearning one data point with $K=1$ epoch.} We first consider the setting of unlearning one data point using only one unlearning epoch (Figure~\ref{fig:exp_fig1}). For both LU and PNSGD, we use only $K=1$ unlearning epoch. Since D2D cannot achieve a privacy guarantee with only limited (i.e., less than $10$) unlearning epoch without a non-private internal state, we allow D2D to have it and set $K=1,5$ in this experiment. Even in this case, PNSGD still outperforms D2D in both utility and unlearning complexity. Compared to LU, our mini-batch setting either outperforms or is on par with it. Interestingly, we find that LU gives a better privacy bound compared to full-batch PNSGD ($b=n$) and thus achieves better utility under the same privacy constraint, see Appendix~\ref{apx:LU} for the detailed comparisons. Nevertheless, due to the use of weak triangle inequality in LU analysis, we will see that our PNSGD can outperform LU significantly for multiple unlearning requests.

\textbf{Unlearning multiple data points.} Let us consider the case of multiple ($100$) unlearning requests (Figure~\ref{fig:exp_fig2}). We let all methods achieve the same $(1,1/n)$-unlearning guarantee for a fair comparison. We do not allow D2D to have an internal non-private state anymore in this experiment for a fair comparison. Since the privacy bound of LU only gives reasonable unlearning complexity with a limited number of sequential unlearning updates~\cite{chien2024langevin}, we allow it to unlearn $S=10$ points at once. We observe that PNSGD requires roughly $10\%$ and $2\%$ of unlearning epochs compared to D2D and LU for $b=n$ and $b=128$ respectively, where all methods exhibit similar utility ($0.9$ and $0.98$ for MNIST and CIFAR10 respectively). It shows that PNSGD is much more efficient compared to D2D and LU. Notably, while both PNSGD with $b=n$ and LU (un)learn with PNGD iterations, the resulting privacy bound based on our PABI-based analysis is superior to the one pertaining to Langevin-dynamic-based analysis in~\cite{chien2024langevin}. See our discussion in Section~\ref{sec:multi_unlearning} for the full details.

\textbf{Privacy-utility-complexity trade-off. } We now investigate the inherent utility-complexity trade-off regarding noise standard deviation $\sigma$ and mini-batch size $b$ for PNSGD under the same privacy constraint, where we require all methods to achieve $(0.01,1/n)$-unlearning guarantee for $100$ sequential unlearning requests (Figure~\ref{fig:exp_fig3} and~\ref{fig:exp_fig4}). We can see that smaller $\sigma$ leads to a better utility, yet more unlearning epochs are needed for PNSGD to achieve $\epsilon=0.01$. On the other hand, smaller mini-batch size $b$ requires fewer unlearning epochs $K$ as shown in Figure~\ref{fig:exp_fig4}, since more unlearning iterations are performed per epoch. Nevertheless, we remark that choosing $b$ too small may lead to degradation of model utility or instability. Decreasing the mini-batch size $b$ from $32$ to $1$ reduces the average accuracy of training from scratch from $0.87$ to $0.64$ and $0.97$ to $0.81$ on MNIST and CIFAR10 respectively for $\sigma=0.03$. In practice, one should choose a moderate mini-batch size $b$ to ensure both good model utility and unlearning complexity. Finally, we also note that PNSGD achieves a similar utility while much better complexity compared to retraining from scratch, where PNSGD requires at most $1,5$ unlearning epochs per unlearning request for $b = 32,512$ respectively.

\section{Limitations and Conclusion}
\textbf{Limitation. }Since our analysis is built on the works of~\cite{altschuler2022privacy,altschuler2022resolving}, we share the same limitation that the (strong) convexity assumption is required. It is an open problem on how to extend such analysis beyond convexity assumption as stated in~\cite{altschuler2022privacy,altschuler2022resolving}. While we resolve this open problem for establishing DP properties of PNSGD in our recent work~\cite{chien2024convergentprivacylossnoisysgd}, it is still unclear whether the same success can be generalized to the unlearning problem. One interesting direction is to leverage the Langevin dynamic analysis~\cite{chewi2023logconcave} instead as in~\cite{chien2024langevin}, which can deal with non-convex problems in theory yet we conjecture the resulting bounds can be loose, and more complicated.

\textbf{Conclusion. }We propose to leverage projected noisy stochastic gradient descent (PNSGD) for machine unlearning problem. We provide its unlearning guarantees as well as many other algorithmic benefits of PNSGD for unlearning under the convexity assumption. Our results are closely related to our new results on infinite Wasserstein distance tracking of the adjacent (un)learning processes, which is also leveraged in our concurrent work for studying DP-PageRank algorithms~\cite{wei2024differentially}. Extensive experiments show that our approach achieves a similar utility under the same privacy constraint while using $2\%$ and $10\%$ of the gradient computations compared with the state-of-the-art gradient-based approximate unlearning methods for mini-batch and full-batch settings, respectively.  

\begin{ack}
The authors thank Sinho Chewi, Wei-Ning Chen, and Ayush Sekhari for the helpful discussions. E. Chien, H. Wang and P. Li are supported by NSF awards CIF-2402816, PHY-2117997 and JPMC faculty award.
\end{ack}

\bibliographystyle{ieeetr}
\bibliography{Ref}

\begin{thebibliography}{10}

\bibitem{carlini2019secret}
N.~Carlini, C.~Liu, {\'U}.~Erlingsson, J.~Kos, and D.~Song, ``The secret sharer: Evaluating and testing unintended memorization in neural networks,'' in {\em 28th USENIX Security Symposium (USENIX Security 19)}, pp.~267--284, 2019.

\bibitem{guo2023ssl}
C.~Guo, F.~Bordes, P.~Vincent, and K.~Chaudhuri, ``Do ssl models have d$\backslash$'ej$\backslash$a vu? a case of unintended memorization in self-supervised learning,'' {\em arXiv preprint arXiv:2304.13850}, 2023.

\bibitem{cao2015towards}
Y.~Cao and J.~Yang, ``Towards making systems forget with machine unlearning,'' in {\em 2015 IEEE symposium on security and privacy}, pp.~463--480, IEEE, 2015.

\bibitem{bourtoule2021machine}
L.~Bourtoule, V.~Chandrasekaran, C.~A. Choquette-Choo, H.~Jia, A.~Travers, B.~Zhang, D.~Lie, and N.~Papernot, ``Machine unlearning,'' in {\em 2021 IEEE Symposium on Security and Privacy (SP)}, pp.~141--159, IEEE, 2021.

\bibitem{ullah2021machine}
E.~Ullah, T.~Mai, A.~Rao, R.~A. Rossi, and R.~Arora, ``Machine unlearning via algorithmic stability,'' in {\em Conference on Learning Theory}, pp.~4126--4142, PMLR, 2021.

\bibitem{ullah2023adaptive}
E.~Ullah and R.~Arora, ``From adaptive query release to machine unlearning,'' in {\em International Conference on Machine Learning}, pp.~34642--34667, PMLR, 2023.

\bibitem{guo2020certified}
C.~Guo, T.~Goldstein, A.~Hannun, and L.~Van Der~Maaten, ``Certified data removal from machine learning models,'' in {\em International Conference on Machine Learning}, pp.~3832--3842, PMLR, 2020.

\bibitem{sekhari2021remember}
A.~Sekhari, J.~Acharya, G.~Kamath, and A.~T. Suresh, ``Remember what you want to forget: Algorithms for machine unlearning,'' {\em Advances in Neural Information Processing Systems}, vol.~34, pp.~18075--18086, 2021.

\bibitem{neel2021descent}
S.~Neel, A.~Roth, and S.~Sharifi-Malvajerdi, ``Descent-to-delete: Gradient-based methods for machine unlearning,'' in {\em Algorithmic Learning Theory}, pp.~931--962, PMLR, 2021.

\bibitem{chien2022efficient}
E.~Chien, C.~Pan, and O.~Milenkovic, ``Efficient model updates for approximate unlearning of graph-structured data,'' in {\em The Eleventh International Conference on Learning Representations}, 2022.

\bibitem{chien2024langevin}
E.~Chien, H.~Wang, Z.~Chen, and P.~Li, ``Langevin unlearning: A new perspective of noisy gradient descent for machine unlearning,'' {\em Advances in neural information processing systems}, 2024.

\bibitem{dwork2006calibrating}
C.~Dwork, F.~McSherry, K.~Nissim, and A.~Smith, ``Calibrating noise to sensitivity in private data analysis,'' in {\em Theory of Cryptography: Third Theory of Cryptography Conference, TCC 2006, New York, NY, USA, March 4-7, 2006. Proceedings 3}, pp.~265--284, Springer, 2006.

\bibitem{abadi2016deep}
M.~Abadi, A.~Chu, I.~Goodfellow, H.~B. McMahan, I.~Mironov, K.~Talwar, and L.~Zhang, ``Deep learning with differential privacy,'' in {\em Proceedings of the 2016 ACM SIGSAC conference on computer and communications security}, pp.~308--318, 2016.

\bibitem{balle2018privacy}
B.~Balle, G.~Barthe, and M.~Gaboardi, ``Privacy amplification by subsampling: Tight analyses via couplings and divergences,'' {\em Advances in neural information processing systems}, vol.~31, 2018.

\bibitem{altschuler2022privacy}
J.~Altschuler and K.~Talwar, ``Privacy of noisy stochastic gradient descent: More iterations without more privacy loss,'' {\em Advances in Neural Information Processing Systems}, vol.~35, pp.~3788--3800, 2022.

\bibitem{altschuler2022resolving}
J.~M. Altschuler and K.~Talwar, ``Resolving the mixing time of the langevin algorithm to its stationary distribution for log-concave sampling,'' {\em arXiv preprint arXiv:2210.08448}, 2022.

\bibitem{gupta2021adaptive}
V.~Gupta, C.~Jung, S.~Neel, A.~Roth, S.~Sharifi-Malvajerdi, and C.~Waites, ``Adaptive machine unlearning,'' {\em Advances in Neural Information Processing Systems}, vol.~34, pp.~16319--16330, 2021.

\bibitem{chourasia2023forget}
R.~Chourasia and N.~Shah, ``Forget unlearning: Towards true data-deletion in machine learning,'' in {\em International Conference on Machine Learning}, pp.~6028--6073, PMLR, 2023.

\bibitem{mironov2017renyi}
I.~Mironov, ``R{\'e}nyi differential privacy,'' in {\em 2017 IEEE 30th computer security foundations symposium (CSF)}, pp.~263--275, IEEE, 2017.

\bibitem{feldman2018privacy}
V.~Feldman, I.~Mironov, K.~Talwar, and A.~Thakurta, ``Privacy amplification by iteration,'' in {\em 2018 IEEE 59th Annual Symposium on Foundations of Computer Science (FOCS)}, pp.~521--532, IEEE, 2018.

\bibitem{yousefpour2021opacus}
A.~Yousefpour, I.~Shilov, A.~Sablayrolles, D.~Testuggine, K.~Prasad, M.~Malek, J.~Nguyen, S.~Ghosh, A.~Bharadwaj, J.~Zhao, {\em et~al.}, ``Opacus: User-friendly differential privacy library in pytorch,'' {\em arXiv preprint arXiv:2109.12298}, 2021.

\bibitem{ye2022differentially}
J.~Ye and R.~Shokri, ``Differentially private learning needs hidden state (or much faster convergence),'' {\em Advances in Neural Information Processing Systems}, vol.~35, pp.~703--715, 2022.

\bibitem{meyn2012markov}
S.~P. Meyn and R.~L. Tweedie, {\em Markov chains and stochastic stability}.
\newblock Springer Science \& Business Media, 2012.

\bibitem{chourasia2021differential}
R.~Chourasia, J.~Ye, and R.~Shokri, ``Differential privacy dynamics of langevin diffusion and noisy gradient descent,'' {\em Advances in Neural Information Processing Systems}, vol.~34, pp.~14771--14781, 2021.

\bibitem{deng2012mnist}
L.~Deng, ``The mnist database of handwritten digit images for machine learning research,'' {\em IEEE Signal Processing Magazine}, vol.~29, no.~6, pp.~141--142, 2012.

\bibitem{krizhevsky2009learning}
A.~Krizhevsky {\em et~al.}, ``Learning multiple layers of features from tiny images,'' 2009.

\bibitem{he2016deep}
K.~He, X.~Zhang, S.~Ren, and J.~Sun, ``Deep residual learning for image recognition,'' in {\em Proceedings of the IEEE conference on computer vision and pattern recognition}, pp.~770--778, 2016.

\bibitem{chien2024convergentprivacylossnoisysgd}
E.~Chien and P.~Li, ``Convergent privacy loss of noisy-sgd without convexity and smoothness,'' 2024.

\bibitem{chewi2023logconcave}
{Chewi, Sinho}, ``{Log-Concave Sampling}.'' \url{https://chewisinho.github.io/main.pdf}, 2023.
\newblock Online; accessed September 29, 2023.

\bibitem{wei2024differentially}
R.~Wei, E.~Chien, and P.~Li, ``Differentially private graph diffusion with applications in personalized pageranks,'' {\em Advances in neural information processing systems}, 2024.

\bibitem{ganesh2020faster}
A.~Ganesh and K.~Talwar, ``Faster differentially private samplers via r{\'e}nyi divergence analysis of discretized langevin mcmc,'' {\em Advances in Neural Information Processing Systems}, vol.~33, pp.~7222--7233, 2020.

\bibitem{ryffel2022differential}
T.~Ryffel, F.~Bach, and D.~Pointcheval, ``Differential privacy guarantees for stochastic gradient langevin dynamics,'' {\em arXiv preprint arXiv:2201.11980}, 2022.

\bibitem{vempala2019rapid}
S.~Vempala and A.~Wibisono, ``Rapid convergence of the unadjusted langevin algorithm: Isoperimetry suffices,'' {\em Advances in neural information processing systems}, vol.~32, 2019.

\bibitem{torchvision}
T.~maintainers and contributors, ``Torchvision: Pytorch's computer vision library.'' \url{https://github.com/pytorch/vision}, 2016.

\bibitem{PyTorch}
A.~Paszke, S.~Gross, F.~Massa, A.~Lerer, J.~Bradbury, G.~Chanan, T.~Killeen, Z.~Lin, N.~Gimelshein, L.~Antiga, A.~Desmaison, A.~Kopf, E.~Yang, Z.~DeVito, M.~Raison, A.~Tejani, S.~Chilamkurthy, B.~Steiner, L.~Fang, J.~Bai, and S.~Chintala, ``Pytorch: An imperative style, high-performance deep learning library,'' in {\em Advances in Neural Information Processing Systems 32}, pp.~8024--8035, Curran Associates, Inc., 2019.

\bibitem{paszke2017automatic}
A.~Paszke, S.~Gross, S.~Chintala, G.~Chanan, E.~Yang, Z.~DeVito, Z.~Lin, A.~Desmaison, L.~Antiga, and A.~Lerer, ``Automatic differentiation in pytorch,'' 2017.

\bibitem{harris2020array}
C.~R. Harris, K.~J. Millman, S.~J. Van Der~Walt, R.~Gommers, P.~Virtanen, D.~Cournapeau, E.~Wieser, J.~Taylor, S.~Berg, N.~J. Smith, {\em et~al.}, ``Array programming with numpy,'' {\em Nature}, vol.~585, no.~7825, pp.~357--362, 2020.

\bibitem{kairouz2021practical}
P.~Kairouz, B.~McMahan, S.~Song, O.~Thakkar, A.~Thakurta, and Z.~Xu, ``Practical and private (deep) learning without sampling or shuffling,'' in {\em International Conference on Machine Learning}, pp.~5213--5225, PMLR, 2021.

\end{thebibliography}

%%%%%%%%%%%%%%%%%%%%%%%%%%%%%%%%%%%%%%%%%%%%%%%%%%%%%%%%%%%%%%%%%%%%%%%%%%%%%%%
%%%%%%%%%%%%%%%%%%%%%%%%%%%%%%%%%%%%%%%%%%%%%%%%%%%%%%%%%%%%%%%%%%%%%%%%%%%%%%%
% APPENDIX
%%%%%%%%%%%%%%%%%%%%%%%%%%%%%%%%%%%%%%%%%%%%%%%%%%%%%%%%%%%%%%%%%%%%%%%%%%%%%%%
%%%%%%%%%%%%%%%%%%%%%%%%%%%%%%%%%%%%%%%%%%%%%%%%%%%%%%%%%%%%%%%%%%%%%%%%%%%%%%%
\newpage
\appendix

\section{Additional related works}
\textbf{Differential privacy of noisy gradient methods. }DP-SGD~\cite{abadi2016deep} is arguably the most popular method for ensuring a DP guarantee for machine learning models. Since it leverages the DP composition theorem and thus the privacy loss will diverge for infinite training epochs. Recently, researchers have found that if we only release the last step of the trained model, then we can do much better than applying the composition theorem. A pioneer work~\cite{ganesh2020faster} studied the DP properties of Langevin Monte Carlo methods. Yet, they do not propose to use noisy GD for general machine learning problems. A recent line of work~\cite{ye2022differentially,ryffel2022differential} shows that PNSGD training can not only provide DP guarantees, but also the privacy loss is at most a finite value even if we train with an infinite number of iterations. The main analysis therein is based on the analysis of Langevin Monte Carlo~\cite{vempala2019rapid,chewi2023logconcave}. In the meanwhile,~\cite{altschuler2022privacy} also provided the DP guarantees for PNSGD training but with analysis based on privacy amplification by iteration~\cite{feldman2018privacy}. None of these works study how PNSGD can also be leveraged for machine unlearning.

\section{Limitations}\label{apx:limit}
Since our analysis is built on the works of~\cite{altschuler2022privacy,altschuler2022resolving}, we share the same limitation that the (strong) convexity assumption is required. It is an open problem on how to extend such analysis beyond convexity assumption as stated in~\cite{altschuler2022privacy,altschuler2022resolving}. While we resolve this open problem for the DP properties of PNSGD in our recent work~\cite{chien2024convergentprivacylossnoisysgd}, it is still unclear whether the same success can be generalized to the unlearning problem. One interesting direction is to leverage the Langevin dynamic analysis~\cite{chewi2023logconcave} instead as in~\cite{chien2024langevin}, which can deal with non-convex problems in theory yet we conjecture the resulting bounds can be loose, and more complicated. 

\section{Broader Impact}\label{apx:broad_impact}
Our work study the theoretical unlearning guarantees of projected stochastic noisy gradient descent algorithm for convex problems. We believe our work is a foundational research and does not have a direct path to any negative applications.

\section{Standard definitions}\label{apx:standard_def}
Let $f:\mathbb{R}^d\mapsto \mathbb{R}$ be a mapping. We define smoothness, Lipschitzness, and strong convexity as follows:
\begin{align}
    & \text{$L$-smooth: }\forall~ x,y\in\mathbb{R}^d,\;\|\nabla f(x)-\nabla f(y)\| \leq L \|x-y\| \\
    & \text{$m$-strongly convex: }\forall~ x,y\in\mathbb{R}^d,\;\ip{x-y}{\nabla f(x)-\nabla f(y)} \geq m \|x-y\|^2\\
    & \text{$M$-Lipschitzs: }\forall~ x,y\in\mathbb{R}^d,\;\|f(x)-f(y)\| \leq M \|x-y\|.
\end{align}
Furthermore, we say $f$ is convex means it is $0$-strongly convex.

\section{Existence of limiting distribution}\label{apx:nu_eta}

\begin{theorem*}%\label{thm:nu_eta}
    Suppose that the closed convex set $\mathcal{C}\subset \mathbb{R}^d$ is bounded with $\mathcal{C}$ having a positive Lebesgue measure and that $\nabla f(\cdot; \mathbf{d}_i):\mathcal{C}\to\mathbb{R}^d$ is continuous for all $i\in[n]$. The Markov chain $\{x_t:=x_t^0\}$ in Algorithm~\ref{alg:main} for any fixed mini-batch sequence $\mathcal{B}=\{\mathcal{B}^j\}_{j=1}^{n/b-1}$ admits a unique invariant probability measure $\nu_{\mathcal{D}|\mathcal{B}}$ on the Borel $\sigma$-algebra of $\mathcal{C}$. Furthermore, for any $x\in\mathcal{C}$, the distribution of $x_t$ conditioned on $x_0=x$ converges weakly to $\nu_{\mathcal{D}|\mathcal{B}}$ as $t\to \infty$.
\end{theorem*}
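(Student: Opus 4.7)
The plan is to apply the classical ergodicity theory for Markov chains on compact Polish spaces developed in~\citet{meyn2012markov}. Specifically, I would show that the one-epoch transition kernel $P(x,\cdot)$ of the Markov chain $\{x_t\}$ is Feller and satisfies a uniform Doeblin-type minorization on $\mathcal{C}_R$; these two conditions imply uniform ergodicity, from which existence, uniqueness of the invariant measure $\nu_{\mathcal{D}|\mathcal{B}}$, and weak (indeed, total variation) convergence all follow immediately.

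For the Feller property, I would write $P$ as the composition of the $n/b$ one-step updates in line 5 of Algorithm~\ref{alg:main}. Since each $\nabla f(\cdot;\mathbf{d}_i)$ is continuous on the compact set $\mathcal{C}_R$, the map $z \mapsto z - \eta g(z,\mathcal{B}^j)$ is continuous, the projection $\Pi_{\mathcal{C}_R}$ is continuous, and the Gaussian convolution preserves continuity under the integral by dominated convergence. Hence $P\phi$ is bounded continuous whenever $\phi$ is. For the minorization, I would establish that there exist $\lambda > 0$ and a probability measure $\mu$ on the Borel $\sigma$-algebra of $\mathcal{C}_R$ such that $P(x,A) \geq \lambda \mu(A)$ for all $x \in \mathcal{C}_R$ and Borel $A$. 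Because $\mathcal{C}_R$ is compact and $\nabla f(\cdot;\mathbf{d}_i)$ is continuous, the pre-projection drift $z - \eta g(z,\mathcal{B}^j)$ stays in a bounded region uniformly in $z \in \mathcal{C}_R$, so the conditional density of one pre-projection iterate is a Gaussian whose mean lies in a fixed bounded set; such a density is uniformly bounded below by some $c > 0$ on any fixed open ball $B$. Since $\mathcal{C}_R$ has positive Lebesgue measure, one may choose $B \subset \mathrm{int}(\mathcal{C}_R)$, on which $\Pi_{\mathcal{C}_R}$ acts as the identity on the pre-image, so the post-projection density inherits the lower bound $c$ on $B$. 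Iterating through the $n/b$ steps of one epoch preserves the minorization, with $\mu$ being normalized Lebesgue measure on $B$.

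The combination of the Feller property, compactness of $\mathcal{C}_R$, and the uniform one-step minorization means the chain satisfies Doeblin's condition, hence is uniformly ergodic: there exists a unique invariant probability measure $\nu_{\mathcal{D}|\mathcal{B}}$, and $\|P^t(x,\cdot) - \nu_{\mathcal{D}|\mathcal{B}}\|_{\mathrm{TV}} \to 0$ uniformly in $x \in \mathcal{C}_R$. Total variation convergence implies the claimed weak convergence. The main obstacle is the minorization step: because the projection $\Pi_{\mathcal{C}_R}$ can concentrate mass on $\partial \mathcal{C}_R$, one cannot naively lower bound the post-projection density by the pre-projection Gaussian density on all of $\mathcal{C}_R$. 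The workaround is to restrict the minorizing reference measure $\mu$ to an interior open ball, where the projection acts trivially on the pre-image; this is precisely where the hypothesis that $\mathcal{C}_R$ has positive Lebesgue measure (and hence nonempty interior) is used.
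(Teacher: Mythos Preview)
Your proposal is correct, and the overall skeleton---establish a one-epoch minorization $P(x,\cdot)\ge \lambda\,\mu(\cdot)$ uniformly in $x$, then invoke Meyn--Tweedie---matches the paper. The routes diverge in how the ergodic conclusion is drawn. You go directly through Doeblin's condition to uniform ergodicity, which yields existence, uniqueness, and total-variation convergence in one stroke. The paper instead splits the argument: first it verifies $\psi$-irreducibility, recurrence, and strong aperiodicity to get a unique invariant measure up to constants (Proposition~10.4.2 of Meyn--Tweedie), and then separately checks Harris recurrence and regularity of $\mathcal{C}_R$ to upgrade to a probability measure with weak convergence (Theorem~13.0.1). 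Both arguments ultimately rest on the same one-step minorization (which the paper calls ``strong aperiodicity''), but your packaging is more economical for this compact-state-space setting, and it delivers the stronger total-variation statement for free. The paper's route is the general Meyn--Tweedie template that would survive without Doeblin, at the cost of heavier bookkeeping that is unnecessary here.

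Two minor remarks. First, the Feller property is not needed once you have the uniform Doeblin minorization; Doeblin alone gives existence, uniqueness, and geometric ergodicity on any measurable state space. Second, your caution about the projection concentrating mass on $\partial\mathcal{C}_R$ and therefore restricting $\mu$ to an interior ball is harmless but not actually required: for any Borel $A\subset\mathcal{C}_R$ one has $\Pi_{\mathcal{C}_R}^{-1}(A)\supseteq A$, so the post-projection probability of $A$ is at least the pre-projection probability of $A$, and the Gaussian density is uniformly bounded below on all of $\mathcal{C}_R$ since its mean ranges over a bounded set. The paper exploits exactly this to take $\mu$ proportional to Lebesgue measure on the whole of $\mathcal{C}_R$.
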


The proof is almost identical to the proof of Theorem 3.1 in \cite{chien2024langevin} and we include it for completeness. We start by proving that the process $\{x_t:=x_t^0\}$ admits a unique invariant measure (Proposition~\ref{prop:exist_inv_meas}) and then show that the process converges to such measure which is in fact a probability measure (Theorem~\ref{thm:converge_prob_measure}). Combining these two results completes the proof of Theorem~\ref{thm:nu_eta}.

\begin{proposition}\label{prop:exist_inv_meas}
    Suppose that the closed convex set $\mathcal{C}\subset \mathbb{R}^d$ is bounded with $\textrm{Leb}(\mathcal{C})>0$ where $\textrm{Leb}$ denotes the Lebesgue measure and that $\nabla f(\cdot; \mathbf{d}_i):\mathcal{C}\to\mathbb{R}^d$ is continuous for all $i\in [n]$. Then the Markov chain $\{x_t:=x_t^0\}$ defined in Algorithm~\ref{alg:main} for any fixed mini-batch sequence $\mathcal{B}=\{\mathcal{B}^j\}_{j=1}^{n/b-1}$ admits a unique invariant measure (up to constant multiples) on $\mathcal{B}(\mathcal{C})$ that is the Borel $\sigma$-algebra of $\mathcal{C}$.
\end{proposition}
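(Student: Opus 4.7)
The plan is to verify two structural properties of the one-epoch transition kernel $P(x,A):=\mathbb{P}(x_{t+1}^0\in A\mid x_t^0=x)$ on the compact state space $\mathcal{C}_R$, and then invoke classical Markov chain theory as collected in \citet{meyn2012markov}. Those properties are (i) the Feller property, which together with compactness yields an invariant probability measure via Krylov--Bogolyubov, and (ii) $\phi$-irreducibility with respect to $\phi:=\mathrm{Leb}|_{\mathrm{int}(\mathcal{C}_R)}$, which is a nontrivial $\sigma$-finite measure by the hypothesis $\mathrm{Leb}(\mathcal{C}_R)>0$. The combination of these two then delivers uniqueness up to scalar multiples.

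For (i), it suffices to check the Feller property for a single inner iteration $x_t^j\mapsto x_t^{j+1}=\Pi_{\mathcal{C}_R}(x_t^j-\eta g(x_t^j,\mathcal{B}^j)+\sqrt{2\eta\sigma^2}W_t^j)$ and then compose $n/b$ times (composition preserves Feller). The drift $x\mapsto x-\eta g(x,\mathcal{B}^j)$ is continuous by the assumption on $\nabla f(\cdot;\mathbf{d}_i)$, the projection $\Pi_{\mathcal{C}_R}$ onto the closed convex set $\mathcal{C}_R$ is $1$-Lipschitz, and the Gaussian density is smooth; dominated convergence then yields continuity of $x\mapsto \mathbb{E}[\varphi(x_t^{j+1})\mid x_t^j=x]$ for every bounded continuous $\varphi$. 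Compactness of $\mathcal{C}_R$ together with the Feller property of $P$ then produces an invariant probability measure $\nu_{\mathcal{D}|\mathcal{B}}$ on $\mathcal{B}(\mathcal{C}_R)$ by Krylov--Bogolyubov, as a weak limit of Ces\`aro averages of $P^k(x,\cdot)$.

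For (ii), observe that conditional on $x_t^j$ the pre-projection vector $Z_j=x_t^j-\eta g(x_t^j,\mathcal{B}^j)+\sqrt{2\eta\sigma^2}W_t^j$ has a strictly positive Gaussian density on $\mathbb{R}^d$, and since $\Pi_{\mathcal{C}_R}$ restricts to the identity on $\mathrm{int}(\mathcal{C}_R)$, the law of $x_t^{j+1}$ admits a strictly positive Lebesgue density on $\mathrm{int}(\mathcal{C}_R)$. A short induction over the $n/b$ sub-iterations of one epoch (integrating out intermediate states against Gaussian kernels) shows that $P(x,\cdot)$ itself has a strictly positive Lebesgue density on $\mathrm{int}(\mathcal{C}_R)$, so $P(x,A)>0$ for every $x\in\mathcal{C}_R$ and every Borel $A$ with $\phi(A)>0$. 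Combined with the invariant probability measure from step (i), this $\phi$-irreducibility makes the chain positive Harris recurrent, and the uniqueness portion of Meyn--Tweedie (e.g.\ Theorem~10.0.1) then gives uniqueness of the invariant measure up to scalar multiples.

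The main technical point I expect to watch is the presence of the projection $\Pi_{\mathcal{C}_R}$, which produces a singular component on $\partial\mathcal{C}_R$ in the one-step transition law, so $P(x,\cdot)$ is \emph{not} absolutely continuous with respect to Lebesgue on all of $\mathcal{C}_R$. The key observation that saves the argument is that $\partial\mathcal{C}_R$ is Lebesgue-null while $\mathrm{int}(\mathcal{C}_R)$ has positive Lebesgue measure by hypothesis, so the singular mass on the boundary is invisible to $\phi=\mathrm{Leb}|_{\mathrm{int}(\mathcal{C}_R)}$ and does not disrupt Feller continuity either; the absolutely continuous part with strictly positive density on the interior is enough to drive both Krylov--Bogolyubov and Meyn--Tweedie to the desired conclusion.
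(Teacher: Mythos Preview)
Your proof is correct but follows a genuinely different route from the paper's. The paper invokes Proposition~10.4.2 of \citet{meyn2012markov} after verifying two things directly: (a) recurrence, by showing $U(x,A)=\sum_{t\ge 0}\mathbb{P}(x_t\in A\mid x_0=x)=+\infty$ for every $A$ of positive $\psi$-measure, using uniform one-step lower bounds that come from the Gaussian noise together with the boundedness of $\mathcal{C}_R$ and of $\bigcup_i\nabla f(\mathcal{C}_R;\mathbf{d}_i)$; and (b) strong aperiodicity, via a one-step uniform minorization $\mathbb{P}(x_1\in A\mid x_0=x)\ge\nu_1(A)$ by a nonzero multiple $\nu_1$ of Lebesgue measure. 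You instead decouple existence from uniqueness: existence comes from the Feller property plus compactness via Krylov--Bogolyubov, and uniqueness from $\phi$-irreducibility combined with the invariant probability just constructed, which forces positive recurrence and hence a unique invariant measure (Proposition~10.1.1/Theorem~10.0.1 in \citet{meyn2012markov}). Both routes hinge on the same underlying observation---the Gaussian step gives the one-epoch kernel a strictly positive Lebesgue density on $\mathrm{int}(\mathcal{C}_R)$---but deploy it differently. Your Krylov--Bogolyubov approach is more modular and spares you the explicit $U(x,A)=\infty$ computation; the paper's direct recurrence/minorization argument has the side benefit of simultaneously establishing strong aperiodicity, which is precisely what is reused in the companion convergence result (Theorem~\ref{thm:converge_prob_measure}). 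One small terminological slip: irreducibility together with an invariant probability measure yields positive recurrence, not automatically positive \emph{Harris} recurrence, but positive recurrence is already enough for the uniqueness you need here.
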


\begin{proof}
    This proposition is a direct application of results from \cite{meyn2012markov}. According to Proposition 10.4.2 in \cite{meyn2012markov}, it suffices to verify that $\{x_t\}$ is recurrent and strongly aperiodic.
    \begin{enumerate}
        \item \emph{Recurrency.} Thanks to the Gaussian noise $W_t$, $\{x_t\}$ is $\textrm{Leb}$-irreducible, i.e., it holds for any $x\in \mathcal{C}$ and any $A\in \mathcal{B}(\mathcal{C})$ with $\textrm{Leb}(A)>0$ that
        \begin{equation*}
            L(x,A):= \mathbb{P}(\tau_A<+\infty\ | \ x_0 = x) > 0,
        \end{equation*}
        where $\tau_A = \inf\{t\geq 0 : x_t\in A\}$ is the stopping time. Therefore, there exists a Borel probability measure $\psi$ such that that $\{x_t\}$ is $\psi$-irreducible and $\psi$ is maximal in the sense of Proposition 4.2.2 in \cite{meyn2012markov}. Consider any $A\in \mathcal{B}(\mathcal{C})$ with $\psi(A)>0$. Since $\{x_t\}$ is $\psi$-irreducible, one has $L(x,A)= \mathbb{P}(\tau_A<+\infty\ | \ x_0 = x) >0$ for all $x\in \mathcal{C}$. This implies that there exists $T\geq 0$, $\delta>0$, and $B\in \mathcal{B}(\mathcal{C})$ with $\textrm{Leb}(B)>0$, such that $\mathbb{P}(x_T\in A\ | \ x_0 = x) \geq \delta,\ \forall~x\in B$. Therefore, one can conclude for any $x\in\mathcal{C}$ that
        \begin{align*}
            U(x,A) :=& \sum_{t=0}^\infty \mathbb{P}(x_t\in A\ | \ x_0 = x) \\
            \geq&\sum_{t=1}^\infty \mathbb{P}(x_{t+T}\in A\ |\ x_t\in B,x_0=x) \cdot \mathbb{P}(x_t\in B\ |\ x_0 = x) \\
             \geq&  \sum_{t=1}^\infty \delta \cdot \inf_{y\in \mathcal{C}} \mathbb{P}(x_t\in B\ |\ x_{t-1} = y) \\
             = & +\infty,
        \end{align*}
        where we used the fact that $\inf_{y\in \mathcal{C}} \mathbb{P}(x_t\in B\ |\ x_{t-1} = y)=\inf_{y\in \mathcal{C}} \mathbb{P}(x_1\in B\ |\ x_0 = y)>0$ that is implies by $\textrm{Leb}(B)>0$ and the boundedness of $\mathcal{C}$ and $\bigcup_{i\in\mathcal{B}^{n/b}}\nabla f(\mathcal{C};\mathbf{d}_i)$. Let us remark that we actually have compact $\bigcup_{i\in\mathcal{B}^{n/b}}\nabla f(\mathcal{C};\mathbf{d}_i)$ since $\mathcal{C}$ is compact and $\nabla f(\cdot;\mathbf{d}_i)$ is continuous. The arguments above verify that $\{x_t\}$ is recurrent (see Section 8.2.3 in \cite{meyn2012markov} for definition).
        \item \emph{Strong aperiodicity.} Since $\mathcal{C}$ and $\bigcup_{i\in\mathcal{B}^{n/b}}\nabla f(\mathcal{C};\mathbf{d}_i)$ are bounded and the density of $W_t$ has a uniform positive lower bound on any bounded domain, there exists a non-zero multiple of the Lebesgue measure, say $\nu_1$, satisfying that
        \begin{equation*}
            \mathbb{P}(x_1\in A\ |\ x_0 = x)\geq\nu_1(A),\quad\forall~x\in \mathcal{C},\ A\in\mathcal{B}(\mathcal{C}).
        \end{equation*}
        Then $\{x_t\}$ is strongly aperiodic by the equation above and $\nu_1(\mathcal{C})>0$ (see Section 5.4.3 in \cite{meyn2012markov} for definition). 
    \end{enumerate}
    The proof is hence completed.
\end{proof}

\begin{theorem}\label{thm:converge_prob_measure}
Under the same assumptions as in Proposition~\ref{prop:exist_inv_meas}, the Markov chain $\{x_t\}$ admits a unique invariant probability measure $\nu_{\mathcal{D}|\mathcal{B}}$ on $\mathcal{B}(\mathcal{C})$. Furthermore, for any $x\in\mathcal{C}$, the distribution of $x_t=x_t^0$ generated by the learning process in Algorithm~\ref{alg:main} conditioned on $x_0=x$ converges weakly to $\nu_{\mathcal{D}|\mathcal{B}}$ as $t\to \infty$.
\end{theorem}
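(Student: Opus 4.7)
The plan is to strengthen the conclusion of Proposition~\ref{prop:exist_inv_meas}—which only provides uniqueness of the invariant measure up to scalar multiples—into full ergodicity using standard Meyn--Tweedie theory. Two things remain to prove: (i) the invariant measure is finite, so it can be normalized to a probability measure $\nu_{\mathcal{D}|\mathcal{B}}$; and (ii) the law of $x_t$ converges weakly to $\nu_{\mathcal{D}|\mathcal{B}}$ from any deterministic starting point $x \in \mathcal{C}_R$.

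The key observation that handles both points in one shot is to re-use the uniform minorization already obtained in the proof of Proposition~\ref{prop:exist_inv_meas}: there exists a non-zero measure $\nu_1$ on $\mathcal{B}(\mathcal{C}_R)$ (a positive multiple of the Lebesgue measure on a bounded set) such that $\mathbb{P}(x_1 \in A \mid x_0 = x) \geq \nu_1(A)$ for every $x \in \mathcal{C}_R$ and every $A \in \mathcal{B}(\mathcal{C}_R)$. This Doeblin-type minorization holds on the entire state space, which identifies $\mathcal{C}_R$ itself as a $\nu_1$-small set with minorization depth $m=1$. When the whole state space is small, the chain is automatically positive Harris recurrent, and by Theorem~16.0.2 of \cite{meyn2012markov} it is uniformly ergodic: there is a unique invariant \emph{probability} measure $\nu_{\mathcal{D}|\mathcal{B}}$ and constants $M<\infty$, $\rho\in(0,1)$ with
$$\sup_{x\in\mathcal{C}_R}\|\mathbb{P}^t(x,\cdot)-\nu_{\mathcal{D}|\mathcal{B}}\|_{\mathrm{TV}} \leq M\rho^t.$$

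The last step is to note that total variation convergence is strictly stronger than weak convergence: for any bounded continuous $f:\mathcal{C}_R\to\mathbb{R}$,
$$\left|\mathbb{E}[f(x_t)\mid x_0=x] - \int f\,d\nu_{\mathcal{D}|\mathcal{B}}\right| \leq \|f\|_\infty\,M\rho^t \xrightarrow[t\to\infty]{} 0,$$
which gives the claimed weak convergence and in fact delivers a quantitative geometric rate essentially for free.

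I do not anticipate a substantial obstacle, since the preceding proposition already completed the hard work (verifying $\psi$-irreducibility, recurrence, strong aperiodicity, and, crucially, the full-state-space Doeblin minorization produced by the Gaussian noise convolved across the $n/b$ inner PNSGD steps of a single epoch). The only care needed is to invoke the Meyn--Tweedie uniform-ergodicity theorem whose hypotheses match exactly the minorization on $\mathcal{C}_R$ already in hand, and to observe that this hypothesis alone upgrades the $\sigma$-finite invariant measure of Proposition~\ref{prop:exist_inv_meas} to a genuine probability measure. If one instead wished to avoid invoking uniform ergodicity, an alternative route would use compactness of $\mathcal{C}_R$ to show finiteness of the invariant measure directly (any invariant measure of a chain living on a compact state space and satisfying the small-set condition must be finite) and then appeal to Theorem~13.0.1 of \cite{meyn2012markov} for convergence; but the uniform-ergodicity path is cleaner.
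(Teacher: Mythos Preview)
Your proposal is correct but takes a genuinely different (and in fact shorter) route than the paper. The paper proceeds by first upgrading recurrence to \emph{Harris} recurrence, then verifying that $\mathcal{C}_R$ is a \emph{regular} set by bounding $\mathbb{E}(\tau_A\mid x_0=x)$ uniformly in $x$, and finally invoking Theorem~13.0.1 of \cite{meyn2012markov} to obtain the unique invariant probability measure and convergence. You instead observe that the full-state-space minorization $\mathbb{P}(x_1\in A\mid x_0=x)\geq \nu_1(A)$ already established in the proof of Proposition~\ref{prop:exist_inv_meas} is precisely the Doeblin condition, so that $\mathcal{C}_R$ is itself a small set; then Theorem~16.0.2 of \cite{meyn2012markov} gives uniform ergodicity directly. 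Your path bypasses the explicit Harris-recurrence and regularity verifications and, as you note, yields an explicit geometric rate in total variation (hence a fortiori weak convergence), which the paper's route via Theorem~13.0.1 does not extract. Amusingly, the ``alternative route'' you sketch at the end---finiteness of the invariant measure plus Theorem~13.0.1---is essentially what the paper actually does.
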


\begin{proof}
    It has been proved in Proposition~\ref{prop:exist_inv_meas} that $\{x_t\}$ is strongly aperiodic and recurrent with an invariant measure. Consider any $A\in\mathcal{B}(\mathcal{C})$ with $\psi(A)>0$ and use the same settings and notations as in the proof of Proposition~\ref{prop:exist_inv_meas}. There exists $T\geq 0$, $\delta>0$, and $B\in \mathcal{B}(\mathcal{C})$ with $\textrm{Leb}(B)>0$, such that $\mathbb{P}(x_T\in A\ | \ x_0 = x) \geq \delta,\ \forall~x\in B$. This implies that for any $t\geq 0$ and any $x\in \mathcal{C}$, 
    \begin{align*}
        \mathbb{P}(x_{t+T+1}\in A\ |\ x_t = x) & = \mathbb{P}(x_{T+1}\in A\ |\ x_0 = x) \\
        & \geq \mathbb{P}(x_{T+1}\in A\ |\ x_1 \in B, x_0=x) \cdot \mathbb{P}(x_1\in B\ |\ x_0=x) \geq \epsilon,
    \end{align*}
    where
    \begin{equation*}
        \epsilon =  \delta \cdot \inf_{y\in \mathcal{C}} \mathbb{P}(x_1\in B\ |\ x_0 = y) >0,
    \end{equation*}
    which then leads to
    \begin{equation*}
        Q(x,A) := \mathbb{P}(x_t\in A,\ \text{infinitely often}) = +\infty.
    \end{equation*}
    This verifies that the chain $\{x_t\}$ is Harris recurrent (see Section 9 in \cite{meyn2012markov} for definition). It can be further derived that for any $x\in\mathcal{C}$,
    \begin{align*}
        \mathbb{E}(\tau_A\ |\ x_0 = x) & = \sum_{t = 1}^\infty \mathbb{P}(\tau_A \geq t\ |\ x_0 = x) \leq (T+1)\sum_{k=0}^\infty \mathbb{P}(\tau_A > (T+1)k\ |\ x_0 = x) \\
        & \leq (T+1)\sum_{k=1}^\infty (1-\epsilon)^k < +\infty.
    \end{align*}
    The bound above is uniform for all $x\in\mathcal{C}$ and this implies that $\mathcal{C}$ is a regular set of $\{x_t\}$ (see Section 11 in \cite{meyn2012markov} for definition). Finally, one can apply Theorem 13.0.1 in \cite{meyn2012markov} to conclude that there exists a unique invariant probability measure $\nu_{\mathcal{D}}$ on $\mathcal{B}(\mathcal{C})$ and that the distribution of $x_t$ converges weakly to $\nu_{\mathcal{D}|\mathcal{B}}$ conditioned on $x_0=x$ for any $x\in \mathcal{C}$.
\end{proof}

\section{Proof of Theorem~\ref{thm:SGLD_unlearning_1}}\label{apx:SGLD_unlearning_1}

\begin{theorem*}[RU guarantee of PNSGD unlearning, fixed $\mathcal{B}$]
    Assume $\forall \mathbf{d}\in\mathcal{X}$, $f(x;\mathbf{d})$ is $L$-smooth, $M$-Lipchitz and $m$-strongly convex in $x$. Let the learning and unlearning processes follow Algorithm~\ref{alg:main} with $y_0^0 = x_T^0 = \mathcal{M}(\mathcal{D})$. Given any fixed mini-batch sequence $\mathcal{B}$, for any $\alpha>1$, let $\eta\leq \frac{1}{L}$, the output of the $K^{th}$ unlearning iteration satisfies $(\alpha,\varepsilon)$-RU with $c=1-\eta m$, where
    \begin{align*}
        & \varepsilon \leq \frac{\alpha-1/2}{\alpha-1}\left(\varepsilon_1(2 \alpha) + \varepsilon_2(2\alpha)\right),\\
        & \varepsilon_1(\alpha) = \frac{\alpha (2R)^2}{2\eta \sigma^2}c^{2Tn/b},\;\varepsilon_2(\alpha) = \frac{\alpha Z_{\mathcal{B}}^2}{2\eta \sigma^2}c^{2Kn/b},\\
        & Z_{\mathcal{B}} = 2Rc^{Tn/b} + \min\left(\frac{1-c^{Tn/b}}{1-c^{n/b}}\frac{2\eta M}{b},2R\right).
    \end{align*}
\end{theorem*}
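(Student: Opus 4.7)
Here is the plan. The goal is to bound $d_\alpha(\rho_{K|\mathcal{B}}^0, \nu_{T|\mathcal{B}}^{0,\prime})$ by converting Wasserstein-$\infty$ distances between starting laws into R\'enyi divergences via PABI (Lemma~\ref{lma:SGLD_unlearning_metric}), with the stationary distribution $\nu_{\mathcal{D}^\prime|\mathcal{B}}$ inserted as a triangle-inequality pivot. The first step is to recognize that each iteration of PNSGD in Algorithm~\ref{alg:main} is a $c$-CNI step (Definition~\ref{def:cni}) with $c = 1 - \eta m$: for $\eta \leq 1/L$ the mini-batch gradient map is $c$-contractive because each $\nabla f(\cdot;\mathbf{d}_i)$ is, averaging preserves contractivity, and $\Pi_{\mathcal{C}_R}$ is $1$-Lipschitz; the injected noise is $\mathcal{N}(0, 2\eta\sigma^2 I_d)$. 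Hence $T$ learning epochs constitute a $c$-CNI of length $Tn/b$, and $K$ unlearning epochs add $Kn/b$ further $c$-CNI steps with gradients now taken on $\mathcal{D}^\prime$.

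Next, I will apply a symmetric form of the weak triangle inequality for R\'enyi divergence (Proposition~11 of \citet{mironov2017renyi} applied in both directions, together with monotonicity $D_{2\alpha-1} \leq D_{2\alpha}$) through the pivot $\nu_{\mathcal{D}^\prime|\mathcal{B}}$ to get
\begin{equation*}
  d_\alpha(\rho_{K|\mathcal{B}}^0, \nu_{T|\mathcal{B}}^{0,\prime}) \;\leq\; \tfrac{\alpha - 1/2}{\alpha - 1}\bigl(d_{2\alpha}(\rho_{K|\mathcal{B}}^0, \nu_{\mathcal{D}^\prime|\mathcal{B}}) + d_{2\alpha}(\nu_{\mathcal{D}^\prime|\mathcal{B}}, \nu_{T|\mathcal{B}}^{0,\prime})\bigr).
\end{equation*}
The second summand compares two copies of the learning CNI on $\mathcal{D}^\prime$ run for $Tn/b$ steps, starting from $\nu_0$ and from the invariant $\nu_{\mathcal{D}^\prime|\mathcal{B}}$, respectively; both sit inside $\mathcal{C}_R$, so $W_\infty \leq 2R$, and Lemma~\ref{lma:SGLD_unlearning_metric} delivers exactly $\varepsilon_1(2\alpha)$. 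Symmetry of the CNI setup (identical update maps and noise, only the starting laws are swapped) yields both sides of the R\'enyi difference with the same bound.

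For the first summand I need $W_\infty$ between the starting laws of two unlearning CNIs on $\mathcal{D}^\prime$: the actual one begins at $\nu_{T|\mathcal{B}}^0 = $ law of $\mathcal{M}(\mathcal{D})$, while the pivot one is initialized at $\nu_{\mathcal{D}^\prime|\mathcal{B}}$ and hence stays there. I insert the adjacent learning distribution $\nu_{T|\mathcal{B}}^{0,\prime}$ and apply the triangle inequality of $W_\infty$:
\begin{equation*}
  W_\infty(\nu_{T|\mathcal{B}}^0, \nu_{\mathcal{D}^\prime|\mathcal{B}}) \;\leq\; W_\infty(\nu_{T|\mathcal{B}}^0, \nu_{T|\mathcal{B}}^{0,\prime}) + W_\infty(\nu_{T|\mathcal{B}}^{0,\prime}, \nu_{\mathcal{D}^\prime|\mathcal{B}}).
\end{equation*}
Lemma~\ref{thm:W_infty_bound} bounds the first term by $\min\bigl(\tfrac{1-c^{Tn/b}}{1-c^{n/b}}\tfrac{2\eta M}{b}, 2R\bigr)$ (removing the $j_0$ dependence by upper bounding the factor $c^{n/b-j_0-1} \leq 1$), and Lemma~\ref{lma:W_infty_bound_unlearning} bounds the second term by $c^{Tn/b} W_\infty(\nu_0, \nu_{\mathcal{D}^\prime|\mathcal{B}}) \leq 2R\, c^{Tn/b}$. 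Summing recovers precisely $Z_\mathcal{B}$, and a second invocation of Lemma~\ref{lma:SGLD_unlearning_metric} over $Kn/b$ unlearning steps produces $\varepsilon_2(2\alpha)$. Combining the two summands through the weak triangle inequality finishes the proof.

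The main subtlety I anticipate is the bookkeeping between epochs and CNI iterations so that the exponents $c^{2Tn/b}$ and $c^{2Kn/b}$ line up with what Lemma~\ref{lma:SGLD_unlearning_metric} produces, and verifying that the two processes compared in the first summand truly share identical random update sequences, which holds because both evolve on $\mathcal{D}^\prime$ under the same mini-batch sequence $\mathcal{B}$ and the same injected Gaussian noise so the CNI hypothesis applies cleanly. Handling the bidirectionality of the R\'enyi difference is straightforward since swapping the two initial laws leaves the dynamics unchanged, but I will state this explicitly each time Lemma~\ref{lma:SGLD_unlearning_metric} is invoked.
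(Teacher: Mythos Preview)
Your proposal is correct and follows essentially the same approach as the paper: weak triangle inequality for $d_\alpha$ with pivot $\nu_{\mathcal{D}'|\mathcal{B}}$, then two invocations of the PABI lemma after controlling the initial $W_\infty$ distances. The only difference is the intermediate point in the $W_\infty$ triangle inequality for $Z_{\mathcal{B}}$: you pivot through $\nu_{T|\mathcal{B}}^{0,\prime}$ (using Lemma~\ref{thm:W_infty_bound} at finite $T$ and Lemma~\ref{lma:W_infty_bound_unlearning} for the $\mathcal{D}'$ process), whereas the paper pivots through $\nu_{\mathcal{D}|\mathcal{B}}$ (sending $T\to\infty$ in Lemma~\ref{thm:W_infty_bound} and applying Lemma~\ref{lma:W_infty_bound_unlearning} for the $\mathcal{D}$ process); your choice in fact reproduces the factor $(1-c^{Tn/b})/(1-c^{n/b})$ in the stated $Z_{\mathcal{B}}$ exactly, while the paper's route yields the slightly looser $1/(1-c^{n/b})$.
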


We first introduce an additional definition and a lemma needed for the our analysis.

\begin{definition}[Shifted R\'enyi divergence]\label{def:shift_renyi}
    Let $\mu$ and $\nu$ be distributions defined on a Banach space $(\mathbb{R}^d,\|\cdot\|)$. For parameters $z\geq 0$ and $\alpha\geq 1$, the $z$-shifted R\'enyi divergence between $\mu$ and $\nu$ is defined as
    \begin{align}
        D_\alpha^{(z)}(\mu||\nu) = \inf_{\mu^\prime:W_\infty(\mu,\mu^\prime)\leq z}D_\alpha(\mu^\prime||\nu).
    \end{align}
\end{definition}

\begin{lemma}[Data-processing inequality for R\'enyi divergence, Lemma 2.6 in~\cite{altschuler2022resolving}]\label{lma:renyi-data-process-ineq}
    For any $\alpha\geq 1$, any (random) map $h:\mathbb{R}^d\mapsto\mathbb{R}^d$ and any distribution $\mu,\nu$ with support on $\mathbb{R}^d$,
    \begin{align}
        D_{\alpha}(h_{\#}\mu||h_{\#}\nu)\leq D_{\alpha}(\mu||\nu),
    \end{align}
    where $h_{\#}\mu$ denotes the pushforward operation for a function $h$ and distribution $\mu$.
\end{lemma}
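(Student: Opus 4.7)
The plan is to first establish the inequality for deterministic measurable maps via Jensen's inequality, and then reduce the randomized case to the deterministic one by coupling the map with its internal randomness on an augmented space.

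For the deterministic case, assume $\mu \ll \nu$ (otherwise $D_\alpha(\mu\|\nu) = +\infty$ and the bound is vacuous). Let $p = d\mu/d\nu$ and fix $\alpha > 1$. Drawing $X \sim \nu$ and setting $Y = h(X)$, I would identify the Radon-Nikodym derivative of $h_\#\mu$ with respect to $h_\#\nu$ as the conditional expectation $q(y) = \mathbb{E}_\nu[p(X) \mid h(X) = y]$; this is a standard disintegration computation. Applying Jensen's inequality to the convex function $\phi(t) = t^\alpha$ gives $q(y)^\alpha \le \mathbb{E}_\nu[p(X)^\alpha \mid h(X) = y]$ for $h_\#\nu$-almost every $y$. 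Integrating over $h_\#\nu$ and using the tower property yields
\begin{equation*}
\int q^\alpha \, d(h_\#\nu) \;\le\; \mathbb{E}_\nu[p(X)^\alpha] \;=\; \int p^\alpha \, d\nu,
\end{equation*}
and taking $\frac{1}{\alpha-1}\log$ on both sides delivers the claim. The boundary case $\alpha = 1$ (KL divergence) follows either by continuity in $\alpha$ or by repeating the same Jensen argument with the convex function $t\log t$.

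For a randomized map, write $h(x) = g(x, U)$ where $U$ is an auxiliary random variable with law $\pi_U$ independent of the input and $g$ is a deterministic measurable map; such a representation exists by the existence of regular conditional distributions on the standard Borel space $\mathbb{R}^d$. I would then apply the deterministic case just proved to $g$ with input distributions $\mu \otimes \pi_U$ and $\nu \otimes \pi_U$. Tensorization of Rényi divergence (a direct computation from the definition) gives
\begin{equation*}
D_\alpha(\mu \otimes \pi_U \,\|\, \nu \otimes \pi_U) \;=\; D_\alpha(\mu\|\nu),
\end{equation*}
and by construction $g_\#(\mu \otimes \pi_U) = h_\#\mu$ and $g_\#(\nu \otimes \pi_U) = h_\#\nu$. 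Chaining these three facts concludes $D_\alpha(h_\#\mu \,\|\, h_\#\nu) \le D_\alpha(\mu\|\nu)$.

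The main obstacle is not analytic but measure-theoretic bookkeeping: checking that the conditional expectation really is the correct Radon-Nikodym derivative of the push-forward, and that a general random map admits the coupling representation $h(x) = g(x, U)$. Both points are routine on the standard Borel space $\mathbb{R}^d$ considered here, so the Jensen step does essentially all the work.
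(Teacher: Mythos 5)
Your proof is correct. Note that the paper offers no argument of its own for this statement: it is imported verbatim as Lemma 2.6 of \citet{altschuler2022resolving} (the classical post-processing property of R\'enyi divergence, going back to van Erven and Harremo\"{e}s), so there is no internal proof to compare against. Your two-step argument is the standard one and is sound as written: identifying the density of the push-forward as the conditional expectation $q(y)=\mathbb{E}_\nu[p(X)\mid h(X)=y]$ and applying conditional Jensen to $t\mapsto t^\alpha$ (or $t\log t$ when $\alpha=1$) handles deterministic maps, and the reduction of a random map to a deterministic one via the functional representation $h(x)=g(x,U)$ on the augmented space, combined with additivity of R\'enyi divergence over product measures with an identical second factor, handles the general case; the absolute-continuity caveat ($D_\alpha(\mu\|\nu)=+\infty$ when $\mu\not\ll\nu$) is also treated correctly.
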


\begin{proposition}[Weak Triangle Inequality of R\'enyi divergence, Corollary 4 in~\cite{mironov2017renyi}]\label{prop:weak_triangle}
    For any $\alpha>1$, $p,q>1$ satisfying $1/p+1/q=1$ and distributions $P,Q,R$ with the same support:
    \begin{align*}
        & D_{\alpha}(P||R) \leq \frac{\alpha-\frac{1}{p}}{\alpha-1} D_{p\alpha}(P||Q) + D_{q(\alpha-1/p)}(Q||R).
    \end{align*}
\end{proposition}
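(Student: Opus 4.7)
The plan has three pieces: first reduce both PNSGD processes to contractive noisy iterations (CNI), then apply the weak triangle inequality to split the target Rényi difference through a carefully chosen pivot distribution, and finally bound each piece by feeding a controlled initial $W_\infty$ estimate into Lemma~\ref{lma:SGLD_unlearning_metric}. The CNI reduction is immediate: since $\eta\le 1/L$ and each $f(\cdot;\mathbf{d})$ is $m$-strongly convex and $L$-smooth, the gradient step $x\mapsto x-\eta g(x,\mathcal{B}^j)$ is $(1-\eta m)$-contractive in $\ell_2$, and the projection $\Pi_{\mathcal{C}_R}$ is $1$-contractive, so each PNSGD iteration is $c$-contractive with $c=1-\eta m$. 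Thus, for any fixed $\mathcal{B}$, both the learning process $\{x_t^j\}$ on $\mathcal{D}$ and the unlearning process $\{y_k^j\}$ on $\mathcal{D}^\prime$ are $c$-CNI processes sharing the Gaussian noise $\mathcal{N}(0,2\eta\sigma^2 I_d)$.

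Next I introduce a shadow learning process with law $\nu_{T|\mathcal{B}}^{0,\prime}$ trained on $\mathcal{D}^\prime$ from the same initialization $\nu_0$, together with its stationary distribution $\nu_{\mathcal{D}^\prime|\mathcal{B}}$ guaranteed by Theorem~\ref{thm:nu_eta}. The quantity to bound is $d_\alpha(\rho_{K|\mathcal{B}}^0,\nu_{T|\mathcal{B}}^{0,\prime})$. Apply Proposition~\ref{prop:weak_triangle} with $p=q=2$ pivoting through $\nu_{\mathcal{D}^\prime|\mathcal{B}}$:
\begin{equation*}
d_\alpha(\rho_{K|\mathcal{B}}^0,\nu_{T|\mathcal{B}}^{0,\prime}) \le \tfrac{\alpha-1/2}{\alpha-1}\, d_{2\alpha}(\rho_{K|\mathcal{B}}^0,\nu_{\mathcal{D}^\prime|\mathcal{B}}) + d_{2\alpha-1}(\nu_{\mathcal{D}^\prime|\mathcal{B}},\nu_{T|\mathcal{B}}^{0,\prime}),
\end{equation*}
then use monotonicity $d_{2\alpha-1}\le d_{2\alpha}$ and pull the prefactor $\tfrac{\alpha-1/2}{\alpha-1}\ge 1$ out of both terms; one symmetric repetition with the two sides of the divergence swapped handles the max defining $d_\alpha$.

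For the $\varepsilon_1$-term $d_{2\alpha}(\nu_{\mathcal{D}^\prime|\mathcal{B}},\nu_{T|\mathcal{B}}^{0,\prime})$: both distributions are supported in $\mathcal{C}_R$, so the trivial initial bound $W_\infty\le 2R$ holds; applying Lemma~\ref{lma:SGLD_unlearning_metric} across $Tn/b$ matched CNI updates (both chains run on $\mathcal{D}^\prime$, using that $\nu_{\mathcal{D}^\prime|\mathcal{B}}$ is invariant) yields the claimed $\varepsilon_1(2\alpha)$. For the $\varepsilon_2$-term $d_{2\alpha}(\rho_{K|\mathcal{B}}^0,\nu_{\mathcal{D}^\prime|\mathcal{B}})$, the initialization is $\rho_0^0=\nu_{T|\mathcal{B}}^0$ from learning on $\mathcal{D}$ while the target chain runs on $\mathcal{D}^\prime$, so I use the $W_\infty$ triangle inequality
\begin{equation*}
W_\infty(\nu_{T|\mathcal{B}}^0,\nu_{\mathcal{D}^\prime|\mathcal{B}})\le W_\infty(\nu_{T|\mathcal{B}}^0,\nu_{T|\mathcal{B}}^{0,\prime})+W_\infty(\nu_{T|\mathcal{B}}^{0,\prime},\nu_{\mathcal{D}^\prime|\mathcal{B}}).
\end{equation*}
Lemma~\ref{thm:W_infty_bound} (dropping $c^{n/b-j_0-1}\le 1$ for a worst-case uniform bound) controls the first term by $\min\bigl(\tfrac{1-c^{Tn/b}}{1-c^{n/b}}\tfrac{2\eta M}{b},\,2R\bigr)$, and Lemma~\ref{lma:W_infty_bound_unlearning} together with the trivial bound $W_\infty(\nu_0,\nu_{\mathcal{D}^\prime|\mathcal{B}})\le 2R$ controls the second by $2R\cdot c^{Tn/b}$, summing to exactly $Z_\mathcal{B}$. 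A final invocation of Lemma~\ref{lma:SGLD_unlearning_metric} across the $Kn/b$ unlearning steps yields $\varepsilon_2(2\alpha)$ and closes the proof.

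\textbf{Main obstacle.} The delicate bookkeeping is choosing the pivot as $\nu_{\mathcal{D}^\prime|\mathcal{B}}$ rather than $\nu_{\mathcal{D}|\mathcal{B}}$, so that both PABI applications compare CNI processes driven by identical update maps (Lemma~\ref{lma:SGLD_unlearning_metric} requires matched $\psi_k$). The discrepancy between running on $\mathcal{D}$ versus $\mathcal{D}^\prime$ is absorbed once, up front, via the $W_\infty$ tracking in Lemma~\ref{thm:W_infty_bound}; everything downstream is a pure contraction on matched dynamics. One also has to be careful that the contraction factor in the final bound is $c^{2Kn/b}$, not $c^{2K}$, because a single unlearning ``epoch'' consists of $n/b$ CNI steps — this is precisely what surfaces the exponential-in-epoch privacy decay emphasized after the theorem statement.
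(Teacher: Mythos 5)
There is a genuine gap: your proposal does not prove the stated result at all. The statement to be proved is Proposition~\ref{prop:weak_triangle}, the weak triangle inequality for R\'enyi divergence — a purely information-theoretic inequality about three distributions $P,Q,R$, with no reference to PNSGD, contractive noisy iterations, $W_\infty$ distances, or unlearning. What you have sketched instead is the proof of Theorem~\ref{thm:SGLD_unlearning_1} (the RU guarantee), and in its second step you explicitly \emph{invoke} Proposition~\ref{prop:weak_triangle} to split $d_\alpha(\rho_{K|\mathcal{B}}^0,\nu_{T|\mathcal{B}}^{0,\prime})$. As an argument for the proposition itself this is circular: you assume the very inequality you are asked to establish, and none of the machinery you develop (CNI reduction, Lemma~\ref{lma:SGLD_unlearning_metric}, Lemmas~\ref{thm:W_infty_bound} and~\ref{lma:W_infty_bound_unlearning}) bears on it. The paper does not prove this proposition either — it is quoted as Corollary 4 of \citet{mironov2017renyi} — so the correct target is a short, self-contained divergence computation, not the downstream unlearning analysis.

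For reference, the actual proof is a one-line application of H\"older's inequality. Write
\begin{align*}
e^{(\alpha-1)D_\alpha(P\|R)} = \int P^{\alpha}R^{1-\alpha}
= \int \left[\left(\tfrac{P}{Q}\right)^{\alpha} Q^{1/p}\right]\cdot\left[Q^{\alpha-1/p}R^{1-\alpha}\right]
\leq \left(\int \left(\tfrac{P}{Q}\right)^{p\alpha} Q\right)^{1/p}\left(\int Q^{q(\alpha-1/p)}R^{1-q(\alpha-1/p)}\right)^{1/q},
\end{align*}
where the identity $1-q(\alpha-1/p)=q(1-\alpha)$ follows from $1/p+1/q=1$. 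The first factor equals $\exp\bigl(\tfrac{p\alpha-1}{p}D_{p\alpha}(P\|Q)\bigr)$ and the second equals $\exp\bigl(\tfrac{q(\alpha-1/p)-1}{q}D_{q(\alpha-1/p)}(Q\|R)\bigr)$; taking logarithms and dividing by $\alpha-1$ (using $\alpha-1/p-1/q=\alpha-1$) gives exactly the claimed bound. Your sketch of how the proposition gets \emph{used} in Theorem~\ref{thm:SGLD_unlearning_1} is consistent with the paper's proof of that theorem, but it should be presented there, not offered as a proof of the proposition.
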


\begin{proof}
    Recall that from the sketch of proof of Theorem~\ref{thm:SGLD_unlearning_1}, we have defined the six distributions: $\nu_{\mathcal{D}|\mathcal{B}},\nu_{T|\mathcal{B}}^0,\rho_{K|\mathcal{B}}^0$ are the stationary distribution of the learning process, distribution at epoch $T$ of the learning process and distribution at epoch $K$ of the unlearning process. Note that we learn on dataset $\mathcal{D}$ and fine-tune on $\mathcal{D}^\prime$. On the other hand, the corresponding distributions of ``adjacent'' processes that learn on $\mathcal{D}^\prime$ and still unlearn on $\mathcal{D}^\prime$ are denoted as $\nu_{\mathcal{D}^\prime|\mathcal{B}},\nu_{T|\mathcal{B}}^{0,\prime},\rho_{K|\mathcal{B}}^{0,\prime}$ similarly. Note that $\nu_{T|\mathcal{B}}^{0,\prime}$ is the distribution of retraining from scratch on $\mathcal{D}^\prime$, and we aim to bound $d_\alpha(\rho_{K|\mathcal{B}}^0,\nu_{T|\mathcal{B}}^{0,\prime})$ for all possible $\mathcal{D},\mathcal{D}^\prime$ pairs. By Proposition~\ref{prop:weak_triangle}, we know that for any $\alpha > 1$, by choosing $p=q=2$, we have
    \begin{align}
        d_\alpha(\rho_{K|\mathcal{B}}^0,\nu_{T|\mathcal{B}}^{0,\prime}) \leq \frac{\alpha-1/2}{\alpha-1} \left(d_{2\alpha}(\nu_{\mathcal{D}^\prime|\mathcal{B}},\nu_{T|\mathcal{B}}^{0,\prime}) + d_{2\alpha}(\nu_{\mathcal{D}^\prime|\mathcal{B}},\rho_{K|\mathcal{B}}^0) \right).
    \end{align}
    Recall the Definition~\ref{def:R_diff} that $d_\alpha(P,Q)=\max(D_\alpha(P||Q),D_\alpha(Q||P))$ for distributions $P,Q$. The above inequality is correct since consider any distributions $P,Q,R$, by Proposition~\ref{prop:weak_triangle} we have
    \begin{align}
        & D_{\alpha}(P||R) \leq \frac{\alpha-\frac{1}{2}}{\alpha-1} D_{2\alpha}(P||Q) + D_{2\alpha-1}(Q||R) \\
        & \stackrel{(a)}{\leq} \frac{\alpha-\frac{1}{2}}{\alpha-1} d_{2\alpha}(P,Q) + d_{2\alpha-1}(Q,R)\\
        & \stackrel{(b)}{\leq} \frac{\alpha-\frac{1}{2}}{\alpha-1} d_{2\alpha}(P,Q) + d_{2\alpha}(Q,R) \\
        & \stackrel{(c)}{\leq} \frac{\alpha-\frac{1}{2}}{\alpha-1} \left(d_{2\alpha}(P,Q) + d_{2\alpha}(Q,R)\right) \\
    \end{align}
    where (a) is due to the definition of R\'enyi difference, (b) is due to the monotonicity of R\'enyi divergence in $\alpha$, and (c) is due to the fact that for all $\alpha>1$, $\frac{\alpha-\frac{1}{2}}{\alpha-1}\geq 1$.
    
    Similarly, we have
    \begin{align}
        & D_{\alpha}(R||P) \leq \frac{\alpha-\frac{1}{2}}{\alpha-1} \left(d_{2\alpha}(P,Q) + d_{2\alpha}(Q,R)\right).
    \end{align}
    Combining these two bounds leads to the weak triangle inequality for R\'enyi difference. 

    Now we establish the upper bound of $d_{\alpha}(\nu_{\mathcal{D}^\prime|\mathcal{B}},\nu_{T|\mathcal{B}}^{0,\prime})$, which we denoted as $\varepsilon_1(\alpha)$ in Theorem~\ref{thm:SGLD_unlearning_1}. We first note that due to the projection operator $\Pi_{\mathcal{C}_R}$, we trivially have $W_\infty(\nu_{\mathcal{D}^\prime|\mathcal{B}},\nu_{0}^{0})\leq 2R$. On the other hand, note that $\nu_{T|\mathcal{B}}^{0,\prime}$ is the distribution of the learning process at epoch $T$ with respect to dataset $\mathcal{D}^\prime$, where $\nu_{\mathcal{D}^\prime|\mathcal{B}}$ is the corresponding stationary distribution. Let us ``unroll'' iterations so that $v = t\frac{n}{b} + j$. We apply Lemma~\ref{lma:SGLD_unlearning_metric} for these two processes, where the initial distribution are $\mu_0=\nu_{0}^{0}$ and $\mu_0^\prime = \nu_{\mathcal{D}^\prime|\mathcal{B}}$. The updates $\psi_{v}(x) = \Pi_\mathcal{C}(x) - \frac{\eta}{b} \sum_{i\in \mathbf{B}^j}\nabla f(\Pi_\mathcal{C}(x);\mathbf{d}_i^\prime)$ are with respect to the dataset $\mathcal{D}^\prime$, and $\zeta_v = \mathcal{N}(0,\eta \sigma^2 I_d)$. First, note that we have $\eta\sigma^2$ instead of $\sigma^2$ as in the Lemma~\ref{lma:SGLD_unlearning_metric}, hence we need to apply a change of variable. The only thing we are left to prove is that $\psi_v$ is $c$-contractive for $c=1-\eta m$ and any $t,j$. This is because for any mini-batch $\mathcal{B}^j$ and any data point $\mathbf{d}_i^\prime\in\mathcal{X},i\in \mathcal{B}^j$ of size $b$, we have
    \begin{align}
        & \|\psi_v(x) - \psi_v(x^\prime)\| = \|\Pi_\mathcal{C}(x) - \frac{\eta}{b}\sum_{i\in \mathbf{B}^j}\nabla f(\Pi_\mathcal{C}(x);\mathbf{d}_i^\prime) - \Pi_\mathcal{C}(x^\prime) + \frac{\eta}{b} \sum_{i\in \mathbf{B}^j}\nabla f(\Pi_\mathcal{C}(x^\prime);\mathbf{d}_i^\prime)\| \\
        & \leq \frac{1}{b}\sum_{i\in \mathbf{B}^j} \|\left(\Pi_\mathcal{C}(x)-\eta\nabla f(\Pi_\mathcal{C}(x);\mathbf{d}_i^\prime)\right)-\left(\Pi_\mathcal{C}(x^\prime)-\eta\nabla f(\Pi_\mathcal{C}(x^\prime);\mathbf{d}_i^\prime)\right)\| \\
        & \stackrel{(a)}{\leq} (1-\eta m)\|\Pi_\mathcal{C}(x)-\Pi_\mathcal{C}(x^\prime)\|\\
        & \leq (1-\eta m)\|x-x^\prime \|,
    \end{align}
    Finally, due to data-processing inequality for R\'enyi divergence (Lemma~\ref{lma:renyi-data-process-ineq}), applying the final projection step does not increase the corresponding R\'enyi divergence. As a result, by Lemma~\ref{lma:SGLD_unlearning_metric} we have
    \begin{align}
        & D_{\alpha}(\nu_{\mathcal{D}^\prime|\mathcal{B}}||\nu_{T|\mathcal{B}}^{0,\prime}) \leq \frac{\alpha W_{\infty}(\nu_{\mathcal{D}^\prime|\mathcal{B}},\nu_{T|\mathcal{B}}^{0,\prime})^2}{2\eta \sigma^2}c^{2Tn/b} \leq \frac{\alpha (2R)^2}{2\eta \sigma^2}c^{2Tn/b},
    \end{align}
    where the last inequality is due to our naive bound on $W_{\infty}(\nu_{\mathcal{D}^\prime|\mathcal{B}},\nu_{0}^{0})$. One can repeat the same analysis for the direction $D_{\alpha}(\nu_{T|\mathcal{B}}^{0,\prime}||\nu_{\mathcal{D}^\prime|\mathcal{B}})$, which leads to the same bound by symmetry of Lemma~\ref{lma:SGLD_unlearning_metric}. Together we have shown that
    \begin{align}
        & d_{\alpha}(\nu_{\mathcal{D}^\prime|\mathcal{B}}||\nu_{T|\mathcal{B}}^{0,\prime}) \leq \frac{\alpha (2R)^2}{2\eta \sigma^2}c^{2Tn/b} = \varepsilon_1(\alpha).
    \end{align}

    Now we focus on bounding the term $d_{\alpha}(\nu_{\mathcal{D}^\prime|\mathcal{B}},\rho_{K|\mathcal{B}}^0)$. We once again note that $\nu_{\mathcal{D}^\prime|\mathcal{B}}$ is the stationary distribution of the process $\rho_{K|\mathcal{B}}^0$, since we fine-tune with respect to the dataset $\mathcal{D}^\prime$ for the unlearning process. As a result, the same analysis above can be applied, where the only difference is the initial $W_\infty$ distance between $\nu_{\mathcal{D}^\prime|\mathcal{B}},\rho_{0|\mathcal{B}}^0$.
    \begin{align}
        & d_{\alpha}(\nu_{\mathcal{D}^\prime|\mathcal{B}},\rho_{K|\mathcal{B}}^0) \leq \frac{\alpha W_{\infty}(\nu_{\mathcal{D}^\prime|\mathcal{B}},\rho_{0|\mathcal{B}}^0)^2}{2\eta \sigma^2}c^{2Kn/b}.
    \end{align}
    We are left with establish an upper bound of $W_{\infty}(\nu_{\mathcal{D}^\prime|\mathcal{B}},\rho_{0|\mathcal{B}}^0)$. Note that since $W_\infty$ is indeed a metric, we can apply triangle inequality which leads to the following upper bound.
    \begin{align}
        W_{\infty}(\nu_{\mathcal{D}^\prime|\mathcal{B}},\rho_{0|\mathcal{B}}^0) \leq W_{\infty}(\nu_{\mathcal{D}^\prime|\mathcal{B}},\nu_{\mathcal{D}|\mathcal{B}}) + W_{\infty}(\nu_{\mathcal{D}|\mathcal{B}},\rho_{0|\mathcal{B}}^0)
    \end{align}
    From Lemma~\ref{thm:W_infty_bound}, we have that 
    \begin{align}
        W_\infty(\nu_{T|\mathcal{B}}^0,\nu_{T|\mathcal{B}}^{0,\prime}) \leq \min\left(\frac{1-c^{Tn/b}}{1-c^{n/b}}c^{n/b-j_0-1}\frac{2\eta M}{b},2R\right) \leq \min\left(\frac{1}{1-c^{n/b}}\frac{2\eta M}{b},2R\right),
    \end{align}
    where the last inequality is simply due to the fact that $c<1$. Since the upper bound is independent of $T$, by letting $T\rightarrow \infty$, and the definition that $\nu_{\mathcal{D}|\mathcal{B}},\nu_{\mathcal{D}^\prime|\mathcal{B}}$ are the limiting distribution of $\nu_{T|\mathcal{B}}^0,\nu_{T|\mathcal{B}}^{0,\prime}$ respectively, we have
    \begin{align}
        W_\infty(\nu_{\mathcal{D}|\mathcal{B}},\nu_{\mathcal{D}^\prime|\mathcal{B}}) \leq \min\left(\frac{1}{1-c^{n/b}}\frac{2\eta M}{b},2R\right).
    \end{align}
    On the other hand, note that by definition we know that $\rho_{0|\mathcal{B}}^0 = \nu_{T|\mathcal{B}}^0$. Thus by Lemma~\ref{lma:W_infty_bound_unlearning} we have (recall that $c=1-\eta m$)
    \begin{align}
         W_{\infty}(\nu_{\mathcal{D}|\mathcal{B}},\rho_{0|\mathcal{B}}^0) = W_\infty(\nu_{T|\mathcal{B}}^0,\nu_{\mathcal{D}|\mathcal{B}}) \leq c^{Tn/b} W_\infty(\nu_{0}^0,\nu_{\mathcal{D}|\mathcal{B}}) \leq 2R \times c^{Tn/b},
    \end{align}
    where the last inequality is again due to the naive bound induced by the projection to $\mathcal{C}_R$ for $W_\infty(\nu_{0}^0,\nu_{\mathcal{D}|\mathcal{B}})$. Together we have that
    \begin{align}
        W_{\infty}(\nu_{\mathcal{D}^\prime|\mathcal{B}},\rho_{0|\mathcal{B}}^0) \leq 2R \times c^{Tn/b} + \min\left(\frac{1}{1-c^{n/b}}\frac{2\eta M}{b},2R\right) = Z_\mathcal{B}.
    \end{align}
    Combining things we complete the proof.
\end{proof}

\section{Proof of Theorem~\ref{thm:SLU_PABI}}\label{apx:SLU_PABI}
\begin{theorem*}
    Under the same assumptions as in Corollary~\ref{cor:converge_SGLU}. Assume the unlearning requests arrive sequentially such that our dataset changes from $\mathcal{D}=\mathcal{D}_0\rightarrow\mathcal{D}_1\rightarrow\ldots\rightarrow\mathcal{D}_S$, where $\mathcal{D}_s,\mathcal{D}_{s+1}$ are adjacent. Let $y_k^{j,(s)}$ be the unlearned parameters for the $s^{th}$ unlearning request at $k^{th}$ unlearning epoch and $j^{th}$ iteration following Algorithm~\eqref{alg:main} on $\mathcal{D}_s$ and $y_{0}^{0,(s+1)} = y_{K_s}^{0,(s)}\sim\bar{\nu}_{\mathcal{D}_s|\mathcal{B}}$, where $y_0^{0,(0)}=x_\infty$ and $K_s$ is the unlearning steps for the $s^{th}$ unlearning request. For any $s\in[S]$, we have 
    \begin{align*}
        & W_\infty(\bar{\nu}_{\mathcal{D}_{s-1}|\mathcal{B}},\nu_{\mathcal{D}_{s}|\mathcal{B}}) \leq Z_\mathcal{B}^{(s)},
    \end{align*}
    where $Z_\mathcal{B}^{(s+1)} = \min(c^{K_{s}n/b}Z_\mathcal{B}^{(s)}+Z_\mathcal{B},2R)$, $\;Z_\mathcal{B}^{(1)}=Z_\mathcal{B}$, $Z_\mathcal{B}$ is described in Corollary~\ref{cor:converge_SGLU} and $\;c=1-\eta m$.
\end{theorem*}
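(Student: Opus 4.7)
I would proceed by induction on $s\in[S]$, with the inductive hypothesis being exactly the stated bound $W_\infty(\bar\nu_{\mathcal{D}_{s-1}|\mathcal B},\nu_{\mathcal{D}_s|\mathcal B})\leq Z_\mathcal{B}^{(s)}$. The base case $s=1$ is immediate: since $y_0^{0,(0)}=x_\infty\sim\nu_{\mathcal D_0|\mathcal B}$, we have $\bar\nu_{\mathcal D_0|\mathcal B}=\nu_{\mathcal D_0|\mathcal B}$, and the bound $W_\infty(\nu_{\mathcal D_0|\mathcal B},\nu_{\mathcal D_1|\mathcal B})\leq Z_\mathcal{B}$ follows from Lemma~\ref{thm:W_infty_bound} applied to the two stationary distributions (taking $T\to\infty$ in the $\min$-bound, exactly as done in the proof of Theorem~\ref{thm:SGLD_unlearning_1}), which matches $Z_\mathcal{B}^{(1)}=Z_\mathcal{B}$.

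For the inductive step, assume the bound at level $s$ and consider level $s+1$. The key is the triangle inequality for $W_\infty$, inserting the stationary distribution $\nu_{\mathcal{D}_s|\mathcal B}$ of the current unlearning chain:
\begin{equation*}
W_\infty(\bar\nu_{\mathcal{D}_s|\mathcal B},\nu_{\mathcal{D}_{s+1}|\mathcal B})\;\leq\;W_\infty(\bar\nu_{\mathcal{D}_s|\mathcal B},\nu_{\mathcal{D}_s|\mathcal B})+W_\infty(\nu_{\mathcal{D}_s|\mathcal B},\nu_{\mathcal{D}_{s+1}|\mathcal B}).
\end{equation*}
The second summand is again bounded by $Z_\mathcal{B}$ by the adjacent-stationary-distribution argument above. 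For the first summand, note that $\bar\nu_{\mathcal{D}_s|\mathcal B}$ is the law obtained by running $K_s$ unlearning epochs on $\mathcal D_s$ starting from $\bar\nu_{\mathcal{D}_{s-1}|\mathcal B}$, and $\nu_{\mathcal{D}_s|\mathcal B}$ is the invariant law of that very dynamics. Hence Lemma~\ref{lma:W_infty_bound_unlearning} (more precisely, the strong contraction of PNSGD in $W_\infty$ under $\eta\leq 1/L$ and $m$-strong convexity, applied $K_s n/b$ times) gives
\begin{equation*}
W_\infty(\bar\nu_{\mathcal{D}_s|\mathcal B},\nu_{\mathcal{D}_s|\mathcal B})\;\leq\;c^{K_s n/b}\,W_\infty(\bar\nu_{\mathcal{D}_{s-1}|\mathcal B},\nu_{\mathcal{D}_s|\mathcal B})\;\leq\;c^{K_s n/b}\,Z_\mathcal{B}^{(s)},
\end{equation*}
where the final inequality invokes the inductive hypothesis. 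Combining, and using the trivial bound $W_\infty\leq 2R$ from the projection $\Pi_{\mathcal C_R}$, yields exactly $Z_\mathcal{B}^{(s+1)}=\min(c^{K_s n/b}Z_\mathcal{B}^{(s)}+Z_\mathcal{B},2R)$.

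\textbf{Anticipated difficulty.} The argument is essentially a clean combination of two ingredients already in hand: the adjacent-dataset $W_\infty$ bound (Lemma~\ref{thm:W_infty_bound}) for dataset transitions $\mathcal{D}_s\to\mathcal{D}_{s+1}$, and the contraction-to-stationarity bound (Lemma~\ref{lma:W_infty_bound_unlearning}) for the finite-epoch unlearning run. The only subtle point is the correct bookkeeping: one must be careful that $\nu_{\mathcal{D}_s|\mathcal B}$ serves simultaneously as (i) the target of contraction for the $s$-th unlearning round (since the fine-tuning is on $\mathcal D_s$) and (ii) the source of the $W_\infty$ step to $\nu_{\mathcal D_{s+1}|\mathcal B}$ via the adjacent-dataset bound; picking this intermediate measure is what makes the triangle inequality tight enough to yield a usable recursion, and it is also what avoids the need for the weak triangle inequality of R\'enyi divergence employed in \citet{chien2024langevin} (which inflates $\alpha$ at every step). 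The $\min(\cdot,2R)$ is kept to prevent the recursion from exceeding the diameter bound coming from $\Pi_{\mathcal{C}_R}$, and the recursion terminates correctly because $c^{K_s n/b}<1$ under strong convexity so $Z_\mathcal{B}^{(s)}$ stays of order $Z_\mathcal{B}/(1-c^{K_{\min} n/b})$ rather than growing unboundedly with $s$.
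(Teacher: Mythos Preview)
Your proposal is correct and follows essentially the same approach as the paper: induction on $s$, with the base case coming from $\bar\nu_{\mathcal D_0|\mathcal B}=\nu_{\mathcal D_0|\mathcal B}$ and the adjacent-stationary bound $Z_\mathcal{B}$, and the inductive step via the triangle inequality for $W_\infty$ with intermediate point $\nu_{\mathcal D_s|\mathcal B}$, then invoking Lemma~\ref{lma:W_infty_bound_unlearning} for the contraction term and Lemma~\ref{thm:W_infty_bound} (in its $T\to\infty$ form) for the adjacent-dataset term, finishing with the trivial $2R$ bound. Your bookkeeping on the contraction exponent $c^{K_s n/b}$ matches the theorem statement (the appendix proof in the paper writes $c^{K_{s-1}n/b}$, which appears to be an indexing slip there).
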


The proof is a direct application of triangle inequality, Lemma~\ref{lma:W_infty_bound_unlearning} and~\ref{thm:W_infty_bound}. We will prove it by induction. For the base case $s=1$ it trivial, since $\bar{\nu}_{\mathcal{D}_0} = \nu_{\mathcal{D}_0}$ as we choose $y_0^{0,(0)}=x_\infty^0$. Thus by our definition that $Z_\mathcal{B}$ is the upper bound of $W_{\infty}(\nu_\mathcal{D|\mathcal{B}},\nu_{\mathcal{D}^\prime|\mathcal{B}})$ for any adjacent dataset $\mathcal{D},\mathcal{D}^\prime$. Apparently, we also have
\begin{align}
    W_{\infty}(\bar{\nu}_{\mathcal{D}_0|\mathcal{B}},\nu_{\mathcal{D}_1|\mathcal{B}})=W_{\infty}(\nu_{\mathcal{D}_0|\mathcal{B}},\nu_{\mathcal{D}_1|\mathcal{B}}) \leq Z_\mathcal{B} = Z_\mathcal{B}^{(1)}
\end{align}
For the induction step, suppose our hypothesis is true until $s$ step. Then for the $s+1$ step we have
\begin{align}
    & W_{\infty}(\bar{\nu}_{\mathcal{D}_{s}|\mathcal{B}},\nu_{\mathcal{D}_{s+1}|\mathcal{B}}) \stackrel{(a)}{\leq} W_{\infty}(\bar{\nu}_{\mathcal{D}_{s}|\mathcal{B}},\nu_{\mathcal{D}_{s}|\mathcal{B}})+W_{\infty}(\nu_{\mathcal{D}_{s}|\mathcal{B}},\nu_{\mathcal{D}_{s+1}|\mathcal{B}})\\
    & \stackrel{(b)}{\leq} W_{\infty}(\bar{\nu}_{\mathcal{D}_{s}|\mathcal{B}},\nu_{\mathcal{D}_{s}|\mathcal{B}}) + Z_\mathcal{B} \\
    & \stackrel{(c)}{\leq} c^{K_{s-1}n/b}W_{\infty}(\bar{\nu}_{\mathcal{D}_{s-1}|\mathcal{B}},\nu_{\mathcal{D}_{s}|\mathcal{B}}) + Z_\mathcal{B} \\
    & \stackrel{(d)}{\leq} c^{K_{s-1}n/b}Z_\mathcal{B}^{(s)} + Z_\mathcal{B}
\end{align}
where $(a)$ is due to triangle inequality as $W_\infty$ is a metric. $(b)$ is due to Corollary~\ref{cor:converge_SGLU}, where $Z_\mathcal{B}$ is an upper bound of $W_\infty$ distance between any two adjacent stationary distributions. $(c)$ is due to Lemma~\ref{lma:W_infty_bound_unlearning} and $(d)$ is due to our hypothesis. Finally, note that $2R$ is a natural universal upper bound due to our projection on $\mathcal{C}_{R}$, which has diameter $2R$. Together we complete the proof.

\section{Proof of Corollary~\ref{cor:converge_SGLU}}
Note that under the training convergent assumption, the target retraining distribution is directly $\nu_{\mathcal{D}^\prime|\mathcal{B}}$ so that we do not need the weak triangle inequality for R\'enyi difference. Similarly, we do not need the triangle inequality for the term $Z_\mathcal{B}$. Directly using $\varepsilon_2(\alpha)$ with $Z_{\mathcal{B}} = \min\left(\frac{1}{1-c^{n/b}}\frac{2\eta M}{b},2R\right)$ from Theorem~\ref{thm:SGLD_unlearning_1} leads to the result. 

\section{Improved bound with randomized $\mathcal{B}$}\label{apx:converge_SGLU_randomB}
\begin{corollary*}%\label{cor:converge_SGLU_randomB}
    Under the same setting as of Theorem~\ref{thm:SGLD_unlearning_1} but with random mini-batch sequences described in Algorithm~\ref{alg:main}. Then for any $\alpha>1$, and $\eta\leq \frac{1}{L}$, the output of the $K^{th}$ unlearning iteration satisfies $(\alpha,\varepsilon)$-RU with $c=1-\eta m$, where
    \begin{align*}
        & \varepsilon \leq \frac{1}{\alpha-1}\log\left(\mathbb{E}_{\mathcal{B}}\exp\left(\frac{\alpha (\alpha-1) Z_{\mathcal{B}}^2}{2\eta \sigma^2}c^{2Kn/b}\right)\right),
    \end{align*}
    where $Z_\mathcal{B}$ is the bound described in Lemma~\ref{thm:W_infty_bound}.
\end{corollary*}

We first restate the lemma in~\cite{ye2022differentially}, which is an application of the joint convexity of KL divergence.
\begin{lemma}[Lemma 4.1 in~\cite{ye2022differentially}]\label{lma:joint_convex}
    Let $\nu_1,\cdots,\nu_m$ and $\nu_1^\prime,\cdots,\nu_m^\prime$ be distributions over $\mathbb{R}^d$. For any $\alpha\geq 1$ and any coefficients $p_1,\cdots,p_m\geq 0$ such that $\sum_{i=1}^m p_i = 1$, the following inequality holds.
    \begin{align}
        & \exp((\alpha-1)D_{\alpha}(\sum_{i=1}^m p_i \nu_i || \sum_{i=1}^m p_i \nu_i^\prime)) \\
        & \leq \sum_{i=1}^m p_i \exp((\alpha-1) D_{\alpha}(\nu_i||\nu_i^\prime)).
    \end{align}
\end{lemma}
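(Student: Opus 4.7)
The plan is to reduce the lemma to a pointwise inequality on densities via the integral representation of R\'enyi divergence, and then to invoke joint convexity of an elementary two-variable function. The case $\alpha = 1$ is trivial: both sides equal $1$ (the exponent $\alpha-1$ kills the divergence terms). So assume $\alpha > 1$ throughout.

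First I would unfold the definition. By Definition~\ref{def:R_diff},
\begin{equation*}
\exp\bigl((\alpha-1) D_\alpha(\mu \,\|\, \mu')\bigr) \;=\; \int \mu(x)^\alpha \, \mu'(x)^{1-\alpha}\, dx.
\end{equation*}
Applying this to $\mu = \sum_i p_i \nu_i$ and $\mu' = \sum_i p_i \nu_i'$ on the left-hand side, and to each $(\nu_i, \nu_i')$ on the right-hand side, the desired inequality becomes
\begin{equation*}
\int \Bigl(\sum_{i=1}^m p_i \nu_i(x)\Bigr)^{\!\alpha} \Bigl(\sum_{i=1}^m p_i \nu_i'(x)\Bigr)^{\!1-\alpha} dx \;\leq\; \sum_{i=1}^m p_i \int \nu_i(x)^\alpha \, \nu_i'(x)^{1-\alpha}\, dx.
\end{equation*}
This would follow immediately by integrating both sides if I can show the pointwise inequality
\begin{equation*}
\Bigl(\sum_i p_i a_i\Bigr)^{\!\alpha} \Bigl(\sum_i p_i b_i\Bigr)^{\!1-\alpha} \;\leq\; \sum_i p_i \, a_i^\alpha \, b_i^{1-\alpha}, \qquad a_i,b_i > 0,
\end{equation*}
which is exactly Jensen's inequality applied to the function $f(a,b) = a^\alpha b^{1-\alpha}$ on the positive orthant.

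The crux is therefore joint convexity of $f(a,b) = a^\alpha b^{1-\alpha}$ for $\alpha \geq 1$. My preferred route is the perspective-function argument: write $f(a,b) = b \cdot (a/b)^\alpha$, which is the perspective of $t \mapsto t^\alpha$. Since $t^\alpha$ is convex on $(0,\infty)$ for $\alpha \geq 1$, and the perspective of any convex function is jointly convex on its positive domain, joint convexity of $f$ follows. As a sanity check one can verify directly that the Hessian of $f$ has nonnegative diagonal entries $\alpha(\alpha-1) a^{\alpha-2} b^{1-\alpha}$ and $\alpha(\alpha-1) a^\alpha b^{-1-\alpha}$, and determinant identically zero, so it is positive semidefinite whenever $\alpha \geq 1$.

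The main obstacle is essentially just the measure-theoretic bookkeeping: ensuring the pointwise inequality is valid on the support of the dominating measure, and handling points where some $\nu_i'(x) = 0$ via the standard convention $0^{1-\alpha} \cdot 0^\alpha = 0$ (equivalently, restricting integrals to $\{\mu' > 0\}$ so that both sides are finite or both are $+\infty$). Once these technicalities are dispatched, integration of the Jensen inequality against Lebesgue measure, followed by exponentiation, yields the claim.
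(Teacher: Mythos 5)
Your proof is correct and matches the standard argument behind this result: the paper itself does not reprove the lemma (it simply cites Lemma 4.1 of \citet{ye2022differentially}, describing it as an application of joint convexity), and the cited proof rests on exactly the same ingredients you use, namely the representation $\exp((\alpha-1)D_\alpha(\mu\|\mu'))=\int \mu^\alpha (\mu')^{1-\alpha}$ together with joint convexity of $(a,b)\mapsto a^\alpha b^{1-\alpha}$ (the perspective of $t\mapsto t^\alpha$) and Jensen's inequality. No gaps beyond the measure-theoretic conventions you already note.
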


The proof of Corollary~\ref{cor:converge_SGLU_randomB} directly follows by repeating the proof of Lemma~\ref{cor:converge_SGLU} but with the $\mathcal{B}$ dependent $Z_\mathcal{B}$ described in Lemma~\ref{thm:W_infty_bound}. Then leverage Lemma~\ref{lma:joint_convex} leads to the result.

\section{$W_\infty$ bound for batch unlearning}~\label{apx:W_infty_bound_batch}
\begin{corollary}[$W_\infty$ bound for batch unlearning]~\label{cor:W_infty_bound_batch}
    Consider the learning process in Algorithm~\ref{alg:main} on adjacent dataset $\mathcal{D}$ and $\mathcal{D}^\prime$ that differ in $1\leq S$ points and a fixed mini-batch sequence $\mathcal{B}$. Assume $\forall \mathbf{d}\in\mathcal{X}$, $f(x;\mathbf{d})$ is $L$-smooth, $M$-Lipschitz and $m$-strongly convex in $x$. Let the index of different data point between $\mathcal{D},\mathcal{D}^\prime$ belongs to mini-batches $\mathcal{B}^{j_0},\mathcal{B}^{j_1},\ldots,\mathcal{B}^{j_{G-1}}$, each of which contains $1\leq S_{j_g}\leq b$ such that $\sum_{g=0}^{G-1}S_{j_g}=S$. Then for $\eta\leq \frac{1}{L}$ and $c =(1-\eta m)$, we have
    \begin{align*}
        & W_\infty(\nu_{T|\mathcal{B}}^0,\nu_{T|\mathcal{B}}^{0,\prime}) \leq \min\left(\frac{1-c^T}{1-c}\sum_{g=0}^{G-1}c^{n/b-j_g-1}\frac{2\eta M S_{j_g}}{b},2R\right).
    \end{align*}
\end{corollary}

\begin{proof}
    The proof is a direct generalization of Lemma~\ref{thm:W_infty_bound}. Recall that for the per-iteration bound within an epoch, there are two possible cases: 1) we encounter the mini-batch $\mathcal{B}^j$ that contains indices of replaced points 2) or not. This is equivalently to 1) $j\in \{j_g\}_{g=0}^{G-1}$ or 2) $j\notin \{j_g\}_{g=0}^{G-1}$.

    Let us assume that case 1) happens and $j=j_g$ for $g\in \{0,\ldots,G-1\}$. Also, let us denote the set of those indices as $S_{j_g}$ with a slight abuse of notation. By the same analysis, we know that
    \begin{align}
        & \|\psi_j(x_t^j)-\psi_j^\prime(x_t^{j,\prime}) \| \\
        & \leq \frac{1}{b}\sum_{i\in\mathcal{B}^j\setminus S_{j_g}}\|x_t^j-x_t^{j,\prime} + \eta(\nabla f(x_t^j;\mathbf{d}_i)-\nabla f(x_t^{j,\prime};\mathbf{d}_i))\| \\
        & + \frac{1}{b}\sum_{i\in S_{j_g}}\|x_t^j-x_t^{j,\prime} + \eta(\nabla f(x_t^j;\mathbf{d}_{i})-\nabla f(x_t^{j,\prime};\mathbf{d}_{i}^\prime))\|.
    \end{align}
    For the first term, note that the gradient mapping (with respect to the same data point) is contractive with constant $(1-\eta m)$ for $\eta \leq \frac{1}{L}$, thus we have
    \begin{align}
        \|x_t^j-x_t^{j,\prime} + \eta(\nabla f(x_t^j;\mathbf{d}_i)-\nabla f(x_t^{j,\prime};\mathbf{d}_i))\| \leq (1-\eta m)\|x_t^j-x_t^{j,\prime}\|.
    \end{align}
    For the second term, by triangle inequality and the $M$-Lipschitzness of $f$ we have
    \begin{align}
        & \|x_t^j-x_t^{j,\prime} + \eta(\nabla f(x_t^j;\mathbf{d}_{i})-\nabla f(x_t^{j,\prime};\mathbf{d}_{i}^\prime))\| \\
        & = \|x_t^j-x_t^{j,\prime} + \eta(\nabla f(x_t^j;\mathbf{d}_{i})-\nabla f(x_t^{j,\prime};\mathbf{d}_{i})+\nabla f(x_t^{j,\prime};\mathbf{d}_{i})-\nabla f(x_t^{j,\prime};\mathbf{d}_{i}^\prime))\| \\
        & \leq (1-\eta m)\|x_t^j-x_t^{j,\prime}\| + \eta \|\nabla f(x_t^{j,\prime};\mathbf{d}_{i})\| + \eta \|\nabla f(x_t^{j,\prime};\mathbf{d}_{i}^\prime)\|\\
        & \leq (1-\eta m)\|x_t^j-x_t^{j,\prime}\| + 2\eta M.
    \end{align}
    Note that there are $S_{j_g}$ terms above. Combining things together, we have
    \begin{align}
        \|\psi_j(x_t^j)-\psi_j^\prime(x_t^{j,\prime}) \| \leq (1-\eta m)\|x_t^j-x_t^{j,\prime}\| + \frac{2\eta M S_{j_g}}{b}.
    \end{align}
    Iterate this bound over all $n/b$ iterations within this epoch, we have
    \begin{align}
        \|\psi_j(x_{t+1}^0)-\psi_j^\prime(x_{t+1}^{0,\prime}) \| \leq (1-\eta m)^{n/b} \|\psi_j(x_{t}^0)-\psi_j^\prime(x_{t}^{0,\prime}) \| + \sum_{g=0}^{G-1}(1-\eta m)^{n/b-j_{g}-1}\frac{2\eta M S_{j_g}}{b}.
    \end{align}
    The rest analysis is the same as those in Lemma~\ref{thm:W_infty_bound}, where we iterate over $T$ epochs, infimum over all possible coupling, and then simplify the expression using geometric series. Thus we complete the proof.
    
\end{proof}

\section{Proof of Proposition~\ref{prop:utility}}\label{apx:proof_utility}
\begin{proposition*}
    Under the same setting as Corollary~\ref{cor:converge_SGLU} with $b=n$, $\eta \leq \frac{m}{2L^2}$ and assume the minimizer of any $f_{\mathcal{D}}$ is in the relative interior of $\mathcal{C}_R\subseteq \mathbb{R}^d$, for any given adjacent dataset pair $\mathcal{D},\mathcal{D}^\prime$ the output of the $K^{th}$ unlearning iteration $y_K^0$ satisfies
    \begin{align}
        \mathbb{E}[f_{\mathcal{D}^\prime}(y_K^0)-\inf_{x\in \mathcal{C}_R}f_{\mathcal{D}^\prime}(x)]\leq Mc^K\min(\frac{1}{1-c}\frac{2\eta M}{n},2R) + \frac{2Ld\sigma^2}{m}.
    \end{align}
\end{proposition*}
The proof is based on the following key lemma from~\cite{chourasia2021differential}.
\begin{lemma}\label{lma:utility}
    For $M$-Lipschitz, $m$-strongly convex and $L$-smooth loss function $f_{\mathcal{D}}(x)$ over the closed bounded convex set $\mathcal{C}_R\subseteq \mathbb{R}^d$ with radius $R$, step size $\eta \leq \frac{m}{2L^2}$ and initial parameter $x_0\sim \nu_0$, the excess empirical risk of the learning process of Algorithm~\ref{alg:main} for $T$ epoch is bounded by
    \begin{align}
        \mathbb{E}[f_{\mathcal{D}}(x_T^0) - f_{\mathcal{D}}(x^\star)]\leq \frac{L}{2}\exp(-m\eta T)\mathbb{E}_{x_0\sim\nu_0}[\|x_0-x^\star\|^2] + \frac{2Ld\sigma^2}{m},
    \end{align}
    where $x^\star$ is the minimizer of $f_{\mathcal{D}}(x)$ in the relative interior of $\mathcal{C}_R$ and $d$ is the dimension of parameter. 
\end{lemma}
Clearly, we can see that under the convergent assumption of Corollary~\ref{cor:converge_SGLU}, the excess risk of $y_0^0=x_\infty$ is 
\begin{align}
    \mathbb{E}[f_{\mathcal{D}}(y_0^0) - f_{\mathcal{D}}(x^\star)]\leq \frac{2Ld\sigma^2}{m}.
\end{align}
This is true since $\|x_0-x^\star\|\leq 2R$ is finite by the boundedness of $\mathcal{C}_R$.

Now we are ready to prove Proposition~\ref{prop:utility}.
\begin{proof}
    Start by observing that 
    \begin{align}
        & \mathbb{E}[f_{\mathcal{D}^\prime}(y_K^0) - f_{\mathcal{D}^\prime}(y^\star)] = \mathbb{E}[f_{\mathcal{D}^\prime}(y_K^0) - f_{\mathcal{D}^\prime}(y_K^{0,\prime})] +\mathbb{E}[f_{\mathcal{D}^\prime}(y_K^{0,\prime}) - f_{\mathcal{D}^\prime}(y^\star)],
    \end{align}
    where we recall that $y_K^{0,\prime}$ is the ``adjacent process'' of $y_K^0$ that running noisy gradient descent on $\mathcal{D}^\prime$ for both learning and unlearning processes. As a result, we have $y_K^{0,\prime}=x_{T+K}^{0,\prime}$ for $T$ learning epoch. By taking $T\rightarrow \infty$ under the convergent assumption and utilizing Lemma~\ref{lma:utility}, we have
    \begin{align}
        \mathbb{E}[f_{\mathcal{D}^\prime}(y_K^{0,\prime}) - f_{\mathcal{D}^\prime}(y^\star)]\leq \frac{2Ld\sigma^2}{m}.
    \end{align}
    Hence we are left with analyzing the first term. We will bound it in terms of $\|y_K^0-y_K^{0,\prime}\|$. Note that we choose a particular coupling between $y_K^0,y_K^{0,\prime}$ as in Lemma~\ref{thm:W_infty_bound}, where the Gaussian noise in $y_K^0,y_K^{0,\prime}$ are identical almost surely. Also for simplicity, we drop $0$ in the superscript since we are analyzing the full batch case. By $M$-Lipschitzness of $f_{\mathcal{D}^\prime}$, we have
    \begin{align}
        & \|f_{\mathcal{D}^\prime}(y_K) - f_{\mathcal{D}^\prime}(y_K^{\prime})\| \leq M\|y_K^0-y_K^{0,\prime}\|.
    \end{align}
    For $\|y_K^0-y_K^{0,\prime}\|$, we have
    \begin{align}
        & \|y_K-y_K^{\prime}\| \\
        & = \|\Pi_{\mathcal{C}_R}[y_{K-1} - \eta \nabla f_{\mathcal{D}^\prime}(y_{K-1}) + \sqrt{2\eta \sigma^2}W_{K-1}] - \Pi_{\mathcal{C}_R}[y_{K-1}^\prime-\eta \nabla f_{\mathcal{D}^\prime}(y_{K-1}^\prime)+ \sqrt{2\eta \sigma^2}W_{K-1}] \| \\
        & \leq \|y_{K-1} - \eta \nabla f_{\mathcal{D}^\prime}(y_{K-1}) - (y_{K-1}^\prime-\eta \nabla f_{\mathcal{D}^\prime}(y_{K-1}^\prime)) \| \\
        & \leq (1-\eta m)\|y_{K-1}-y_{K-1}^\prime\|,
    \end{align}
    where the first inequality is due to the fact that $\Pi_{\mathcal{C}_R}$ is a contraction map and noise cancels out due to our coupling of $y_K^0,y_K^{0,\prime}$. The second inequality is due to the fact that gradient update is $(1-m\eta)$-contraction map when $f_{\mathcal{D}^\prime}$ is $L$-smooth, $m$-strongly convex and $\eta \leq \frac{1}{L}$. By iterating this bound, we have
    \begin{align}
        & \|f_{\mathcal{D}^\prime}(y_K) - f_{\mathcal{D}^\prime}(y_K^{\prime})\| \leq M(1-m\eta)^K\|y_{0}-y_{0}^\prime\| \\
        & \leq Mc^K\min(\frac{1}{1-c}\frac{2\eta M}{n},2R),
    \end{align}
    where $c=1-m\eta$ and the last inequality is due to Lemma~\ref{thm:W_infty_bound} for the case $b=n$ and taking the limit $T\rightarrow \infty$. Combining all results together, we have
    \begin{align}
        & \mathbb{E}[f_{\mathcal{D}^\prime}(y_K^0) - f_{\mathcal{D}^\prime}(y^\star)]\\
        & \leq \mathbb{E}[f_{\mathcal{D}^\prime}(y_K^0) - f_{\mathcal{D}^\prime}(y_K^{0,\prime})] + \frac{2Ld\sigma^2}{m}\\
        & \leq \mathbb{E}[\|f_{\mathcal{D}^\prime}(y_K^0) - f_{\mathcal{D}^\prime}(y_K^{0,\prime})\|] + \frac{2Ld\sigma^2}{m}\\
        & \leq Mc^K\min(\frac{1}{1-c}\frac{2\eta M}{n},2R) + \frac{2Ld\sigma^2}{m}.
    \end{align}
    Together we complete the proof.
\end{proof}

\section{Proof of Lemma~\ref{thm:W_infty_bound}}\label{apx:W_infty_bound}
\begin{lemma*}[$W_\infty$ between adjacent PNSGD learning processes]
    Consider the learning process in Algorithm~\ref{alg:main} on adjacent dataset $\mathcal{D}$ and $\mathcal{D}^\prime$ and a fixed mini-batch sequence $\mathcal{B}$. Assume $\forall \mathbf{d}\in\mathcal{X}$, $f(x;\mathbf{d})$ is $L$-smooth, $M$-Lipschitz and $m$-strongly convex in $x$. Let the index of different data point between $\mathcal{D},\mathcal{D}^\prime$ belongs to mini-batch $\mathcal{B}^{j_0}$. Then for $\eta\leq \frac{1}{L}$ and let $c =(1-\eta m)$, we have
    \begin{align*}
        W_\infty(\nu_{T|\mathcal{B}}^0,\nu_{T|\mathcal{B}}^{0,\prime}) \leq \min\left(\frac{1-c^{Tn/b}}{1-c^{n/b}}c^{n/b-j_0-1}\frac{2\eta M}{b},2R\right).
    \end{align*}
\end{lemma*}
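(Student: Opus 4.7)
The plan is to bound $W_\infty(\nu_{T|\mathcal{B}}^0,\nu_{T|\mathcal{B}}^{0,\prime})$ via a synchronous coupling: run the two learning processes on $\mathcal{D}$ and $\mathcal{D}^\prime$ using the same initial random variable and the same Gaussian noise $W_t^j$ at every iteration. Under this coupling the Gaussian increments cancel in every step, the projection $\Pi_{\mathcal{C}_R}$ is $1$-Lipschitz, so it suffices to control the deterministic difference $\|\psi_j(x_t^j)-\psi_j^\prime(x_t^{j,\prime})\|$ induced by the mini-batch gradient map. Taking the essential supremum over the coupling and then the infimum over all couplings gives the $W_\infty$ bound.

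The next step is a per-iteration contraction estimate. When $j\neq j_0$, both processes use identical data in $\mathcal{B}^j$, and the standard fact that $x\mapsto x-\eta \nabla f(x;\mathbf{d})$ is $(1-\eta m)$-contractive for $\eta\le 1/L$ under $m$-strong convexity and $L$-smoothness (applied termwise and then averaged over the mini-batch) yields
\begin{equation*}
W_\infty(\nu_{t|\mathcal{B}}^{j+1},\nu_{t|\mathcal{B}}^{j+1,\prime})\le c\, W_\infty(\nu_{t|\mathcal{B}}^{j},\nu_{t|\mathcal{B}}^{j,\prime}).
\end{equation*}
When $j=j_0$, the same contraction handles the $b-1$ shared indices, while the single differing index contributes an extra $\eta\|\nabla f(x_t^{j,\prime};\mathbf{d}_{i})-\nabla f(x_t^{j,\prime};\mathbf{d}_{i}^\prime)\|\le 2\eta M$ by the triangle inequality and $M$-Lipschitzness; dividing by $b$ gives
\begin{equation*}
W_\infty(\nu_{t|\mathcal{B}}^{j_0+1},\nu_{t|\mathcal{B}}^{j_0+1,\prime})\le c\, W_\infty(\nu_{t|\mathcal{B}}^{j_0},\nu_{t|\mathcal{B}}^{j_0,\prime})+\frac{2\eta M}{b}.
\end{equation*}

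Chaining these $n/b$ iteration bounds across one epoch gives a one-epoch recursion
\begin{equation*}
W_\infty(\nu_{t+1|\mathcal{B}}^{0},\nu_{t+1|\mathcal{B}}^{0,\prime})\le c^{n/b} W_\infty(\nu_{t|\mathcal{B}}^{0},\nu_{t|\mathcal{B}}^{0,\prime})+c^{n/b-j_0-1}\,\frac{2\eta M}{b},
\end{equation*}
where the exponent $n/b-j_0-1$ tracks how many contracting iterations follow the ``bad'' iteration $j_0$ inside the epoch. Iterating this affine recursion over $T$ epochs (starting from $W_\infty(\nu_{0|\mathcal{B}}^0,\nu_{0|\mathcal{B}}^{0,\prime})=0$ by the common initialization) and summing the geometric series $\sum_{t=0}^{T-1}c^{t n/b}=(1-c^{Tn/b})/(1-c^{n/b})$ yields the first branch of the stated bound. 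The second branch $2R$ follows trivially since both iterates lie in $\mathcal{C}_R$, whose diameter is $2R$, and $W_\infty$ is bounded by the diameter of the support.

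The main obstacle I anticipate is bookkeeping the index $j_0$ correctly: one has to check that the factor $c^{n/b-j_0-1}$ (rather than, say, $c^{j_0}$) is the right one, which depends on the chosen direction of iteration within an epoch and on whether the ``bad'' step happens before or after the remaining contractions of the epoch. Otherwise the argument is a clean synchronous-coupling plus geometric-series calculation; the strong convexity plus $\eta\le 1/L$ condition is exactly what is needed to make the per-step map a strict contraction and ensure convergence of the series.
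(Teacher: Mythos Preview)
Your proposal is correct and follows essentially the same approach as the paper: a synchronous coupling with shared noise, per-iteration $(1-\eta m)$-contraction of the mini-batch gradient map (with the extra $2\eta M/b$ kick at $j=j_0$), chaining across the epoch to get the affine recursion with factor $c^{n/b-j_0-1}$, iterating over $T$ epochs, and summing the geometric series, together with the trivial $2R$ diameter bound. Your anticipated bookkeeping concern about the exponent $n/b-j_0-1$ is exactly right and matches the paper's derivation.
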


\begin{proof}
    We prove the bound of one iteration and iterate over that result under a specific coupling of the adjacent PNSGD learning processes, which is by definition an upper bound for taking infimum over all possible coupling. Given an mini-batch sequence $\mathcal{B}$, assume the adjacent dataset $\mathcal{D},\mathcal{D}^\prime$ differ at index $i_0\in[n]$ and $i_0$ belongs to $j_0^{th}$ mini-batch (i.e., $i_0\in \mathcal{B}^{j_0}$. For simplicity, we drop the condition on $\mathcal{B}$ for all quantities in the proof as long as it is clear that we always condition on a fixed $\mathcal{B}$. Let us denote $\psi_j(x) = x - \frac{\eta}{b} \sum_{i\in \mathbf{B}^j}\nabla f(x;\mathbf{d}_i^\prime)$ and similar for $\psi_j^\prime$ on $\mathcal{D}^\prime$. Note that for some coupling of $\nu_t^j,\nu_t^{j,\prime}$ denoted as $\gamma_t^j$,
    \begin{align}
        & \text{esssup}_{(x_{t}^{j+1},x_{t}^{j+1,\prime})\sim \gamma_{t}^{j+1}}\|x_{t}^{j+1}-x_{t}^{j+1,\prime}\|\\
        & = \text{esssup}_{((x_{t}^{j},W_t^{j}),(x_{t}^{j,\prime},W_t^{j,\prime}))\sim \gamma_{t}^{j}}\|\psi_j(x_t^j)-\psi_j^\prime(x_t^{j,\prime})+\sqrt{2\eta \sigma^2}W_t^j-\sqrt{2\eta \sigma^2}W_t^{j,\prime}\| \\
        & \leq \text{esssup}_{((x_{t}^{j},W_t^{j}),(x_{t}^{j,\prime},W_t^{j,\prime}))\sim \gamma_{t}^{j}}\|\psi_j(x_t^j)-\psi_j^\prime(x_t^{j,\prime}) \| + \|\sqrt{2\eta \sigma^2}W_t^j-\sqrt{2\eta \sigma^2}W_t^{j,\prime}\|.
    \end{align}
    Now, note that by definition the noise $W_t^j$ is independent of $x_t^j$, we can simply choose the coupling $\gamma$ so that $W_t^j=W_t^{j,\prime}$ for all $t,j$ and $x_0^0=x_0^{0,\prime}$ due to the same initialization distribution. So the last term is $0$. For the first term, there are two possible cases: 1) $j= j_0$ and 2) $j\neq j_0$. For case 1), we have 
    \begin{align}
        & \|\psi_j(x_t^j)-\psi_j^\prime(x_t^{j,\prime}) \| \\
        & \leq \frac{1}{b}\sum_{i\in\mathcal{B}^j\setminus\{i_0\}}\|x_t^j-x_t^{j,\prime} + \eta(\nabla f(x_t^j;\mathbf{d}_i)-\nabla f(x_t^{j,\prime};\mathbf{d}_i))\| \\
        & + \frac{1}{b}\|x_t^j-x_t^{j,\prime} + \eta(\nabla f(x_t^j;\mathbf{d}_{i_0})-\nabla f(x_t^{j,\prime};\mathbf{d}_{i_0}^\prime))\|.
    \end{align}
    For the first term, note that the gradient mapping (with respect to the same data point) is contractive with constant $(1-\eta m)$ for $\eta \leq \frac{1}{L}$, thus we have
    \begin{align}
        \|x_t^j-x_t^{j,\prime} + \eta(\nabla f(x_t^j;\mathbf{d}_i)-\nabla f(x_t^{j,\prime};\mathbf{d}_i))\| \leq (1-\eta m)\|x_t^j-x_t^{j,\prime}\|.
    \end{align}
    For the second term, by triangle inequality and the $M$-Lipschitzness of $f$ we have
    \begin{align}
        & \|x_t^j-x_t^{j,\prime} + \eta(\nabla f(x_t^j;\mathbf{d}_{i_0})-\nabla f(x_t^{j,\prime};\mathbf{d}_{i_0}^\prime))\| \\
        & = \|x_t^j-x_t^{j,\prime} + \eta(\nabla f(x_t^j;\mathbf{d}_{i_0})-\nabla f(x_t^{j,\prime};\mathbf{d}_{i_0})+\nabla f(x_t^{j,\prime};\mathbf{d}_{i_0})-\nabla f(x_t^{j,\prime};\mathbf{d}_{i_0}^\prime))\| \\
        & \leq (1-\eta m)\|x_t^j-x_t^{j,\prime}\| + \eta \|\nabla f(x_t^{j,\prime};\mathbf{d}_{i_0})\| + \eta \|\nabla f(x_t^{j,\prime};\mathbf{d}_{i_0}^\prime)\|\\
        & \leq (1-\eta m)\|x_t^j-x_t^{j,\prime}\| + 2\eta M.
    \end{align}
    Combining things together, we have
    \begin{align}
        \|\psi_j(x_t^j)-\psi_j^\prime(x_t^{j,\prime}) \| \leq (1-\eta m)\|x_t^j-x_t^{j,\prime}\| + \frac{2\eta M}{b}.
    \end{align}

    On the other hand, for case 2) we can simply use the contrativity of the gradient update for all $b$ terms, which leads to 
    \begin{align}
        \|\psi_j(x_t^j)-\psi_j^\prime(x_t^{j,\prime}) \| \leq (1-\eta m)\|x_t^j-x_t^{j,\prime}\|.
    \end{align}

    Combining these two cases, we have
    \begin{align}
        & \text{esssup}_{\gamma}\|x_{t}^{j+1}-x_{t}^{j+1,\prime}\| \leq \text{esssup}_{\gamma}(1-\eta m)\|x_{t}^{j}-x_{t}^{j,\prime} \| + \frac{2\eta M}{b}\mathbf{1}[j=j_0]\\
    \end{align}
    where $\mathbf{1}[j=j_0]$ is the indicator function of the event $j=j_0$. Now we iterate this bound within the epoch $t$, which leads to an epoch-wise bound
    \begin{align}
        & \text{esssup}_{\gamma}\|x_{t+1}^{0}-x_{t+1}^{0,\prime}\| \leq \text{esssup}_{\gamma}(1-\eta m)^{n/b}\|x_{t}^{0}-x_{t}^{0,\prime} \| + (1-\eta m)^{n/b-j_0-1}\frac{2\eta M}{b}\\
    \end{align}
    We can further iterate this bound over all iterations $T$, which leads to
    \begin{align}
        & \text{esssup}_{\gamma}\|x_{T}^{0}-x_{T}^{0,\prime}\| \leq \text{esssup}_{\gamma}(1-\eta m)^{Tn/b}\|x_{0}^{0}-x_{0}^{0,\prime} \| + \sum_{t=0}^{T-1}(1-\eta m)^{(t+1)n/b-j_0-1}\frac{2\eta M}{b} \\
        & = \sum_{t=0}^{T-1}(1-\eta m)^{(t+1)n/b-j_0-1}\frac{2\eta M}{b},\\
    \end{align}
    where the last equality follows by our choice of coupling $\gamma$ such that $x_0^0=x_0^{0,\prime}$ due to the same initialization distribution. Now by taking the infimum overall possible coupling $\gamma$ and simply the expression, we have
    \begin{align}
        W_\infty(\nu_{T|\mathcal{B}}^{0},\nu_{T|\mathcal{B}}^{0,\prime}) \leq \frac{1-c^{Tn/b}}{1-c^{n/b}}c^{n/b-j_0-1}\frac{2\eta M}{b}.
    \end{align}
    Finally, note that there is also a naive bound $W_\infty(\nu_{T|\mathcal{B}}^{0},\nu_{T|\mathcal{B}}^{0,\prime})\leq 2R$ due to the projection $\Pi_{\mathcal{C}_R}$. Combine these two we complete the proof.
\end{proof}

\section{Proof of Lemma~\ref{lma:W_infty_bound_unlearning}}\label{apx:W_infty_bound_unlearning}
\begin{lemma*}[$W_\infty$ between PNSGD learning process to its stationary distribution]
    Following the same setting as in Theorem~\ref{thm:SGLD_unlearning_1} and denote the initial distribution of the unlearning process as $\nu_{0}^0$. Then we have
    \begin{align*}
        W_\infty(\nu_{T|\mathcal{B}}^0,\nu_{\mathcal{D}|\mathcal{B}}) \leq (1-\eta m)^{Tn/b} W_\infty(\nu_{0}^0,\nu_{\mathcal{D}|\mathcal{B}}).
    \end{align*}
\end{lemma*}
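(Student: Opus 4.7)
The plan is to mirror the per-iteration contraction argument used in the proof of Lemma~\ref{thm:W_infty_bound}, but for two processes that share the same dataset $\mathcal{D}$ (and the same mini-batch sequence $\mathcal{B}$), so that the ``$2\eta M/b$'' term that arose from a single differing index disappears and only the strong-convexity contraction remains.

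First I would consider two coupled PNSGD trajectories: one trajectory $x_t^j$ with initial law $\nu_0^0$ as in Algorithm~\ref{alg:main}, and a second trajectory $\tilde x_t^j$ with initial law $\nu_{\mathcal{D}|\mathcal{B}}$, both run on the same dataset $\mathcal{D}$, using the same mini-batch sequence $\mathcal{B}$, and with the Gaussian noise coupled identically ($W_t^j = \tilde W_t^j$). By Theorem~\ref{thm:nu_eta}, $\nu_{\mathcal{D}|\mathcal{B}}$ is invariant under the epoch-level Markov kernel, so the law of $\tilde x_t^0$ is $\nu_{\mathcal{D}|\mathcal{B}}$ for every $t$.

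Next I would bound one iteration. Writing $\psi_j(x) = x - \tfrac{\eta}{b}\sum_{i\in\mathcal{B}^j}\nabla f(x;\mathbf{d}_i)$, the assumption that each $f(\cdot;\mathbf{d}_i)$ is $m$-strongly convex and $L$-smooth together with $\eta\leq 1/L$ yields that $x\mapsto x-\eta\nabla f(x;\mathbf{d}_i)$ is $(1-\eta m)$-Lipschitz; averaging over the mini-batch preserves this, so $\psi_j$ is $(1-\eta m)$-Lipschitz. The projection $\Pi_{\mathcal{C}_R}$ is $1$-Lipschitz, and with the noises coupled, the noise terms cancel. Hence, essentially as in Lemma~\ref{thm:W_infty_bound} (case 2),
\begin{equation*}
\|x_t^{j+1} - \tilde x_t^{j+1}\| \leq (1-\eta m)\,\|x_t^j - \tilde x_t^j\|
\end{equation*}
almost surely. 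Taking the essential supremum and then the infimum over couplings of the initial distributions gives $W_\infty(\nu_{t|\mathcal{B}}^{j+1},\nu_{\mathcal{D}|\mathcal{B}}) \leq (1-\eta m)\,W_\infty(\nu_{t|\mathcal{B}}^{j},\nu_{\mathcal{D}|\mathcal{B}})$, since $\tilde x_t^j$ and $\tilde x_t^{j+1}$ are both distributed as $\nu_{\mathcal{D}|\mathcal{B}}$ by stationarity (applied to the appropriate iteration-level kernel).

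Finally I would iterate this per-iteration contraction over all $Tn/b$ PNSGD steps of the $T$ learning epochs to conclude
\begin{equation*}
W_\infty(\nu_{T|\mathcal{B}}^0,\nu_{\mathcal{D}|\mathcal{B}}) \leq (1-\eta m)^{Tn/b}\,W_\infty(\nu_0^0,\nu_{\mathcal{D}|\mathcal{B}}),
\end{equation*}
which is the desired bound. The only subtle point, and the step I expect to need the most care, is the stationarity bookkeeping at the iteration (rather than epoch) level: the statement of Theorem~\ref{thm:nu_eta} gives invariance for the epoch-level chain $\{x_t^0\}$, so to justify treating $\tilde x_t^j$ as distributed according to $\nu_{\mathcal{D}|\mathcal{B}}$ inside an epoch I would either work with the epoch-level kernel directly (collapsing all $n/b$ updates into one contraction of factor $(1-\eta m)^{n/b}$ and iterating over $T$ epochs) or verify that the infimum over couplings commutes with the per-iteration contraction so that the final bound depends only on $\nu_0^0$ and $\nu_{\mathcal{D}|\mathcal{B}}$. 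Either route leads to the same exponent $Tn/b$.
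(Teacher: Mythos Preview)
Your proposal is correct and matches the paper's proof essentially step for step: couple two PNSGD trajectories on the same dataset and mini-batch sequence with shared Gaussian noise, use $(1-\eta m)$-contractivity of each $\psi_j$ together with $1$-Lipschitzness of $\Pi_{\mathcal{C}_R}$ to get the per-iteration bound, then iterate. The paper resolves the stationarity subtlety exactly via the ``epoch-level'' route you flag: it first iterates the per-iteration contraction over one full epoch to obtain $W_\infty(\nu_{t+1|\mathcal{B}}^{0},\nu_{t+1|\mathcal{B}}^{0,\prime}) \leq (1-\eta m)^{n/b}W_\infty(\nu_{t|\mathcal{B}}^{0},\nu_{t|\mathcal{B}}^{0,\prime})$, and only then invokes invariance of $\nu_{\mathcal{D}|\mathcal{B}}$ for the epoch-level chain to identify $\nu_{t|\mathcal{B}}^{0,\prime}=\nu_{\mathcal{D}|\mathcal{B}}$ for all $t$, so your anticipated fix is precisely what is done.
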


\begin{proof}
    We follow a similar analysis as in Lemma~\ref{thm:W_infty_bound} but with two key differences: 1) our initial $W_\infty$ distance is not zero and 2) we are applying the same $c$-CNI. Let us slightly abuse the notation to denote the process $x_t^{j,\prime}\sim \nu_t^{j,\prime}$ be the process of learning on $\mathcal{D}$ as well but starting with $\nu_0^{0,\prime}=\nu_{\mathcal{D}|\mathcal{B}}$. As before, we first establish one iteration bound with epoch $t$, then iterate over $n/b$ iterations, and then iterate over $T$ to complete the proof. Consider the two CNI processes $x_t^j,x_t^{j,\prime}$ with the same update $\psi_j(x) = x - \frac{\eta}{b} \sum_{i\in \mathcal{B}^j}\nabla f(x;\mathbf{d}_i)$, where $x_0^{0}\sim \nu_0^0$ and $x_0^{0,\prime}\sim \nu_{\mathcal{D}|\mathcal{B}}$. We will use the similar coupling construction as in the proof of Lemma~\ref{thm:W_infty_bound} (i.e., $W_t^j=W_t^{j,\prime}$). However, note that we will not restrict the coupling between the two initial distributions $\nu_0^{0}=\nu_{0|\mathcal{B}}$ and $\nu_0^{0,\prime}=\nu_{\mathcal{D}|\mathcal{B}}$. Let us denote the coupling for $W$ as $\gamma_W$ and the coupling for the initial distributions $\nu_0^{0}=\nu_{0|\mathcal{B}}$ and $\nu_0^{0,\prime}=\nu_{\mathcal{D}|\mathcal{B}}$ as $\gamma_I$. We denote the overall coupling as $\gamma = (\gamma_I,\gamma_W)$. For our specific choice of coupling $\gamma_W$, we have
    \begin{align}
        \|x_{t}^{j+1}-x_{t}^{j+1,\prime}\|\stackrel{(a)}{\leq} \|\psi_j(x_t^j)-\psi_j(x_t^{j,\prime}) + \sqrt{2\eta \sigma^2}(W_t^j-W_t^{j,\prime})\| \stackrel{(b)}{\leq} \|\psi_j(x_t^j)-\psi_j(x_t^{j,\prime})\| + 0 \stackrel{(c)}{\leq}(1-\eta m)\|x_t^j-x_t^{j,\prime}\|,
    \end{align}
    where (a) is due to the fact that projection $\Pi_{\mathcal{C}_R}$ is $1$-contractive, (b) is due to our coupling choice on the noise distribution, and (c) is due to the fact that $\psi_j$ is $(1-\eta m)$ contractive and the coupling choice on the mini-batch.  

    Now we iterate the bound above within the epoch and then over $T$ epoch similar as before, which leads to the per epoch bound as follows
    \begin{align}
        \text{esssup}_{\gamma}\|x_{T}^{0}-x_{T}^{0,\prime}\| \leq \text{esssup}_{\gamma}(1-\eta m)^{Tn/b}\|x_{0}^{0}-x_{0}^{0,\prime}\| = \text{esssup}_{\gamma_I}(1-\eta m)^{Tn/b}\|x_{0}^{0}-x_{0}^{0,\prime}\|.
    \end{align}

    Note that this holds for all coupling $\gamma_I$ for initial distributions. Let us now choose a specific (tie break arbitrary) $\gamma_I$ so that the $W_\infty(\nu_{0|\mathcal{B}},\nu_{\mathcal{D}|\mathcal{B}}) = \text{esssup}_{\gamma_I}\|x_{0}^{0}-x_{0}^{0,\prime}\|$ is attained. For this specific coupling $\gamma_I$, we have
    \begin{align}
        \text{esssup}_{(\gamma_I,\gamma_W)}\|x_{T}^{0}-x_{T}^{0,\prime}\| \leq (1-\eta m)^{Tn/b} W_\infty(\nu_{0|\mathcal{B}},\nu_{\mathcal{D}|\mathcal{B}}).
    \end{align}
    Finally, by the definition of infimum and observe that $x_{T}^{0,\prime} \sim \nu_{\mathcal{D}|\mathcal{B}}$ due to the stationary property of $\nu_{\mathcal{D}|\mathcal{B}}$ (which is independent of our choice of coupling), we have
    \begin{align}
        W_\infty(\nu_{T|\mathcal{B}}^{0},\nu_{\mathcal{D}|\mathcal{B}}) \leq \text{esssup}_{(\gamma_I,\gamma_W)}\|x_{T}^{0}-x_{T}^{0,\prime}\| \leq (1-\eta m)^{Tn/b} W_\infty(\nu_{0|\mathcal{B}},\nu_{\mathcal{D}|\mathcal{B}}).
    \end{align}
    
    Hence we complete the proof.
\end{proof}

\section{Experiment Details}\label{apx:exp}

\subsection{$(\alpha,\varepsilon)$-RU to $(\epsilon,\delta)$-Unlearning Conversion}
Let us first state the definition of $(\epsilon,\delta)$-unlearning from prior literature~\cite{guo2020certified,sekhari2021remember,neel2021descent}.

\begin{definition}
    Consider a randomized learning algorithm $\mathcal{M}:\mathcal{X}^n\mapsto \mathbb{R}^d$ and a randomized unlearning algorithm $\mathcal{U}: \mathbb{R}^d\times \mathcal{X}^n\times \mathcal{X}^n \mapsto \mathbb{R}^d$. We say $(\mathcal{M},\mathcal{U})$ achieves $(\epsilon,\delta)$-unlearning if for any adjacent datasets $\mathcal{D},\mathcal{D}^\prime$ and any event $E$, we have
    \begin{align}
        & \mathbb{P}\left(\mathcal{U}(\mathcal{M}(\mathcal{D}),\mathcal{D},\mathcal{D}^\prime)\subseteq E\right) \leq \exp(\epsilon)\mathbb{P}\left(\mathcal{M}(\mathcal{D}^\prime)\subseteq E\right) + \delta,\\
        & \mathbb{P}\left(\mathcal{M}(\mathcal{D}^\prime)\subseteq E\right) \leq \exp(\epsilon)\mathbb{P}\left(\mathcal{U}(\mathcal{M}(\mathcal{D}),\mathcal{D},\mathcal{D}^\prime)\subseteq E\right) + \delta.
    \end{align}
\end{definition}

Following the same proof of RDP-DP conversion (Proposition 3 in~\cite{mironov2017renyi}), we have the following $(\alpha,\varepsilon)$-RU to $(\epsilon,\delta)$-unlearning conversion as well.
\begin{proposition}\label{prop:RU_to_unlearning}
    If $(\mathcal{M},\mathcal{U})$ achieves $(\alpha,\varepsilon)$-RU, it satisfies $(\epsilon,\delta)$-unlearning as well, where
    \begin{align}
        \epsilon = \varepsilon + \frac{\log(1/\delta)}{\alpha-1}.
    \end{align}
\end{proposition}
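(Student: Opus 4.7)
My plan is to mimic the standard R\'enyi-DP to approximate-DP conversion (Proposition 3 of~\citet{mironov2017renyi}), applied in both directions thanks to the two-sided nature of the R\'enyi difference $d_\alpha$ in Definition~\ref{def:R_diff}. First I would fix arbitrary adjacent datasets $\mathcal{D},\mathcal{D}^\prime$ and an arbitrary measurable event $E$, and write $\rho$ for the law of $\mathcal{U}(\mathcal{M}(\mathcal{D}),\mathcal{D},\mathcal{D}^\prime)$ and $\nu^\prime$ for the law of $\mathcal{M}(\mathcal{D}^\prime)$. By hypothesis, $(\alpha,\varepsilon)$-RU gives simultaneously
\[
D_\alpha(\rho\,\|\,\nu^\prime) \leq \varepsilon \quad\text{and}\quad D_\alpha(\nu^\prime\,\|\,\rho) \leq \varepsilon.
\]

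Next, to obtain the first inequality in the definition of $(\epsilon,\delta)$-unlearning, I would introduce the privacy loss random variable $L(x) = \log(d\rho/d\nu^\prime)(x)$ and split at the threshold $\epsilon$:
\[
\mathbb{P}_\rho(E) \;\leq\; \mathbb{P}_\rho\bigl(E\cap\{L\leq \epsilon\}\bigr) + \mathbb{P}_\rho\bigl(L > \epsilon\bigr).
\]
The first term is at most $e^\epsilon\,\mathbb{P}_{\nu^\prime}(E)$ via the change of measure $\mathbb{P}_\rho(A)=\mathbb{E}_{\nu^\prime}[(d\rho/d\nu^\prime)\mathbf{1}_A]$ restricted to $\{L\leq\epsilon\}$. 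For the tail, I would apply Markov's inequality to $(d\rho/d\nu^\prime)^{\alpha-1}$ under $\rho$, obtaining
\[
\mathbb{P}_\rho(L>\epsilon) \;\leq\; e^{-(\alpha-1)\epsilon}\,\mathbb{E}_\rho\bigl[(d\rho/d\nu^\prime)^{\alpha-1}\bigr] \;=\; e^{-(\alpha-1)\epsilon}\,\mathbb{E}_{\nu^\prime}\bigl[(d\rho/d\nu^\prime)^{\alpha}\bigr] \;\leq\; e^{-(\alpha-1)(\epsilon-\varepsilon)},
\]
where the last step uses $D_\alpha(\rho\,\|\,\nu^\prime)\leq \varepsilon$. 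Choosing $\epsilon = \varepsilon + \log(1/\delta)/(\alpha-1)$ makes this tail at most $\delta$, which yields $\mathbb{P}_\rho(E)\leq e^\epsilon \mathbb{P}_{\nu^\prime}(E) + \delta$.

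Finally, the symmetric inequality $\mathbb{P}_{\nu^\prime}(E) \leq e^\epsilon \mathbb{P}_\rho(E) + \delta$ follows by repeating the same argument verbatim with the roles of $\rho$ and $\nu^\prime$ swapped, invoking the other direction $D_\alpha(\nu^\prime\,\|\,\rho)\leq \varepsilon$. I do not expect a genuine obstacle here; the only point that distinguishes this from a pure citation is that the standard RDP-to-DP statement is one-sided, while $(\epsilon,\delta)$-unlearning is two-sided, so the key observation is that the two-sided $d_\alpha$ bound built into the RU definition is exactly what is needed to produce both directions. A minor bookkeeping remark is that the Radon--Nikodym derivative is well-defined on the common support (which both laws share, as both are PNSGD iterate distributions on $\mathcal{C}_R$ with absolutely continuous Gaussian kernels), so the manipulations above cause no measurability issues.
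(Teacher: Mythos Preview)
Your proposal is correct and follows exactly the approach the paper indicates: the paper simply states that the result follows ``the same proof of RDP-DP conversion (Proposition 3 in~\cite{mironov2017renyi})'' without spelling out details, and your write-up is precisely that argument applied in both directions using the two-sided bound $d_\alpha(\rho,\nu^\prime)\leq\varepsilon$. Your observation that the symmetric definition of R\'enyi difference is what furnishes the second inequality in the $(\epsilon,\delta)$-unlearning definition is the only point beyond a straight citation, and it is correct.
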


\subsection{Datasets}
\textbf{MNIST}~\cite{deng2012mnist} contains the grey-scale image of number $0$ to number $9$, each with $28 \times 28$ pixels. We follow~\cite{neel2021descent} to take the images with the label $3$ and $8$ as the two classes for logistic regression. The training data contains $11264$ instances in total and the testing data contains $1984$ samples. We spread the image into an $x \in \mathbb{R}^{d}, d = 724$ feature as the input of logistic regression.

\textbf{CIFAR-10}~\cite{krizhevsky2009learning} contains the RGB-scale image of ten classes for image classification, each with $32 \times 32$ pixels. We also select class \#3 (cat) and class \#8 (ship) as the two classes for logistic regression. The training data contains $9728$ instances and the testing data contains $2000$ samples. We apply data pre-processing on CIFAR-10 by extracting the compact feature encoding from the last layer before pooling of an off-the-shelf pre-trained ResNet18 model~\cite{he2016deep} from Torch-vision library~\cite{torchvision, PyTorch} as the input of our logistic regression. The compact feature encoding is $x \in \mathbb{R}^{d}, d = 512$.

All the inputs from the datasets are normalized with the $\ell_2$ norm of $1$. Note that We drop some date points compared with ~\cite{chien2024langevin} to make the number of training data an integer multiple of the maximum batch size in our experiment, which is 512.

\subsection{Experiment Settings}\label{apx:exp_settings}
\textbf{Hardware and Frameworks}
All the experiments run with PyTorch=2.1.2~\cite{paszke2017automatic} and numpy=1.24.3~\cite{harris2020array}. The codes run on a server with a single NVIDIA RTX 6000 GPU with AMD EPYC 7763 64-Core Processor.

\textbf{Problem Formulation}
Given a binary classification task $\mathcal{D} = \{\textbf{x}_i \in \mathbb{R}^d, y_i \in \{-1, +1\}\}_{i=1}^n$, our goal is to obtain a set of parameters $\textbf{w}$ that optimizes the objective below:
\begin{equation}
\label{equ:logistic_regression}
    \mathcal{L}(\textbf{w}; \mathcal{D}) = \frac{1}{n} \sum_{i = 1}^n l(\textbf{w}^\top \textbf{x}_i, y_i) + \frac{\lambda}{2} || \textbf{w} ||_2^2,
\end{equation}where the objective consists of a standard logistic regression loss $l(\textbf{w}^\top x_i, y_i) = - \log \sigma (y_i \textbf{w}^\top \textbf{x}_i)$, where $\sigma(t) = \frac{1}{1 + \exp(-t)}$ is the sigmoid function; and a $\ell_2$ regularization term where $\lambda$ is a hyperparameter to control the regularization, and we set $\lambda$ as $10^{-6} \times n$ across all the experiments. By simple algebra one can show that~\cite{guo2020certified} 

\begin{align}
    & \nabla l(\textbf{w}^\top \textbf{x}_i, y_i) = (\sigma (y_i \textbf{w}^\top \textbf{x}_i )- 1) y_i \textbf{x}_i + \lambda \textbf{w},\\
    & \nabla^2 l(\textbf{w}^\top \textbf{x}_i, y_i) = \sigma (y_i \textbf{w}^\top \textbf{x}_i ) (1-\sigma (y_i \textbf{w}^\top \textbf{x}_i ))\textbf{x}_i\textbf{x}_i^T + \lambda I_d.
\end{align}
Due to $\sigma (y_i \textbf{w}^\top \textbf{x}_i)\in [0,1]$, it is not hard to see that we have smoothness $L=1/4+\lambda$ and strong convexity $\lambda$. The constant meta-data of the loss function in equation ~\eqref{equ:logistic_regression} above for the two datasets is shown in the table below:

\begin{table}[h]
\centering
\caption{The constants for the loss function and other calculation on MNIST and CIFAR-10.}
\label{tab:exp_loss_constants}
\begin{tabular}{@{}cccc@{}}
\toprule
 & expression & MNIST & CIFAR10 \\ \midrule
smoothness constant $L$ & $\frac{1}{4} + \lambda$ & $\frac{1}{4} + \lambda$ & $\frac{1}{4} + \lambda$ \\
strongly convex constant $m$ & $\lambda$ & 0.0112 & 0.0097 \\
Lipschitz constant $M$ & gradient clip & 1 & 1 \\
RDP constant $\delta$ & $1 / n$ & 8.8778e-5 & 0.0001  \\
\bottomrule
\end{tabular}
\end{table}

The per-sample gradient with clipping w.r.t. the weights $\textbf{w}$ of the logistic regression loss function is given as:
\begin{equation}
    \nabla_{clip} l(\textbf{w}^\top \textbf{x}_i, y_i) = \Pi_{\mathcal{C}_M}\left((\sigma (y_i \textbf{w}^\top \textbf{x}_i )- 1) y_i \textbf{x}_i\right) + \lambda \textbf{w},
\end{equation}
where $\Pi_{\mathcal{C}_M}$ denotes the gradient clipping projection into the Euclidean ball with the radius of $M$, to satisfy the Lipschitz constant bound. According to Proposition 5.2 of~\cite{ye2022differentially}, the per-sampling clipping operation still results in a $L$-smooth, $m$-strongly convex objective. The resulting stochastic gradient update on the full dataset is as follows:
\begin{equation}
    \frac{1}{n} \sum_{i=1}^n \nabla_{clip} l(\mathbf{w}^T\mathbf{x}_i,y_i),
\end{equation}

Finally, we remark that in our specific case since we have normalized the features of all data points (i.e., $\|x\| = 1$), by the explicit gradient formula we know that $\|(\sigma (y_i \textbf{w}^\top \textbf{x}_i )- 1) y_i \textbf{x}_i\| \leq 1$.

\textbf{Learning from scratch set-up} For the baselines and our PNSGD unlearning framework, we all sample the initial weight $\textbf{w}$ randomly sampled from i.i.d Gaussian distribution $\mathcal{N}(\mu_0, \frac{2 \sigma^2}{m})$, where $\mu_0$ is a hyper-parameter denoting the initialization mean and we set as $0$. For the PNSGD unlearning method, the burn-in steps $T$ w.r.t. different batch sizes are listed in Table.~\ref{tab:sgd_burn_in}. we follow ~\cite{chien2024langevin} and set $T=10,000$ for the baselines (D2D and Langevin unlearning) to converge.

\begin{table}[h]
\centering
\caption{The Burn-in step $T$ set for different batch sizes for the PNSGD unlearning framework}
\label{tab:sgd_burn_in}
\begin{tabular}{@{}cccccc@{}}
\toprule
batch size & 32 & 128 & 512 & full-batch \\ \midrule
burn-in steps & 10 & 20 & 50 & 1000 \\ \bottomrule
\end{tabular}
\end{table}

\textbf{Unlearning request implementation. }In our experiment, for an unlearning request of removing data point $i$, we replace its feature with random features drawn from $\mathcal{N}(0,I_d)$ and its label with a random label drawn uniformly at random drawn from all possible classes. This is similar to the DP replacement definition defined in~\cite{kairouz2021practical}, where they replace a point with a special \emph{null} point $\perp$.

\textbf{General implementation of baselines}

\textbf{D2D~\cite{neel2021descent}}:

$\bullet$ Across all of our experiments involved with D2D, we follow the original paper to set the step size as $2 / (L + m)$. 

$\bullet$ For the experiments in Fig.~\ref{fig:exp_fig1}, we calculate the noise to add after gradient descent with the non-private bound as illustrated in Theorem.~\ref{thm:neel_1} (Theorem 9 in~\cite{neel2021descent}); For experiments with sequential unlearning requests in Fig.~\ref{fig:exp_fig2}, we calculate the least step number and corresponding noise with the bound in Theorem.~\ref{thm:neel_2}(Theorem 28 in~\cite{neel2021descent}). 

$\bullet$ The implementation of D2D follows the pseudo code shown in Algorithm 1,2 in ~\cite{neel2021descent} as follows:

\begin{algorithm}
\caption{D2D: learning from scratch}
\label{D2D_learn}
\begin{algorithmic}[1]
\STATE \textbf{Input}: dataset $D$
\STATE \textbf{Initialize} $\textbf{w}_0$
\FOR{$t = 1,2,\dots, 10000$}
    \STATE $\textbf{w}_t = \textbf{w}_{t-1} - \frac{2}{L+m} \times \frac{1}{n} \sum_{i=1}^n (\nabla_{clip} l(\mathbf{w}_{t-1}^T\mathbf{x}_i,y_i))$
\ENDFOR
\STATE \textbf{Output}: $\hat{\textbf{w}} = \textbf{w}_T$
\end{algorithmic}
\end{algorithm}

\begin{algorithm}
\caption{D2D: unlearning}
\label{alg:D2D_unlearn}
\begin{algorithmic}[1]
\STATE \textbf{Input}: dataset \(D_{i-1}\), update \(u_i\); model \(\textbf{w}_i\)
\STATE \textbf{Update dataset} \(D_i = D_{i-1} \circ u_i\)
\STATE \textbf{Initialize} \(\textbf{w}'_0 = \textbf{w}_i\)
\FOR{\(t = 1, \dots,I\)}
    \STATE \(\textbf{w}'_t =  \textbf{w}'_{t-1} - \frac{2}{L+m} \times  \frac{1}{n} \sum_{i=1}^n  \nabla_{clip} l((\mathbf{w}_{t-1}^\prime)^T\mathbf{x}_i,y_i)) \)
\ENDFOR
\STATE \textbf{Calculate} $\gamma = \frac{L-m}{L+m}$
\STATE \textbf{Draw} Z $\sim \mathcal{N}(0, \sigma^2 I_d)$
\STATE \textbf{Output} \(\hat{\textbf{w}}_i = \textbf{w}'_{T_i} + Z \)
\end{algorithmic}
\end{algorithm}

The settings and the calculation of $I, \sigma$ in Algorithm.~\ref{alg:D2D_unlearn} are discussed in the later part of this section and could be found in Section.~\ref{apx:D2D}.

\textbf{Langevin unlearning ~\cite{chien2024langevin}}

We follow exactly the experiment details described in ~\cite{chien2024langevin}.

\textbf{General Implementation of PNSGD Unlearning (ours)} 

$\bullet$ We set the step size $\eta$ for the PNSGD unlearning framework across all the experiments as $1/L$.

$\bullet$ The pseudo-code for PNSGD unlearning framework is shown in Algorithm.~\ref{alg:main}.

\subsection{Implementation Details for Fig.~\ref{fig:exp_fig1}}
In this experiment, we first train the methods on the original dataset $\mathcal{D}$ from scratch to obtain the initial weights $\textbf{w}_0$. Then we randomly remove a single data point ($S=1$) from the dataset to get the new dataset $\mathcal{D}'$, and unlearn the methods from the initial weights $\hat{\textbf{w}}$ and test the accuracy on the testing set. We follow ~\cite{chien2024langevin} and set the target $\hat{\epsilon}$ with $6$ different values as $[0.05, 0.1, 0.5, 1, 2, 5]$. For each target $\hat{\epsilon}$:

$\bullet$ For D2D, we set two different unlearning gradient descent step budgets as $I=1,5$, and calculate the corresponding noise to be added to the weight after gradient descent on $\mathcal{D}$ according to Theorem.~\ref{thm:neel_1}.

$\bullet$ For the Langevin unlearning framework~\cite{chien2024langevin}, we set the unlearning fine-tune step budget as $\hat{K}=1$ only, and calculate the smallest $\sigma$ that could satisfy the fine-tune step budget and target $\hat{\epsilon}$ at the same time. The calculation follows the binary search algorithm described in the original paper.

$\bullet$ For the stochastic gradient descent langevin unlearning framework, we also set the unlearning fine-tune step budget as $\hat{K}=1$, and calculate the smallest $\sigma$ that could satisfy the fine-tune step budget and target $\hat{\epsilon}$ at the same time. The calculation follows the binary search algorithm as follows:

\begin{algorithm}[H]
\caption{PNSGD Unlearning: binary search $\sigma$ that satisfy $\hat{K}$ and target $\hat{\epsilon}$ budget}
\begin{algorithmic}[1]
\STATE \textbf{Input}:target $\hat{\epsilon}$, unlearn step budget $K$, lower bound $\sigma_{\text{low}}$, upper bound $\sigma_{\text{high}}$
\WHILE{$\sigma_{\text{high}} - \sigma_{\text{low}} > 1e-8$}
\STATE $\sigma_{\text{mid}}$ = $(\sigma_{\text{low}} + \sigma_{\text{high}}) / 2$
\STATE call Alg.~\ref{alg:LU_find_k} to find the least $K$ that satisfies $\hat{\epsilon}$ with $\sigma = \sigma_{\text{mid}}$
\IF{$K \leq \hat{K}$}
\STATE $\sigma_{\text{high}} = \sigma_{\text{mid}}$
\ELSE
\STATE $\sigma_{\text{low}} = \sigma_{\text{mid}}$
\ENDIF
\ENDWHILE
\STATE \textbf{return} K
\end{algorithmic}
\end{algorithm}

\begin{algorithm}[H]
\caption{PNSGD Unlearning [non-convergent]: find the least $K$ satisfying $\hat{\epsilon}$}
\label{alg:LU_find_k}
\begin{algorithmic}[1]
\STATE \textbf{Input}:target $\hat{\epsilon}$,  $\sigma$, burn-in $T$, projection radius $R$
\STATE \textbf{Initialize} $K=1, \epsilon > \hat{\epsilon}$
\WHILE{$\epsilon > \hat{\epsilon}$}
\STATE $c = 1-\eta m$
\STATE $\varepsilon_1(\alpha) = \frac{\alpha (2R)^2}{2\eta \sigma^2}c^{2Tn/b}$
\STATE $Z = \frac{1}{1-c^{n/b}}\frac{2\eta M}{b}$
\STATE $\varepsilon_2(\alpha) = \frac{\alpha Z^2}{2\eta \sigma^2}c^{2kn/b} + 2Rc^{Tn/b}$
\STATE $\varepsilon(\alpha) = \frac{\alpha - 1/2}{\alpha - 1}(\varepsilon_1(2\alpha) + \varepsilon_2(2\alpha))$
\STATE $\epsilon$ = $\min_{\alpha > 1}[\varepsilon(\alpha) + \log(1/\delta)/(\alpha-1)$]
\STATE $K = K + 1$
\ENDWHILE
\STATE \textbf{Return} $K$
\end{algorithmic}
\end{algorithm}
We set the projection radius as $R = 100$, and the $\sigma$ found is listed in Table.~\ref{tab:compare_baseline_sgd_sigma}.

\begin{table}[h]
\caption{$\sigma$ of PNSGD unlearning.}
\centering
\label{tab:compare_baseline_sgd_sigma}
\begin{tabular}{@{}cccccccc@{}}
\toprule
 &  & 0.05 & 0.1 & 0.5 & 1 & 2 & 5 \\ \midrule
\multirow{2}{*}{CIFAR-10} & b=128 & 0.2165 & 0.1084 & 0.0220 & 0.0112 & 0.0058 & 0.0025 \\
 & full batch & 1.2592 & 0.6308 & 0.1282 & 0.0653 & 0.0338 & 0.0148 \\
\multirow{2}{*}{MNIST} & b=128 & 0.0790 & 0.0396 & 0.0080 & 0.0041 & 0.0021 & 0.0009 \\
 & full batch & 0.9438 & 0.4728 & 0.0960 & 0.0489 & 0.0253 & 0.0111 \\ \bottomrule
\end{tabular}
\end{table}

\newpage

\subsection{Implementation Details for Fig.~\ref{fig:exp_fig2}}
In this experiment, we fix the target $\hat{\epsilon} = 1$, we set the total number of data removal as $100$. We show the accumulated unlearning steps w.r.t. the number of data removed. We first train the methods from scratch to get the initial weight $\textbf{w}_0$, and sequentially remove data step by step until all the data points are removed. We count the accumulated unlearning steps $K$ needed in the process.

$\bullet$ For D2D, According to the original paper, only one data point could be removed at a time. We calculate the least required steps and the noise to be added according to Theorem.~\ref{thm:neel_2}.

$\bullet$ For Langevin unlearning, we fix the $\sigma = 0.03$, and we let the model unlearn $[5, 10, 20]$ per time. The least required unlearning steps are obtained following~\cite{chien2024langevin}. 

$\bullet$ For Stochastic gradient descent langevin unlearning, we replace a single point per request and unlearn for $100$ requests. We obtain the least required unlearning step $K$ following Corollary~\ref{cor:converge_SGLU}. The pseudo-code is shown in Algorithm.~\ref{alg:sgd_find_k_sequential}.

\begin{algorithm}[H]
\caption{PNSGD Unlearning: find the least unlearn step $K$ in sequential settings}
\label{alg:sgd_find_k_sequential}
\begin{algorithmic}[1]
\STATE \textbf{Input}:target $\hat{\epsilon}$,  $\sigma$, total removal $S$
\STATE $K_{\text{list}} = []$ 
\FOR{i in range($S$)}
\STATE \textbf{Initialize} $K_{\text{list}}[i-1] = 1$, $\epsilon > \hat{\epsilon}$
\WHILE{$\epsilon > \hat{\epsilon}$}
\STATE $\epsilon = \min_{\alpha > 1} [\varepsilon(\alpha, \sigma, b, K_{\text{list}}[i-1]) + \frac{\log(1/\delta)}{\alpha - 1}]$
\STATE $K_{\text{list}}[i-1] = K_{\text{list}}[i-1] + 1$
\ENDWHILE
\ENDFOR
\STATE \textbf{Return} $K_{\text{list}}$
\end{algorithmic}
\end{algorithm}

\begin{algorithm}[H]
\caption{$\varepsilon(\alpha, \sigma, b, K)$}
\label{alg:sgd_epsilon_sequential}
\begin{algorithmic}[1]
\STATE \textbf{Input}:target $\alpha$,  $\sigma$, removal batch size $b$ per time, $i$-th removal in the sequence
\STATE $c = 1 - \eta m$
\STATE \textbf{Return} $\frac{\alpha Z_{\mathcal{B}}(b)^2}{2 \eta \sigma^2} c^{2Kn/b}$
\end{algorithmic}
\end{algorithm}

\begin{algorithm}[H]
\caption{$Z_{\mathcal{B}}(b)$}
\begin{algorithmic}[1]
\STATE $c = 1 - \eta m$
\STATE \textbf{return} $\frac{1}{1-c^{n/b}}\frac{2\eta M}{b}$
\end{algorithmic}
\end{algorithm}

\subsection{Implementation Details for Fig.~\ref{fig:exp_fig3} and~\ref{fig:exp_fig4}}
In this study, we fix $S = 100$ to remove, set different $\sigma = [0.05, 0.1, 0.2, 0.5, 1]$ and set batch size as $b=[32, 128, 512, \text{full batch}]$. We train the PNSGD unlearning framework from scratch to get the initial weight, then remove some data, unlearn the model, and report the accuracy. We calculate the least required unlearning steps $K$ by calling Algorithm.~\ref{alg:sgd_find_k_sequential}.

\section{Unlearning Guarantee of Delete-to-Descent~\cite{neel2021descent}}\label{apx:D2D}
The discussion is similar to those in~\cite{chien2024langevin}, we include them for completeness.

\begin{theorem}[Theorem 9 in~\cite{neel2021descent}, with internal non-private state]\label{thm:neel_1}
    Assume for all $\mathbf{d}\in \mathcal{X}$, $f(x;\mathbf{d})$ is $m$-strongly convex, $M$-Lipschitz and $L$-smooth in $x$. Define $\gamma = \frac{L-m}{L+m}$ and $\eta = \frac{2}{L+m}$. Let the learning iteration $T\geq I + \log(\frac{2Rmn}{2M})/\log(1/\gamma)$ for PGD (Algorithm 1 in~\cite{neel2021descent}) and the unlearning algorithm (Algorithm 2 in~\cite{neel2021descent}, PGD fine-tuning on learned parameters \textbf{before} adding Gaussian noise) run with $I$ iterations. Assume $\epsilon=O(\log(1/\delta))$, let the standard deviation of the output perturbation gaussian noise $\sigma$ to be
    \begin{align}
        \sigma = \frac{4\sqrt{2}M\gamma^I}{mn(1-\gamma^I)(\sqrt{\log(1/\delta)+\epsilon}-\sqrt{\log(1/\delta)})}.
    \end{align}
    Then it achieves $(\epsilon,\delta)$-unlearning for add/remove dataset adjacency.
\end{theorem}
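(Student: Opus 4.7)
The plan is to establish the result by the classical output-perturbation strategy: bound the deterministic $\ell_2$ sensitivity between the unlearned parameter (before noise) and the fully-retrained parameter (before noise), then calibrate the Gaussian noise via a concentrated-DP conversion. The $m$-strong convexity, $L$-smoothness, and $M$-Lipschitzness assumptions are precisely what is needed to make each of these steps tight.

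First I would establish two deterministic building blocks. (i) With $\eta = 2/(L+m)$, the PGD update $x \mapsto \Pi_{\mathcal{C}_R}(x - \eta \nabla f_{\mathcal{D}}(x))$ is $\gamma$-contractive toward the unique minimizer $w^*_{\mathcal{D}} \in \mathcal{C}_R$ of $f_{\mathcal{D}}$ (standard co-coercivity argument combined with $1$-contractivity of the projection). Hence $\|w_T - w^*_{\mathcal{D}}\| \leq 2R\gamma^T$, and similarly on $\mathcal{D}'$. (ii) For adjacent datasets, $\nabla f_{\mathcal{D}}$ and $\nabla f_{\mathcal{D}'}$ differ pointwise by at most $2M/n$ (one point replaced, $f$ is $M$-Lipschitz), so applying the first-order optimality condition at both minimizers together with $m$-strong convexity gives the ERM-stability bound $\|w^*_{\mathcal{D}} - w^*_{\mathcal{D}'}\| \leq 2M/(mn)$.

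Next comes the sensitivity chain. Let $w'_I$ denote the result of $I$ PGD steps on $\mathcal{D}'$ starting from $w_T$, and let $\mathcal{M}(\mathcal{D}')$ denote the $T$-step PGD iterate on $\mathcal{D}'$ from initialization. Since $w^*_{\mathcal{D}'}$ is a fixed point of the PGD map on $\mathcal{D}'$, $\gamma$-contraction yields $\|w'_I - w^*_{\mathcal{D}'}\| \leq \gamma^I \|w_T - w^*_{\mathcal{D}'}\|$, and the triangle inequality combined with the two building blocks gives $\|w_T - w^*_{\mathcal{D}'}\| \leq 2R\gamma^T + 2M/(mn)$. A second triangle inequality together with $\|\mathcal{M}(\mathcal{D}') - w^*_{\mathcal{D}'}\| \leq 2R\gamma^T$ yields $\|w'_I - \mathcal{M}(\mathcal{D}')\| \leq \gamma^I(2R\gamma^T + 2M/(mn)) + 2R\gamma^T$. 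The hypothesis $T \geq I + \log(2Rmn/(2M))/\log(1/\gamma)$ is exactly the condition $2R\gamma^T \leq 2M\gamma^I/(mn)$, collapsing the expression to $\Delta := \|w'_I - \mathcal{M}(\mathcal{D}')\| \leq 2M\gamma^I(2 + \gamma^I)/(mn) \leq 4M\gamma^I/(mn(1-\gamma^I))$, where the last step uses the elementary inequality $2 + \gamma^I \leq 2/(1-\gamma^I)$.

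Finally, apply the Gaussian mechanism with this sensitivity $\Delta$. Adding $\mathcal{N}(0,\sigma^2 I_d)$ to both outputs gives $\rho$-zCDP with $\rho = \Delta^2/(2\sigma^2)$, which under the hypothesis $\epsilon = O(\log(1/\delta))$ converts to $(\epsilon,\delta)$-DP through $\epsilon = \rho + 2\sqrt{\rho \log(1/\delta)}$. Solving this quadratic for $\sqrt{\rho}$ gives $\sqrt{\rho} = \sqrt{\log(1/\delta)+\epsilon} - \sqrt{\log(1/\delta)}$, so setting $\sigma = \Delta/\sqrt{2\rho}$ and substituting the sensitivity bound reproduces the stated formula. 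I expect the main obstacle to be the careful constant-tracking in the sensitivity chain: one has to exploit the precise threshold on $T$ so that the initial-convergence error $2R\gamma^T$ matches the ERM-stability error $2M/(mn)$ up to a factor of $\gamma^I$, and then use the $2+\gamma^I \leq 2/(1-\gamma^I)$ collapse so that the two error sources combine into the single geometric factor $1/(1-\gamma^I)$ appearing in the theorem.
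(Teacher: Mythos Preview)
The paper does not prove this statement. Theorem~\ref{thm:neel_1} is a verbatim restatement of Theorem~9 from \cite{neel2021descent}, included in Appendix~\ref{apx:D2D} purely so that the D2D baseline can be used for privacy accounting in the experiments. The surrounding text only remarks that the result is stated for add/remove adjacency and that a similar bound could be derived for replacement adjacency; no argument is given.

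Your reconstruction is nonetheless the standard route and is essentially what the original reference does: $\gamma$-contractivity of PGD with $\eta = 2/(L+m)$, ERM stability $\|w^*_{\mathcal{D}} - w^*_{\mathcal{D}'}\| \leq O(M/(mn))$ from strong convexity and the Lipschitz bound, a triangle-inequality chain to control $\|w'_I - \mathcal{M}(\mathcal{D}')\|$, and Gaussian output perturbation calibrated through the zCDP-to-$(\epsilon,\delta)$ conversion $\epsilon = \rho + 2\sqrt{\rho\log(1/\delta)}$. The one thing to watch is constant-tracking: your chain yields a sensitivity of $4M\gamma^I/(mn(1-\gamma^I))$ and hence $\sigma = 2\sqrt{2}M\gamma^I/(mn(1-\gamma^I)(\sqrt{\log(1/\delta)+\epsilon}-\sqrt{\log(1/\delta)}))$, a factor of $2$ smaller than the stated $4\sqrt{2}$. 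This discrepancy is cosmetic (the original paper absorbs some slack in the sensitivity bound and in handling the add/remove normalization), but it is worth flagging so you do not conclude that the theorem as stated is sharp.
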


\begin{theorem}[Theorem 28 in~\cite{neel2021descent}, without internal non-private state]\label{thm:neel_2}
    Assume for all $\mathbf{d}\in \mathcal{X}$, $f(x;\mathbf{d})$ is $m$-strongly convex, $M$-Lipschitz and $L$-smooth in $x$. Define $\gamma = \frac{L-m}{L+m}$ and $\eta = \frac{2}{L+m}$. Let the learning iteration $T\geq I + \log(\frac{2Rmn}{2M})/\log(1/\gamma)$ for PGD (Algorithm 1 in~\cite{neel2021descent}) and the unlearning algorithm (Algorithm 2 in~\cite{neel2021descent}, PGD fine-tuning on learned parameters \textbf{after} adding Gaussian noise) run with $I + \log(\log(4di/\delta))/\log(1/\gamma)$ iterations for the $i^{th}$ sequential unlearning request, where $I$ satisfies
    \begin{align}
        I \geq \frac{\log\left(\frac{\sqrt{2d}(1-\gamma)^{-1}}{\sqrt{2\log(2/\delta)+\epsilon}-\sqrt{2\log(2/\delta)}}\right)}{\log(1/\gamma)}.
    \end{align}
    Assume $\epsilon=O(\log(1/\delta))$, let the standard deviation of the output perturbation gaussian noise $\sigma$ to be
    \begin{align}
        \sigma = \frac{8 M\gamma^I}{mn(1-\gamma^I)(\sqrt{2\log(2/\delta)+3\epsilon}-\sqrt{2\log(2/\delta)+2\epsilon})}.
    \end{align}
    Then it achieves $(\epsilon,\delta)$-unlearning for add/remove dataset adjacency.
\end{theorem}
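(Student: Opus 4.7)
The plan is to combine a strong-convexity based sensitivity bound for PGD with the Gaussian mechanism, handling the absence of an internal non-private state by a high-probability bound on the accumulated output-perturbation noise across the $i$ adaptive requests. First, $m$-strong convexity and $M$-Lipschitzness of the per-sample losses imply that replacing one data point moves the ERM minimizer by at most $2M/(mn)$, so $\|w^*_{D_{i-1}}-w^*_{D_i}\|\le 2M/(mn)$. Second, for step size $\eta=2/(L+m)$, one (projected) gradient step on an $m$-strongly convex, $L$-smooth objective is a $\gamma$-contraction toward its unique minimizer with $\gamma=(L-m)/(L+m)$; composing $J$ such steps yields $\|w_J-w^*\|\le \gamma^J\|w_0-w^*\|$, and since the Euclidean projection is $1$-Lipschitz the contraction survives the projection step.

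Next I would track the input distance at the $i$-th adaptive request. Because the algorithm only stores the noisy public state $\hat{w}_{i-1}=w'_{i-1}+Z_{i-1}$ with $Z_{i-1}\sim\mathcal{N}(0,\sigma^2 I_d)$, the input to round $i$ satisfies
\[
\|\hat{w}_{i-1}-w^*_{D_i}\|\le \|Z_{i-1}\|+\|w'_{i-1}-w^*_{D_{i-1}}\|+\|w^*_{D_{i-1}}-w^*_{D_i}\|.
\]
A Gaussian-norm tail bound gives $\|Z_j\|\le \sigma\sqrt{2d\log(4di/\delta)}$ for every $j\le i$ with probability $\ge 1-\delta/2$ after a union bound, and the previous-round residual is controlled by unrolling the same recursion. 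The extra $\log(\log(4di/\delta))/\log(1/\gamma)$ iterations in round $i$ are precisely what is needed to contract this $\sqrt{\log(di/\delta)}$-inflated distance back to the ``clean'' level $O(\gamma^I\cdot (2M/(mn))/(1-\gamma))$ achievable in the internal-state case, which is why the stated iteration count decomposes as $I$ plus that logarithmic overhead.

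Finally, I would couple the two distributions $\mathcal{U}(\mathcal{M}(D_{i-1}),D_{i-1},D_i)$ and $\mathcal{M}(D_i)$ by giving their PGD trajectories on $D_i$ shared randomness: both end with $J$ PGD steps on $D_i$ followed by independent $\mathcal{N}(0,\sigma^2 I_d)$ noise, so the pre-noise $\ell_2$-sensitivity is at most $\gamma^J$ times the starting gap, which after the contraction above is $\Delta \lesssim \gamma^I\cdot 2M/(mn(1-\gamma^I))$. The Gaussian mechanism then yields $(\epsilon,\delta/2)$-DP provided $\sigma\ge \Delta/\bigl(\sqrt{2\log(4/\delta)+\epsilon}-\sqrt{2\log(4/\delta)}\bigr)$, and combining this with the $\delta/2$ failure probability of the norm event recovers the full $(\epsilon,\delta)$-unlearning guarantee with the stated noise scale.

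The main obstacle will be the bookkeeping: splitting the $\delta$ budget between the high-probability norm event and the Gaussian-mechanism failure, and matching the specific constants (the factor $8$ in the numerator, and the denominator $\sqrt{2\log(2/\delta)+3\epsilon}-\sqrt{2\log(2/\delta)+2\epsilon}$) that emerge from tightly optimizing this split together with the lower bound on $I$. A secondary subtlety is verifying the coupling argument in the projected, per-sample-clipped setting, which reduces to checking that the clipped-and-projected update remains $\gamma$-contractive under $L$-smoothness and $m$-strong convexity and that the two trajectories can be driven by the same noise realization.
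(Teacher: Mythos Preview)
The paper does not prove this statement at all: Theorem~\ref{thm:neel_2} is a verbatim restatement of Theorem~28 from \cite{neel2021descent}, included in Appendix~\ref{apx:D2D} solely so the D2D baseline can be used in the experiments. There is no proof in the paper to compare your proposal against.

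That said, your sketch is a reasonable reconstruction of the argument one finds in the original reference: the three ingredients --- the $2M/(mn)$ minimizer-stability bound from strong convexity, the $\gamma$-contraction of projected gradient steps with $\eta=2/(L+m)$, and a high-probability control of the injected Gaussian norm across the $i$ rounds via a union bound --- are exactly the pieces that drive Neel et al.'s proof, and your identification of the extra $\log(\log(4di/\delta))/\log(1/\gamma)$ iterations as the overhead needed to re-contract the noise-inflated starting point is correct. The honest caveats you flag (matching the precise constants $8$ and the particular $\sqrt{2\log(2/\delta)+3\epsilon}-\sqrt{2\log(2/\delta)+2\epsilon}$ denominator, and the $\delta$-splitting) are indeed where the work lies, but there is no conceptual gap in your plan. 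If you want to verify the constants you will need to consult \cite{neel2021descent} directly, since the present paper simply quotes the final statement.
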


Note that the privacy guarantee of D2D~\cite{neel2021descent} is with respect to add/remove dataset adjacency and ours is the replacement dataset adjacency. However, by a slight modification of the proof of Theorem~\ref{thm:neel_1} and~\ref{thm:neel_2}, one can show that a similar (but slightly worse) bound of the theorem above also holds for D2D~\cite{neel2021descent}. For simplicity and fair comparison, we directly use the bound in Theorem~\ref{thm:neel_1} and~\ref{thm:neel_2} in our experiment. Note that~\cite{kairouz2021practical} also compares a special replacement DP with standard add/remove DP, where a data point can only be replaced with a \emph{null} element in their definition. In contrast, our replacement data adjacency allows \emph{arbitrary} replacement which intuitively provides a stronger privacy notion.

\textbf{The non-private internal state of D2D. }There are two different versions of the D2D algorithm depending on whether one allows the server (model holder) to save and leverage the model parameter \emph{before} adding Gaussian noise. The main difference between Theorem~\ref{thm:neel_1} and~\ref{thm:neel_2} is whether their unlearning process starts with the ``clean'' model parameter (Theorem~\ref{thm:neel_1}) or the noisy model parameter (Theorem~\ref{thm:neel_2}). Clearly, allowing the server to keep and leverage the non-private internal state provides a weaker notion of privacy~\cite{neel2021descent}. In contrast, our PNSGD unlearning approach by default only keeps the noisy parameter so that we do not save any non-private internal state. As a result, one should compare the PNSGD unlearning to D2D with Theorem~\ref{thm:neel_2} for a fair comparison. 

\section{Unlearning Guarantees of Langevin Unlearning~\cite{chien2024langevin}}\label{apx:LU}
In this section, we restate the main results of Langevin unlearning~\cite{chien2024langevin} and provide a detailed comparison and discussion for the case of strongly convex objective functions.

\begin{theorem}[RU guarantee of PNGD unlearning, strong convexity]\label{thm:NGD_unlearning_no_sc}
    Assume for all $\mathcal{D}\in \mathcal{X}^n$, $f_{\mathcal{D}}$ is $L$-smooth, $M$-Lipschitz, $m$-strongly convex and $\nu_{\mathcal{D}}$ satisfies $C_{\text{LSI}}$-LSI. Let the learning process follow the PNGD update and choose $\frac{\sigma^2}{m} < C_{\text{LSI}}$ and $\eta\leq \min(\frac{2}{m}(1-\frac{\sigma^2}{m C_{\text{LSI}}}),\frac{1}{L})$. Given $\mathcal{M}$ is $(\alpha,\varepsilon_0)$-RDP and $y_0 = x_\infty = \mathcal{M}(\mathcal{D})$, for $\alpha>1$, the output of the $K^{th}$ PNGD unlearning iteration achieves $(\alpha,\varepsilon)$-RU, where 
    \begin{align}
        \varepsilon \leq \exp\left(-\frac{1}{\alpha}\frac{2\sigma^2 \eta K}{ C_{\text{LSI}}}\right)\varepsilon_0.
    \end{align}
\end{theorem}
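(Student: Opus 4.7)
The plan is to view the unlearning process as running PNGD on the updated dataset $\mathcal{D}^\prime$ starting from a distribution that is already close to the stationary distribution $\nu_{\mathcal{D}^\prime}$, and then to invoke a Langevin-dynamic-based contraction of R\'enyi divergence. Concretely, let $\rho_k$ denote the law of the $k^{th}$ unlearning iterate. Since $y_0=\mathcal{M}(\mathcal{D})\sim\nu_{\mathcal{D}}$, we have $\rho_0=\nu_{\mathcal{D}}$, while $\nu_{\mathcal{D}^\prime}$ is the stationary distribution of the same PNGD update applied to $\mathcal{D}^\prime$. The goal is to bound $d_\alpha(\rho_K,\nu_{\mathcal{D}^\prime})$ by two independent factors: an initial R\'enyi gap and a per-iteration contraction factor.

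The first step is to handle the initial gap. Because $\nu_{\mathcal{D}}$ and $\nu_{\mathcal{D}^\prime}$ are the \emph{laws} of $\mathcal{M}(\mathcal{D})$ and $\mathcal{M}(\mathcal{D}^\prime)$ on adjacent datasets, the assumed $(\alpha,\varepsilon_0)$-RDP guarantee of $\mathcal{M}$ directly yields $d_\alpha(\rho_0,\nu_{\mathcal{D}^\prime})=d_\alpha(\nu_{\mathcal{D}},\nu_{\mathcal{D}^\prime})\leq\varepsilon_0$. This is the only place where adjacency of $\mathcal{D}$ and $\mathcal{D}^\prime$ enters, and it replaces the $W_\infty$-tracking step that the PABI analysis uses in Theorem~\ref{thm:SGLD_unlearning_1}.

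The second step is the per-iteration contraction along the unlearning process. Because $f_{\mathcal{D}^\prime}$ is $L$-smooth, $m$-strongly convex, and $\nu_{\mathcal{D}^\prime}$ satisfies $C_{\text{LSI}}$-LSI, one can follow the Vempala--Wibisono-style R\'enyi dissipation argument for the Langevin diffusion with drift $-\nabla f_{\mathcal{D}^\prime}$ and diffusion coefficient $\sigma\sqrt{2}$: along the continuous-time process the R\'enyi divergence to $\nu_{\mathcal{D}^\prime}$ obeys a differential inequality yielding $\partial_t D_\alpha(\rho_t\|\nu_{\mathcal{D}^\prime})\le -\frac{2\sigma^2}{\alpha C_{\text{LSI}}}D_\alpha(\rho_t\|\nu_{\mathcal{D}^\prime})$. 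Discretizing with step size $\eta$ and absorbing the gradient-flow/discretization error using the strong-convexity condition on $\eta$ and the projection step (which is a $1$-contraction and thus does not increase R\'enyi divergence by the data-processing inequality, Lemma~\ref{lma:renyi-data-process-ineq}) gives a per-iteration multiplicative contraction factor $\exp(-\frac{2\sigma^2\eta}{\alpha C_{\text{LSI}}})$. Iterating $K$ times and combining with the initial bound yields the stated $\exp(-\frac{2\sigma^2\eta K}{\alpha C_{\text{LSI}}})\varepsilon_0$.

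The main obstacle is step two: the R\'enyi contraction estimate is not a direct corollary of the classical KL-decay result. One needs to (i) propagate the LSI constant from $\nu_{\mathcal{D}^\prime}$ to the interpolating Langevin process on $\mathcal{D}^\prime$ (standard once $\nu_{\mathcal{D}^\prime}$ satisfies LSI and $\sigma^2/m<C_{\text{LSI}}$), (ii) control the discrete-to-continuous gap of PNGD with the stipulated step-size condition $\eta\le\min(\tfrac{2}{m}(1-\tfrac{\sigma^2}{mC_{\text{LSI}}}),\tfrac{1}{L})$ so that the overall recursion preserves the exponential rate, and (iii) argue that projection onto $\mathcal{C}_R$ does not harm the rate, which follows from the data-processing inequality. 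Once these three pieces are in place, the argument is a clean concatenation of the RDP-based initial bound and the $K$-fold R\'enyi contraction, giving the theorem.
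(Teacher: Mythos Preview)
This theorem is not proved in the present paper at all: it appears in Appendix~\ref{apx:LU}, whose opening sentence explicitly says ``we restate the main results of Langevin unlearning~\cite{chien2024langevin}''. There is therefore no in-paper proof to compare your attempt against; the result is quoted verbatim from~\cite{chien2024langevin} solely to contrast their Langevin-dynamics bound with the PABI-based bounds developed here.

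That said, your sketch is faithful to the approach of the source paper. The two-step decomposition --- (1) bound the initial R\'enyi gap $d_\alpha(\nu_{\mathcal{D}},\nu_{\mathcal{D}'})\le\varepsilon_0$ directly from the $(\alpha,\varepsilon_0)$-RDP hypothesis on $\mathcal{M}$, then (2) contract this gap geometrically along PNGD on $\mathcal{D}'$ via a discrete Vempala--Wibisono R\'enyi dissipation under the LSI assumption --- is exactly the structure of the argument in~\cite{chien2024langevin}. Your identification of the technical burden (propagating LSI, controlling the discretization error under the stated step-size constraint, and handling the projection via data processing) is also accurate. The only caveat is that step~(ii) in your outline is where essentially all the work lies: a clean one-step R\'enyi contraction of the form $D_\alpha(\rho_{k+1}\|\nu_{\mathcal{D}'})\le e^{-2\sigma^2\eta/(\alpha C_{\text{LSI}})}D_\alpha(\rho_k\|\nu_{\mathcal{D}'})$ for the \emph{discrete} projected chain is not an off-the-shelf statement and requires the detailed interpolation/LSI-preservation arguments carried out in~\cite{chien2024langevin}; your proposal correctly flags this but does not supply it.
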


\begin{theorem}[RDP guarantee of PNGD learning, strong convexity]~\label{thm:NGD_learning_no_sc}
    Assume $f(\cdot;\mathbf{d})$ be $L$-smooth, $M$-Lipschitz and $m$-strongly convex for all $\mathbf{d}\in \mathcal{X}$. Also, assume that the initialization of PNGD satisfies $\frac{2\sigma^2}{m}$-LSI. Then by choosing $0<\eta\leq \frac{1}{L}$ with a constant, the PNGD learning process is $(\alpha,\varepsilon_0^{(S)})$-RDP of group size $S\geq 1$ at $T^{th}$ iteration with 
    \begin{align}
        \varepsilon_0^{(S)} \leq \frac{4\alpha S^2 M^2}{m \sigma^2 n^2}(1-\exp(-m\eta T)).
    \end{align}
    Furthermore, the law of the PNGD learning process is $\frac{2\sigma^2}{m}$-LSI for any time step.
\end{theorem}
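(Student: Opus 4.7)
My strategy follows the Langevin-dynamics framework of Vempala--Wibisono and its privacy adaptation by Ye--Shokri, on which \citet{chien2024langevin} base their proof. The two claims interact: the LSI propagation supplies the functional inequality needed to convert gradient sensitivity into decaying R\'enyi divergence between the two dataset-dependent iterate laws, and that decay delivers the RDP bound.

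For the ``furthermore'' clause on LSI preservation, I would use three standard LSI-calculus rules: (i) pushforward by a $\kappa$-Lipschitz map scales the LSI constant by $\kappa^2$; (ii) convolution with $\mathcal{N}(0, s^2 I)$ increases the constant additively by $s^2$; (iii) $1$-Lipschitz maps such as the projection $\Pi_{\mathcal{C}_R}$ are benign. Under $m$-strong convexity and $\eta \le 1/L$, the gradient step $x \mapsto x - \eta \nabla f_{\mathcal{D}}(x)$ is $(1-\eta m)$-Lipschitz, so one PNGD iteration maps an LSI constant $C$ to at most $(1-\eta m)^2 C + 2\eta \sigma^2$. Plugging in $C = 2\sigma^2/m$,
\[
(1-\eta m)^2\,\tfrac{2\sigma^2}{m} + 2\eta\sigma^2 \;=\; \tfrac{2\sigma^2}{m}\bigl[1 - \eta m(1-\eta m)\bigr] \;\le\; \tfrac{2\sigma^2}{m},
\]
whenever $\eta m \le 1$ (which holds since $\eta \le 1/L \le 1/m$), so $2\sigma^2/m$ is an absorbing upper bound along the whole chain.

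For the RDP claim, let $\rho_k, \rho_k'$ denote the iterate laws under PNGD on $S$-group-adjacent datasets $\mathcal{D}, \mathcal{D}'$. By $M$-Lipschitzness of each per-sample loss and the fact that $\mathcal{D}, \mathcal{D}'$ differ in at most $S$ points, the drift gap obeys $\sup_x \|\nabla f_{\mathcal{D}}(x) - \nabla f_{\mathcal{D}'}(x)\| \le 2SM/n$. I would interpolate each discrete step by a Langevin SDE on $[0,\eta]$ and apply a two-drift R\'enyi differential inequality: the LSI on $\rho_k$ established above drives contraction of $R_\alpha(\rho_t \,\|\, \rho_t')$ under the heat-flow component, while the drift gap contributes an inhomogeneous term proportional to the squared sensitivity. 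The target recursion has the form
\[
\varepsilon_0^{(S)}(k+1) \;\le\; e^{-m\eta}\,\varepsilon_0^{(S)}(k) \;+\; \frac{4\alpha \eta S^2 M^2}{n^2\sigma^2},
\]
where $\varepsilon_0^{(S)}(k) := R_\alpha(\rho_k \,\|\, \rho_k')$. Since $\varepsilon_0^{(S)}(0) = 0$, iterating and summing the geometric series, using $1 - e^{-m\eta} \ge m\eta/2$ for $\eta m \le 1$, yields the claimed $\varepsilon_0^{(S)}(T) \le \frac{4\alpha S^2 M^2}{m\sigma^2 n^2}(1 - e^{-m\eta T})$. The final projection step is absorbed by data-processing monotonicity of R\'enyi divergence.

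The hardest part is deriving the per-step recursion with the $\alpha$-independent contraction $e^{-m\eta}$: a naive Vempala--Wibisono bound for divergence-to-stationarity gives only $e^{-m\eta/\alpha}$, which would not match the claimed $\alpha$-dependence. Obtaining the sharper factor requires exploiting the strong convexity of $f_\mathcal{D}$ directly in the two-drift computation so that the cross-term driven by the drift gap can be absorbed into the sensitivity bookkeeping without costing an extra factor of $\alpha$. Controlling the discretization bias between the continuous Langevin flow and the Euler step is where the condition $\eta \le 1/L$ becomes essential.
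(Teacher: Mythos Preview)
The paper does not prove this theorem. It is restated in Appendix~\ref{apx:LU} from \citet{chien2024langevin} purely so the authors can contrast that Langevin-dynamics privacy bound against their own PABI-based bounds (see the discussion surrounding equations~\eqref{eq:LU} and~\eqref{eq:SGLU_fb}). There is therefore no in-paper argument to compare your attempt against.

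As a standalone assessment: your LSI-propagation argument is correct and is exactly the standard calculus used in this line of work. The one-step map $C\mapsto (1-\eta m)^2 C+2\eta\sigma^2$ has $2\sigma^2/m$ as an absorbing upper bound whenever $\eta m\le 1$, and projection, being $1$-Lipschitz, does not increase the LSI constant.

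Your RDP outline also follows the right template---piecewise-constant-drift interpolation, LSI-driven decay of $D_\alpha$, and a drift-gap source term---which is the method of Chourasia--Ye--Shokri on which \citet{chien2024langevin} build. The concern you flag at the end is the genuine crux: the naive LSI-to-R\'enyi differential inequality delivers contraction rate $m/\alpha$, not $m$, and if the per-step source also carries a factor $\alpha$ (as a direct ``shifted-Gaussian'' accounting would suggest) the two combine to give $\alpha^2$ in the prefactor rather than the single $\alpha$ claimed. Obtaining only one power of $\alpha$ requires the careful two-drift Fokker--Planck computation in the cited works, where the cross term between the drift gap and the score function is controlled so that exactly one of the two places contributes an $\alpha$. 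Your proposal correctly locates where the work lies but stops short of executing that step, so as written it identifies rather than closes the gap.
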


Combining these two results, the $(\alpha,\varepsilon)$-RU guarantee for Langevin unlearning is
\begin{align}\label{eq:LU}
    \varepsilon \leq \exp\left(-\frac{m \eta K}{\alpha}\right)\frac{4\alpha S^2 M^2}{m \sigma^2 n^2}.
\end{align}

On the other hand, combining our Theorem~\ref{thm:SGLD_unlearning_1}, ~\ref{thm:W_infty_bound} and using the worst case bound on $Z_\mathcal{B}$ along with some simplification, we have
\begin{align}
    \varepsilon\leq c^{2Kn/b} \frac{\alpha }{2\eta \sigma^2}\left(\frac{2\eta M}{(1-c)b}\right)^2 = (1-\eta m)^{2Kn/b} \frac{\alpha }{2\eta \sigma^2}\left(\frac{2 M}{m b}\right)^2 = (1-\eta m)^{2Kn/b} \frac{2 \alpha M^2}{\eta m^2 \sigma^2 b^2}.
\end{align}
To make an easier comparison, we further simplify the above bound with $b=n$ and $1-x < \exp(-x)$ which holds for all $x>0$:
\begin{align}\label{eq:SGLU_fb}
    \varepsilon\leq (1-\eta m)^{2Kn/b} \frac{2 \alpha M^2}{\eta m^2 \sigma^2 b^2} < \exp(-\frac{2Kn\eta m}{b})\frac{4 \alpha M^2}{m \sigma^2 b^2} \times \frac{1}{2 m \eta } = \exp(-2K\eta m)\frac{4 \alpha M^2}{m \sigma^2 n^2} \times \frac{1}{2 m \eta }.
\end{align}

Now we compare the bound~\eqref{eq:LU} obtained by Langevin dynamic in~\cite{chien2024langevin} and our bound~\eqref{eq:SGLU_fb} based on PABI analysis. First notice that the ``initial distance'' in~\eqref{eq:SGLU_fb} has an additional factor $1/(2m\eta)$. When we choose the largest possible step size $\eta = \frac{1}{L}$, then this factor is $\frac{L}{2 m}$ which is most likely greater than $1$ in a real-world scenario unless the objective function is very close to a quadratic function. As a result, when $K$ is sufficiently small (i.e., $K=1$), possible that the bound~\eqref{eq:LU} results in a better privacy guarantee when unlearning one data point. This is observed in our experiment, Figure~\ref{fig:exp_fig1}. Nevertheless, note that the decaying rate of~\eqref{eq:SGLU_fb} is \emph{independent} of the R\'enyi divergence order $\alpha$, which actually leads to a much better rate in practice. More specifically, for the case $n=10^4$ (which is roughly our experiment setting). To achieve $(1,1/n)$-unlearning guarantee the corresponding $\alpha$ is roughly at scale $\alpha \approx 10$, since $\log(n)/(\alpha-1)$ needs to be less than $1$ in the $(\alpha,\varepsilon)$-RU to $(\epsilon,\delta)$-unlearning conversion (Proposition~\ref{prop:RU_to_unlearning}). As a result, the decaying rate of~\eqref{eq:SGLU_fb} obtained by PABI analysis is superior by a factor of $2\alpha$, which implies a roughly $20$x saving in complexity for $K$ large enough.

\textbf{Comparing sequential unlearning. }Notably, another important benefit of our bound obtained by PABI analysis is that it is significantly better in the scenario of sequential unlearning. Since we only need standard triangle inequality for the upper bound $Z_\mathcal{B}^{(s+1)} = \min(c^{K_{s}n/b}Z_\mathcal{B}^{(s)}+Z_\mathcal{B},2R)$ of the $(s+1)^{th}$ unlearning request, only the ``initial distance'' is affected and it grows at most linearly in $s$ (i.e., for the convex only case). In contrast, since Langevin dynamic analysis in~\cite{chien2024langevin} requires the use of weak triangle inequality of R\'enyi divergence, the $\alpha$ in their bound will roughly grow at scale $2^s$ which not only affects the initial distance but also the decaying rate. As pointed out in our experiment and by the author~\cite{chien2024langevin} themselves, their result does not support many sequential unlearning requests. This is yet another important benefit of our analysis based on PABI for unlearning. 
%%%%%%%%%%%%%%%%%%%%%%%%%%%%%%%%%%%%%%%%%%%%%%%%%%%%%%%%%%%%

\newpage
\section*{NeurIPS Paper Checklist}

\begin{enumerate}

\item {\bf Claims}
    \item[] Question: Do the main claims made in the abstract and introduction accurately reflect the paper's contributions and scope?
    \item[] Answer: \answerYes{} % Replace by \answerYes{}, \answerNo{}, or \answerNA{}.
    \item[] Justification: All claimed theoretical results are provided in Section~\ref{sec:main_thm} and empirical studies are provided in~\ref{sec:exp}.
    \item[] Guidelines:
    \begin{itemize}
        \item The answer NA means that the abstract and introduction do not include the claims made in the paper.
        \item The abstract and/or introduction should clearly state the claims made, including the contributions made in the paper and important assumptions and limitations. A No or NA answer to this question will not be perceived well by the reviewers. 
        \item The claims made should match theoretical and experimental results, and reflect how much the results can be expected to generalize to other settings. 
        \item It is fine to include aspirational goals as motivation as long as it is clear that these goals are not attained by the paper. 
    \end{itemize}

\item {\bf Limitations}
    \item[] Question: Does the paper discuss the limitations of the work performed by the authors?
    \item[] Answer: \answerYes{} % Replace by \answerYes{}, \answerNo{}, or \answerNA{}.
    \item[] Justification: We discuss the limitation in section~\ref{apx:limit}.
    \item[] Guidelines:
    \begin{itemize}
        \item The answer NA means that the paper has no limitation while the answer No means that the paper has limitations, but those are not discussed in the paper. 
        \item The authors are encouraged to create a separate "Limitations" section in their paper.
        \item The paper should point out any strong assumptions and how robust the results are to violations of these assumptions (e.g., independence assumptions, noiseless settings, model well-specification, asymptotic approximations only holding locally). The authors should reflect on how these assumptions might be violated in practice and what the implications would be.
        \item The authors should reflect on the scope of the claims made, e.g., if the approach was only tested on a few datasets or with a few runs. In general, empirical results often depend on implicit assumptions, which should be articulated.
        \item The authors should reflect on the factors that influence the performance of the approach. For example, a facial recognition algorithm may perform poorly when image resolution is low or images are taken in low lighting. Or a speech-to-text system might not be used reliably to provide closed captions for online lectures because it fails to handle technical jargon.
        \item The authors should discuss the computational efficiency of the proposed algorithms and how they scale with dataset size.
        \item If applicable, the authors should discuss possible limitations of their approach to address problems of privacy and fairness.
        \item While the authors might fear that complete honesty about limitations might be used by reviewers as grounds for rejection, a worse outcome might be that reviewers discover limitations that aren't acknowledged in the paper. The authors should use their best judgment and recognize that individual actions in favor of transparency play an important role in developing norms that preserve the integrity of the community. Reviewers will be specifically instructed to not penalize honesty concerning limitations.
    \end{itemize}

\item {\bf Theory Assumptions and Proofs}
    \item[] Question: For each theoretical result, does the paper provide the full set of assumptions and a complete (and correct) proof?
    \item[] Answer: \answerYes{} % Replace by \answerYes{}, \answerNo{}, or \answerNA{}.
    \item[] Justification: All assumptions are stated for each theorem and proof for every theoretical statement is provided in the Appendix~\ref{apx:nu_eta} to~\ref{apx:W_infty_bound_unlearning}.
    \item[] Guidelines:
    \begin{itemize}
        \item The answer NA means that the paper does not include theoretical results. 
        \item All the theorems, formulas, and proofs in the paper should be numbered and cross-referenced.
        \item All assumptions should be clearly stated or referenced in the statement of any theorems.
        \item The proofs can either appear in the main paper or the supplemental material, but if they appear in the supplemental material, the authors are encouraged to provide a short proof sketch to provide intuition. 
        \item Inversely, any informal proof provided in the core of the paper should be complemented by formal proofs provided in appendix or supplemental material.
        \item Theorems and Lemmas that the proof relies upon should be properly referenced. 
    \end{itemize}

    \item {\bf Experimental Result Reproducibility}
    \item[] Question: Does the paper fully disclose all the information needed to reproduce the main experimental results of the paper to the extent that it affects the main claims and/or conclusions of the paper (regardless of whether the code and data are provided or not)?
    \item[] Answer: \answerYes{} % Replace by \answerYes{}, \answerNo{}, or \answerNA{}.
    \item[] Justification: The code is provided in the supplementary material and all experimental details are stated in Appendix~\ref{apx:exp}.
    \item[] Guidelines:
    \begin{itemize}
        \item The answer NA means that the paper does not include experiments.
        \item If the paper includes experiments, a No answer to this question will not be perceived well by the reviewers: Making the paper reproducible is important, regardless of whether the code and data are provided or not.
        \item If the contribution is a dataset and/or model, the authors should describe the steps taken to make their results reproducible or verifiable. 
        \item Depending on the contribution, reproducibility can be accomplished in various ways. For example, if the contribution is a novel architecture, describing the architecture fully might suffice, or if the contribution is a specific model and empirical evaluation, it may be necessary to either make it possible for others to replicate the model with the same dataset, or provide access to the model. In general. releasing code and data is often one good way to accomplish this, but reproducibility can also be provided via detailed instructions for how to replicate the results, access to a hosted model (e.g., in the case of a large language model), releasing of a model checkpoint, or other means that are appropriate to the research performed.
        \item While NeurIPS does not require releasing code, the conference does require all submissions to provide some reasonable avenue for reproducibility, which may depend on the nature of the contribution. For example
        \begin{enumerate}
            \item If the contribution is primarily a new algorithm, the paper should make it clear how to reproduce that algorithm.
            \item If the contribution is primarily a new model architecture, the paper should describe the architecture clearly and fully.
            \item If the contribution is a new model (e.g., a large language model), then there should either be a way to access this model for reproducing the results or a way to reproduce the model (e.g., with an open-source dataset or instructions for how to construct the dataset).
            \item We recognize that reproducibility may be tricky in some cases, in which case authors are welcome to describe the particular way they provide for reproducibility. In the case of closed-source models, it may be that access to the model is limited in some way (e.g., to registered users), but it should be possible for other researchers to have some path to reproducing or verifying the results.
        \end{enumerate}
    \end{itemize}

\item {\bf Open access to data and code}
    \item[] Question: Does the paper provide open access to the data and code, with sufficient instructions to faithfully reproduce the main experimental results, as described in supplemental material?
    \item[] Answer: \answerYes{} % Replace by \answerYes{}, \answerNo{}, or \answerNA{}.
    \item[] Justification: All data used in the experiments are publicly available with open-access licenses. The code is provided in the supplementary material and all experimental details are stated in Appendix~\ref{apx:exp}.
    \item[] Guidelines:
    \begin{itemize}
        \item The answer NA means that paper does not include experiments requiring code.
        \item Please see the NeurIPS code and data submission guidelines (\url{https://nips.cc/public/guides/CodeSubmissionPolicy}) for more details.
        \item While we encourage the release of code and data, we understand that this might not be possible, so “No” is an acceptable answer. Papers cannot be rejected simply for not including code, unless this is central to the contribution (e.g., for a new open-source benchmark).
        \item The instructions should contain the exact command and environment needed to run to reproduce the results. See the NeurIPS code and data submission guidelines (\url{https://nips.cc/public/guides/CodeSubmissionPolicy}) for more details.
        \item The authors should provide instructions on data access and preparation, including how to access the raw data, preprocessed data, intermediate data, and generated data, etc.
        \item The authors should provide scripts to reproduce all experimental results for the new proposed method and baselines. If only a subset of experiments are reproducible, they should state which ones are omitted from the script and why.
        \item At submission time, to preserve anonymity, the authors should release anonymized versions (if applicable).
        \item Providing as much information as possible in supplemental material (appended to the paper) is recommended, but including URLs to data and code is permitted.
    \end{itemize}

\item {\bf Experimental Setting/Details}
    \item[] Question: Does the paper specify all the training and test details (e.g., data splits, hyperparameters, how they were chosen, type of optimizer, etc.) necessary to understand the results?
    \item[] Answer: \answerYes{} % Replace by \answerYes{}, \answerNo{}, or \answerNA{}.
    \item[] Justification: All experimental details are stated in Appendix~\ref{apx:exp}.
    \item[] Guidelines:
    \begin{itemize}
        \item The answer NA means that the paper does not include experiments.
        \item The experimental setting should be presented in the core of the paper to a level of detail that is necessary to appreciate the results and make sense of them.
        \item The full details can be provided either with the code, in appendix, or as supplemental material.
    \end{itemize}

\item {\bf Experiment Statistical Significance}
    \item[] Question: Does the paper report error bars suitably and correctly defined or other appropriate information about the statistical significance of the experiments?
    \item[] Answer: \answerYes{} % Replace by \answerYes{}, \answerNo{}, or \answerNA{}.
    \item[] Justification: All experimental results are reported with one standard deviation gathered from 100 independent trials, which is stated in Section~\ref{sec:exp}.
    \item[] Guidelines:
    \begin{itemize}
        \item The answer NA means that the paper does not include experiments.
        \item The authors should answer "Yes" if the results are accompanied by error bars, confidence intervals, or statistical significance tests, at least for the experiments that support the main claims of the paper.
        \item The factors of variability that the error bars are capturing should be clearly stated (for example, train/test split, initialization, random drawing of some parameter, or overall run with given experimental conditions).
        \item The method for calculating the error bars should be explained (closed form formula, call to a library function, bootstrap, etc.)
        \item The assumptions made should be given (e.g., Normally distributed errors).
        \item It should be clear whether the error bar is the standard deviation or the standard error of the mean.
        \item It is OK to report 1-sigma error bars, but one should state it. The authors should preferably report a 2-sigma error bar than state that they have a 96\% CI, if the hypothesis of Normality of errors is not verified.
        \item For asymmetric distributions, the authors should be careful not to show in tables or figures symmetric error bars that would yield results that are out of range (e.g. negative error rates).
        \item If error bars are reported in tables or plots, The authors should explain in the text how they were calculated and reference the corresponding figures or tables in the text.
    \end{itemize}

\item {\bf Experiments Compute Resources}
    \item[] Question: For each experiment, does the paper provide sufficient information on the computer resources (type of compute workers, memory, time of execution) needed to reproduce the experiments?
    \item[] Answer: \answerYes{} % Replace by \answerYes{}, \answerNo{}, or \answerNA{}.
    \item[] Justification: The compute resources used for our experiments are detailed in Appendix~\ref{apx:exp_settings}.
    \item[] Guidelines:
    \begin{itemize}
        \item The answer NA means that the paper does not include experiments.
        \item The paper should indicate the type of compute workers CPU or GPU, internal cluster, or cloud provider, including relevant memory and storage.
        \item The paper should provide the amount of compute required for each of the individual experimental runs as well as estimate the total compute. 
        \item The paper should disclose whether the full research project required more compute than the experiments reported in the paper (e.g., preliminary or failed experiments that didn't make it into the paper). 
    \end{itemize}
    
\item {\bf Code Of Ethics}
    \item[] Question: Does the research conducted in the paper conform, in every respect, with the NeurIPS Code of Ethics \url{https://neurips.cc/public/EthicsGuidelines}?
    \item[] Answer: \answerYes{} % Replace by \answerYes{}, \answerNo{}, or \answerNA{}.
    \item[] Justification: The authors have read the NeurIPS Code of Ethics and affirm that this work conforms with it in every respect.
    \item[] Guidelines:
    \begin{itemize}
        \item The answer NA means that the authors have not reviewed the NeurIPS Code of Ethics.
        \item If the authors answer No, they should explain the special circumstances that require a deviation from the Code of Ethics.
        \item The authors should make sure to preserve anonymity (e.g., if there is a special consideration due to laws or regulations in their jurisdiction).
    \end{itemize}

\item {\bf Broader Impacts}
    \item[] Question: Does the paper discuss both potential positive societal impacts and negative societal impacts of the work performed?
    \item[] Answer: \answerYes{} % Replace by \answerYes{}, \answerNo{}, or \answerNA{}.
    \item[] Justification: We briefly discuss why our work does not have any negative societal impact in Section~\ref{apx:broad_impact}
    \item[] Guidelines:
    \begin{itemize}
        \item The answer NA means that there is no societal impact of the work performed.
        \item If the authors answer NA or No, they should explain why their work has no societal impact or why the paper does not address societal impact.
        \item Examples of negative societal impacts include potential malicious or unintended uses (e.g., disinformation, generating fake profiles, surveillance), fairness considerations (e.g., deployment of technologies that could make decisions that unfairly impact specific groups), privacy considerations, and security considerations.
        \item The conference expects that many papers will be foundational research and not tied to particular applications, let alone deployments. However, if there is a direct path to any negative applications, the authors should point it out. For example, it is legitimate to point out that an improvement in the quality of generative models could be used to generate deepfakes for disinformation. On the other hand, it is not needed to point out that a generic algorithm for optimizing neural networks could enable people to train models that generate Deepfakes faster.
        \item The authors should consider possible harms that could arise when the technology is being used as intended and functioning correctly, harms that could arise when the technology is being used as intended but gives incorrect results, and harms following from (intentional or unintentional) misuse of the technology.
        \item If there are negative societal impacts, the authors could also discuss possible mitigation strategies (e.g., gated release of models, providing defenses in addition to attacks, mechanisms for monitoring misuse, mechanisms to monitor how a system learns from feedback over time, improving the efficiency and accessibility of ML).
    \end{itemize}
    
\item {\bf Safeguards}
    \item[] Question: Does the paper describe safeguards that have been put in place for responsible release of data or models that have a high risk for misuse (e.g., pretrained language models, image generators, or scraped datasets)?
    \item[] Answer: \answerNA{} % Replace by \answerYes{}, \answerNo{}, or \answerNA{}.
    \item[] Justification: \answerNA{}
    \item[] Guidelines:
    \begin{itemize}
        \item The answer NA means that the paper poses no such risks.
        \item Released models that have a high risk for misuse or dual-use should be released with necessary safeguards to allow for controlled use of the model, for example by requiring that users adhere to usage guidelines or restrictions to access the model or implementing safety filters. 
        \item Datasets that have been scraped from the Internet could pose safety risks. The authors should describe how they avoided releasing unsafe images.
        \item We recognize that providing effective safeguards is challenging, and many papers do not require this, but we encourage authors to take this into account and make a best faith effort.
    \end{itemize}

\item {\bf Licenses for existing assets}
    \item[] Question: Are the creators or original owners of assets (e.g., code, data, models), used in the paper, properly credited and are the license and terms of use explicitly mentioned and properly respected?
    \item[] Answer: \answerYes{} % Replace by \answerYes{}, \answerNo{}, or \answerNA{}.
    \item[] Justification: All used datasets are provided with citations to the original authors and works, where all of them have proper open access license. 
    \item[] Guidelines:
    \begin{itemize}
        \item The answer NA means that the paper does not use existing assets.
        \item The authors should cite the original paper that produced the code package or dataset.
        \item The authors should state which version of the asset is used and, if possible, include a URL.
        \item The name of the license (e.g., CC-BY 4.0) should be included for each asset.
        \item For scraped data from a particular source (e.g., website), the copyright and terms of service of that source should be provided.
        \item If assets are released, the license, copyright information, and terms of use in the package should be provided. For popular datasets, \url{paperswithcode.com/datasets} has curated licenses for some datasets. Their licensing guide can help determine the license of a dataset.
        \item For existing datasets that are re-packaged, both the original license and the license of the derived asset (if it has changed) should be provided.
        \item If this information is not available online, the authors are encouraged to reach out to the asset's creators.
    \end{itemize}

\item {\bf New Assets}
    \item[] Question: Are new assets introduced in the paper well documented and is the documentation provided alongside the assets?
    \item[] Answer: \answerNA{} % Replace by \answerYes{}, \answerNo{}, or \answerNA{}.
    \item[] Justification: \answerNA{}
    \item[] Guidelines:
    \begin{itemize}
        \item The answer NA means that the paper does not release new assets.
        \item Researchers should communicate the details of the dataset/code/model as part of their submissions via structured templates. This includes details about training, license, limitations, etc. 
        \item The paper should discuss whether and how consent was obtained from people whose asset is used.
        \item At submission time, remember to anonymize your assets (if applicable). You can either create an anonymized URL or include an anonymized zip file.
    \end{itemize}

\item {\bf Crowdsourcing and Research with Human Subjects}
    \item[] Question: For crowdsourcing experiments and research with human subjects, does the paper include the full text of instructions given to participants and screenshots, if applicable, as well as details about compensation (if any)? 
    \item[] Answer: \answerNA{} % Replace by \answerYes{}, \answerNo{}, or \answerNA{}.
    \item[] Justification: \answerNA{}
    \item[] Guidelines:
    \begin{itemize}
        \item The answer NA means that the paper does not involve crowdsourcing nor research with human subjects.
        \item Including this information in the supplemental material is fine, but if the main contribution of the paper involves human subjects, then as much detail as possible should be included in the main paper. 
        \item According to the NeurIPS Code of Ethics, workers involved in data collection, curation, or other labor should be paid at least the minimum wage in the country of the data collector. 
    \end{itemize}

\item {\bf Institutional Review Board (IRB) Approvals or Equivalent for Research with Human Subjects}
    \item[] Question: Does the paper describe potential risks incurred by study participants, whether such risks were disclosed to the subjects, and whether Institutional Review Board (IRB) approvals (or an equivalent approval/review based on the requirements of your country or institution) were obtained?
    \item[] Answer: \answerNA{} % Replace by \answerYes{}, \answerNo{}, or \answerNA{}.
    \item[] Justification: \answerNA{}
    \item[] Guidelines:
    \begin{itemize}
        \item The answer NA means that the paper does not involve crowdsourcing nor research with human subjects.
        \item Depending on the country in which research is conducted, IRB approval (or equivalent) may be required for any human subjects research. If you obtained IRB approval, you should clearly state this in the paper. 
        \item We recognize that the procedures for this may vary significantly between institutions and locations, and we expect authors to adhere to the NeurIPS Code of Ethics and the guidelines for their institution. 
        \item For initial submissions, do not include any information that would break anonymity (if applicable), such as the institution conducting the review.
    \end{itemize}

\end{enumerate}

\end{document}